\newtheorem{proposition}{Proposition}[section]
\newtheorem{lemma}{Lemma}[section]
\newtheorem{definition}{Definition}[section]
\newtheorem{remark}{Remark}
\useunder{\uline}{\ul}{}
\definecolor{revise}{RGB}{0, 0, 0}
\definecolor{R2}{RGB}{0, 0, 0}
\definecolor{R3}{RGB}{0, 0, 0}
\begin{document}
%
\title{Node-oriented Spectral Filtering for Graph Neural Networks}
%
%
%
%

\author{Shuai Zheng,
        Zhenfeng Zhu*,
        Zhizhe Liu,
        Youru Li,
        and~Yao~Zhao,~\IEEEmembership{Fellow,~IEEE}

\IEEEcompsocitemizethanks{\IEEEcompsocthanksitem S. Zheng, Z. Zhu, Z. Liu, and Y. Zhao are with the Institute of Information Science, Beijing Jiaotong University, Beijing 100044, China, and also with the Beijing Key
	Laboratory of Advanced Information Science and Network Technology,
	Beijing 100044, China \protect\\
E-mail: {zs1997, zhfzhu, zhzliu, liyouru, yzhao}@bjtu.edu.cn.}
\thanks{This work was supported in part by Science and Technology Innovation 2030 - "New Generation Artificial Intelligence" Major Project under Grant No. 2018AAA0102101, and in part by the National Natural Science Foundation of China under Grants ( No.61976018, No.U1936212, No.62120106009 ).}
\thanks{*Corresponding author: Zhenfeng Zhu.}
\thanks{Manuscript received April 19, 2005; revised August 26, 2015.}}

%
%

\markboth{Journal of \LaTeX\ Class Files,~Vol.~14, No.~8, August~2015}%
{Shell \MakeLowercase{\textit{et al.}}: Bare Demo of IEEEtran.cls for Computer Society Journals}
%



\IEEEtitleabstractindextext{%
\begin{abstract}
Graph neural networks (GNNs) have shown remarkable performance on homophilic graph data while being far less impressive when handling non-homophilic graph data due to the inherent low-pass filtering property of GNNs. In general, since real-world graphs are often complex mixtures of diverse subgraph patterns, learning a universal spectral filter on the graph from the global perspective as in most current works may still suffer from great difficulty in adapting to the variation of local patterns. On the basis of the theoretical analysis of local patterns, we rethink the existing spectral filtering methods and propose the \textbf{\underline{N}}ode-oriented spectral \textbf{\underline{F}}iltering for \textbf{\underline{G}}raph \textbf{\underline{N}}eural \textbf{\underline{N}}etwork (namely NFGNN). By estimating the node-oriented spectral filter for each node, NFGNN is provided with the capability of precise local node positioning via the generalized translated operator, thus discriminating the variations of local homophily patterns adaptively. Meanwhile, the utilization of re-parameterization brings a good trade-off between global consistency and local sensibility for learning the node-oriented spectral filters. Furthermore, we theoretically analyze the localization property of NFGNN, demonstrating that the signal after adaptive filtering is still positioned around the corresponding node. Extensive experimental results demonstrate that the proposed NFGNN achieves more favorable performance.
\end{abstract}

\begin{IEEEkeywords}
Graph neural networks, spectral filtering, graph representation learning, homophilic graph, heterophilic graph.
\end{IEEEkeywords}}

\maketitle

\IEEEdisplaynontitleabstractindextext

%
\IEEEpeerreviewmaketitle

\IEEEraisesectionheading{\section{Introduction}\label{sec:introduction}}

%
%
%
%

\label{sect::introduction}
\IEEEPARstart{A}{s} a potent tool for analyzing graph data, GNNs are attracting considerable attention from both academia and industry. Meanwhile, GNNs have also demonstrated remarkable capabilities in a number of graph-related applications, including but not limited to recommendation system~\cite{lightGCN, ngcf}, disease prediction~\cite{popGCN, InceptionGCN}, drug discovery~\cite{drug1, drug2}, and action recognition~\cite{STGCN, TSAGCN}. 
In the field of graph machine learning, homophily has always remained to be a common assumption~\cite{Homophily,tutorial}, i.e., nodes that connect with each other tend to be within the same class.
However, behind the impressive success of the previous efforts, such  an assumption as a critical limitation doesn't hold in many graph-related scenarios, which severely inhibits the further extension of GNNs to more general graph data. In practice, it is hard to argue that homophily is an inherent characteristic of graph data~\cite{H2GCN} and there are also a considerable number of non-homophilic graphs in the real world, where the links usually exist between nodes from different classes. A typical example is the protein structure network which is widely known as a strong heterophilic network because connections between different types of amino acids are easier to form~\cite{protein_structure}.

As a matter of fact, the relevance of the downstream task to the graph construction always determines whether a graph is homophilic or heterophilic. It means that for a fixed topology structure, when combined with the different label distributions from different downstream tasks, the identification of its graph property will be very different. \textcolor{revise}{Taking the citation network as an example, a paper is more likely to cite papers that study the same or similar topics over multiple time periods. Therefore, the citation graph is likely to be heterophilic or random if we use the published year of the paper as the label, while it may be homophilic if we use the topic of the paper as the label.} In consideration of the practical applications, a simple yet effective GNN  model that can be adaptively applied to graphs with different structural properties should be preferred. To address the issue of non-homophily,  Fig.~\ref{fig::methods} shows a brief overview of the existing GNNs to deal with the non-homophilic graph.

\begin{figure}[t]
	\centering
	\includegraphics[width=3.5in]{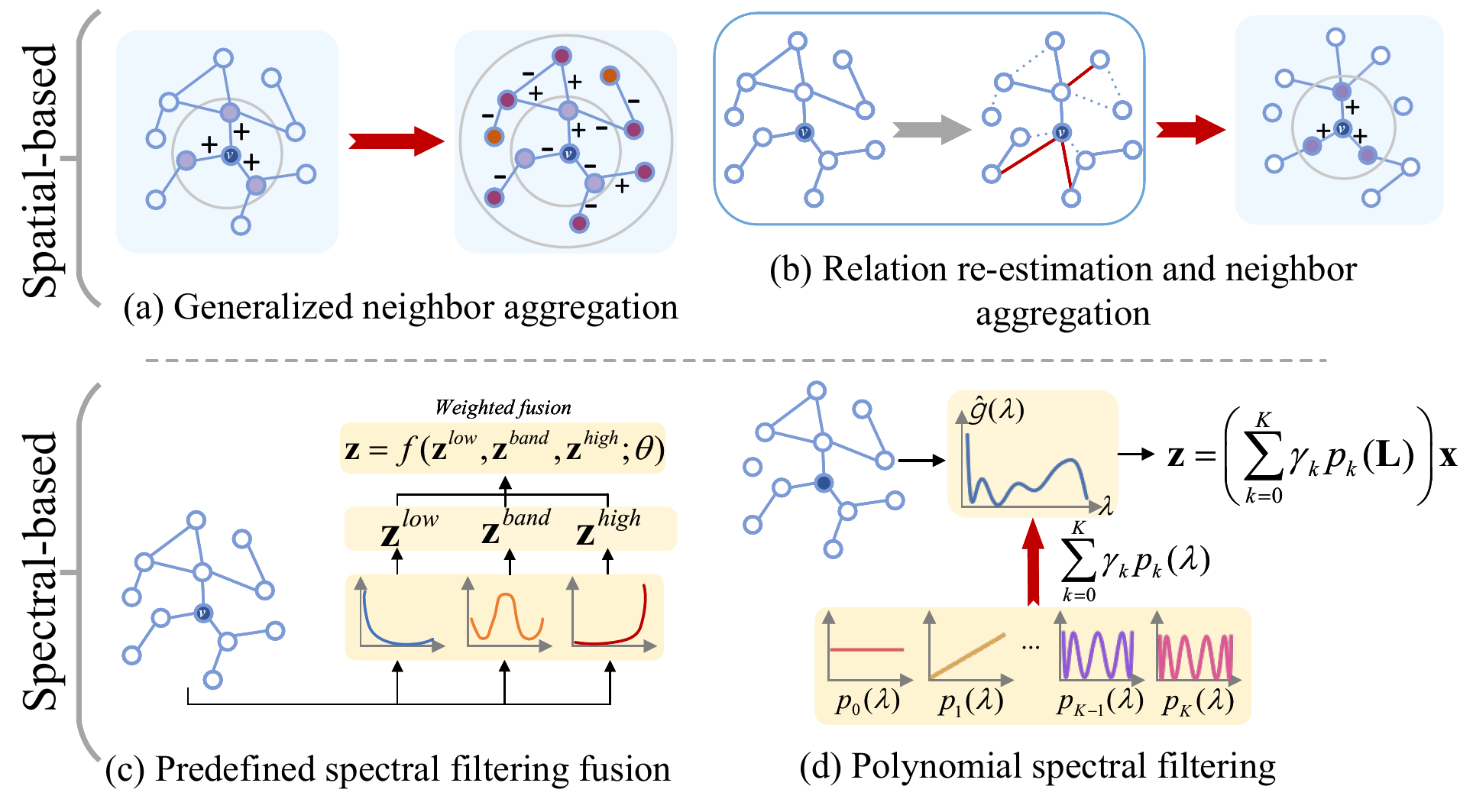}
	\vspace{-3mm}
	\caption{A summary of the existing GNNs for non-homophily. They are broadly classified into two categories: spatial-based and spectral-based, with spatial-based methods further classified into (a) and (b), and spectral-based methods further classified into (c) and (d).}
	\label{fig::methods}
	\vspace{-5mm}
\end{figure}
\begin{figure*}[t]
    \centering
   \subfigure[Cora]{
   \includegraphics[width=35mm]{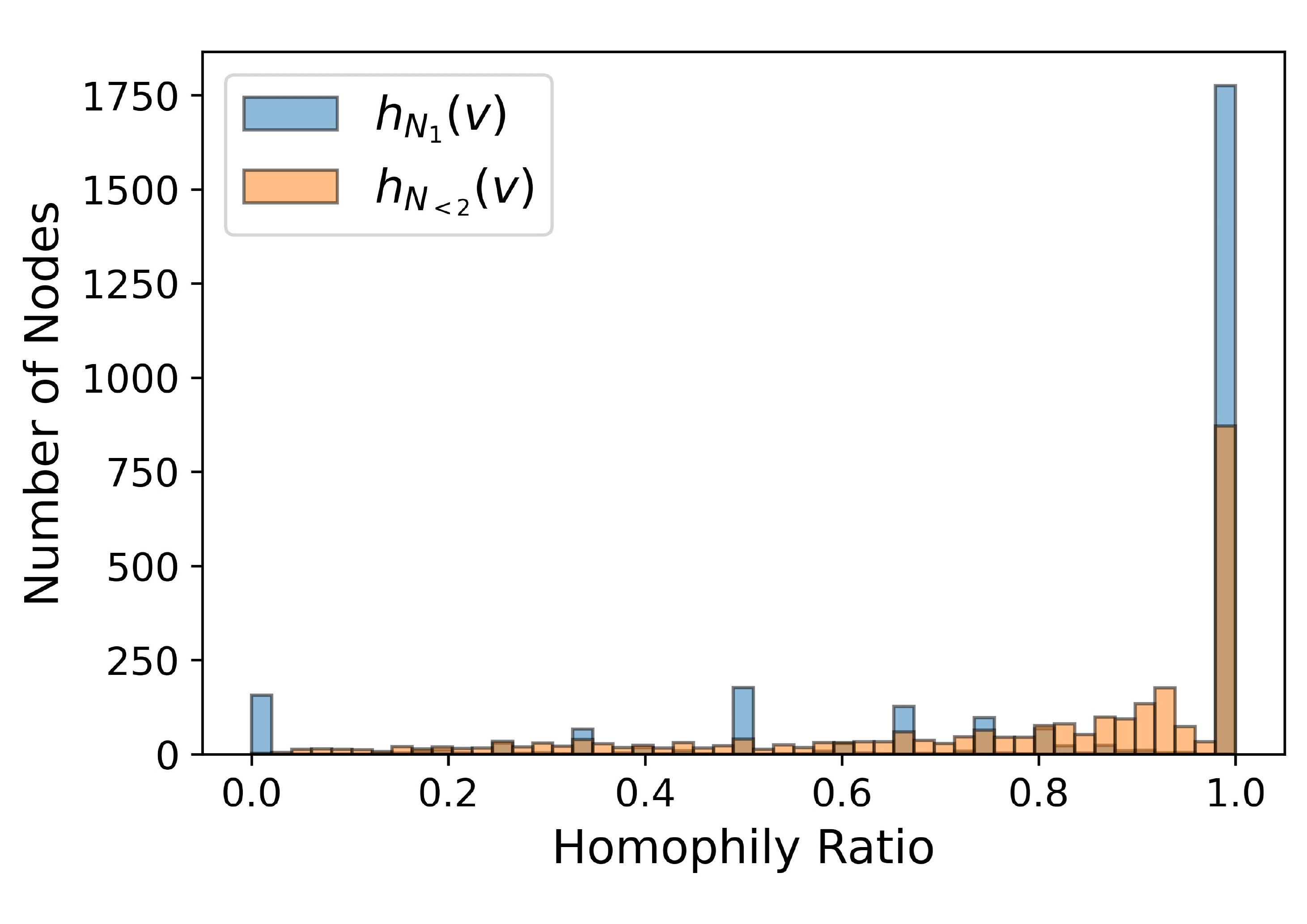}
   }
   \hspace{-2mm}
   \subfigure[Citeseer]{
   \includegraphics[width=35mm]{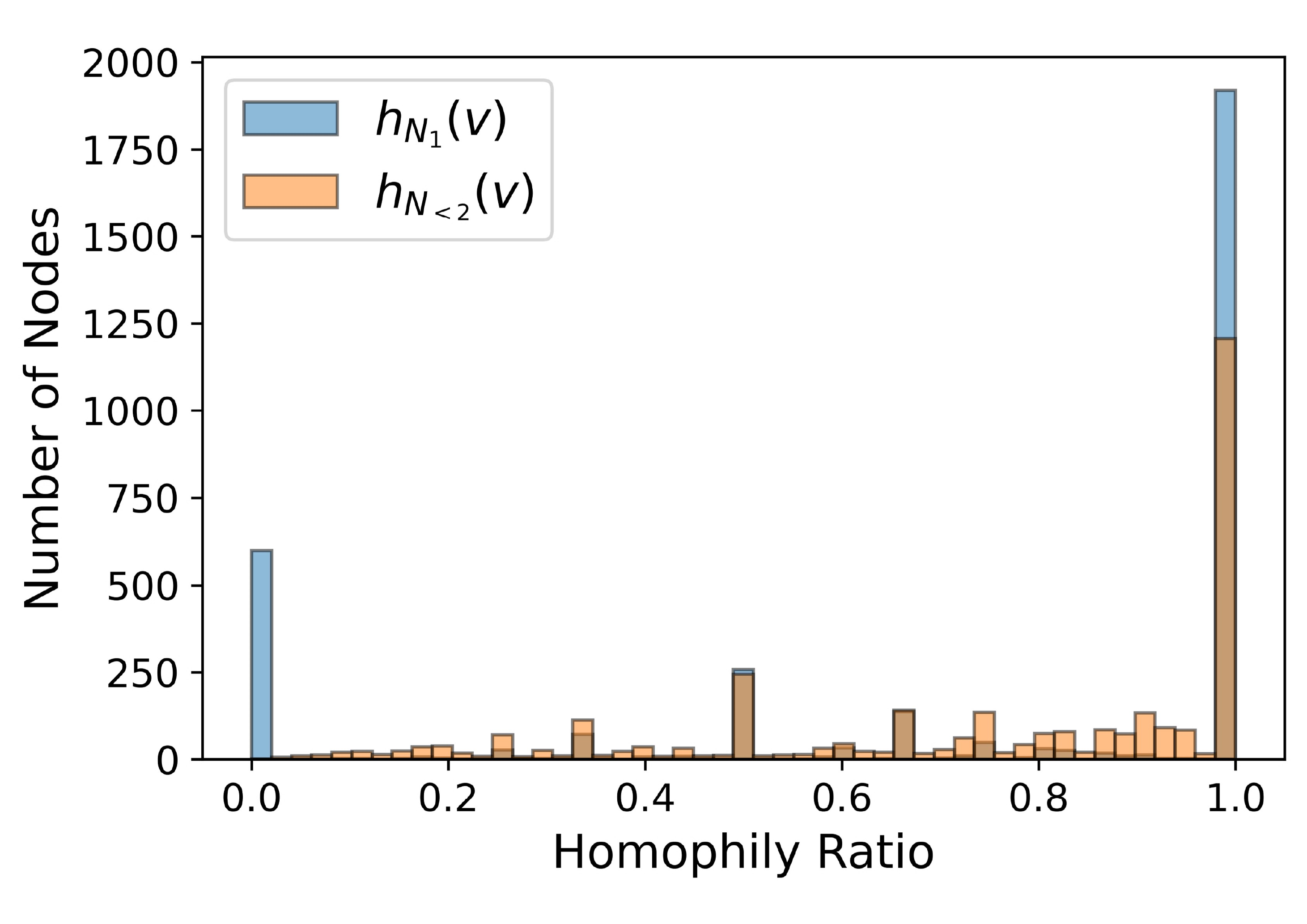}
   }
   \hspace{-2mm}
   \subfigure[Texas]{
   \includegraphics[width=35mm]{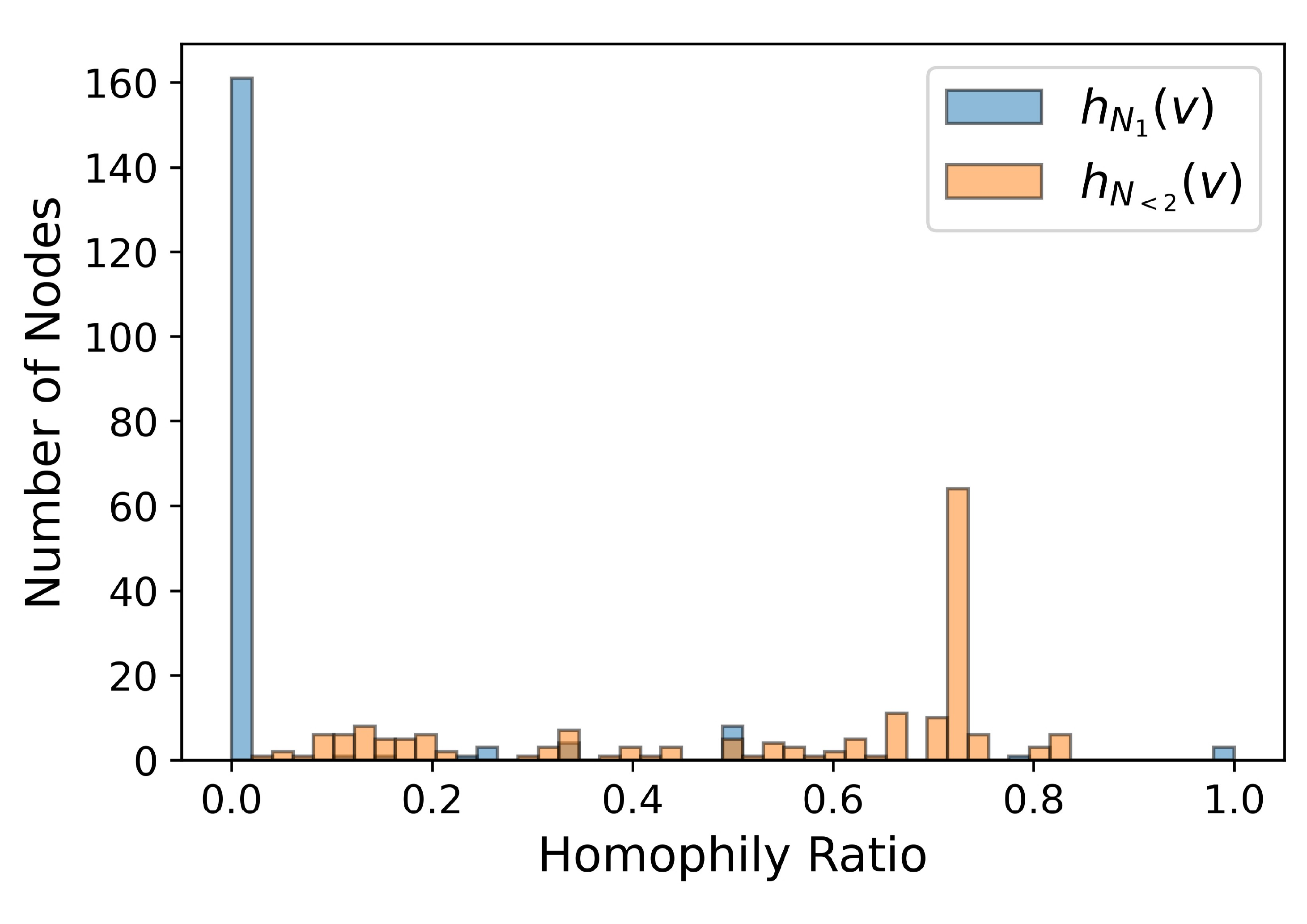}
   }
   \hspace{-3mm}
   \subfigure[Actor]{
   \includegraphics[width=35mm]{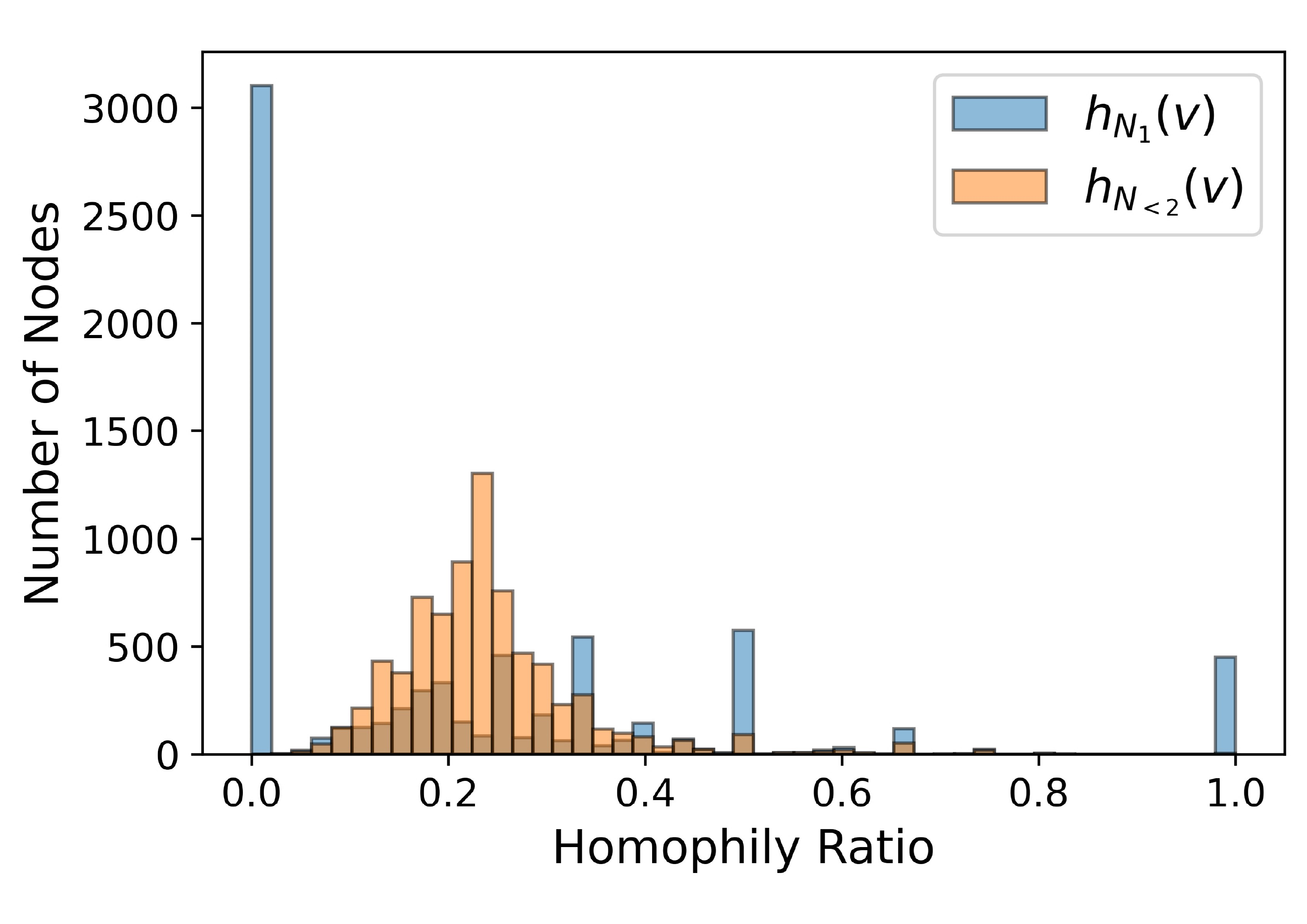}
   }
   \vspace{-3mm}
    \caption{The statistical histogram of $h_{N_1}(v)$ and $h_{N_{<2}}(v)$ of four real-world graphs, where Cora and Citeseer are known as the graphs with strong homophily, Cornell and Actor are known as the graphs with strong heterophily.}
    \vspace{-6mm}
    \label{fig::homo}
\end{figure*}
Furthermore, due to the unpredictability of the correlation between downstream tasks and graph structure, a follow-on and natural question is: (\textbf{Q1}) \textit{whether the homophily property is consistent across different local subgraphs in the whole graph?} Intuitively, the assumption that there are always diverse subgraph patterns among different regions might be more realistic~\cite{breaking}. 
A concrete example is that in an air traffic network, the establishment of air routes is more for commercial reasons and has little to do with the activities of airports~\cite{struc2vec}. In this case, it is not so easy to know whether the local homophily property of different subgraphs in the whole graph is consistent. 
In addition, as shown in Fig.~\ref{fig::homo}(a), a strong homophilic graph will also have a small amount of random or heterophilic local patterns, i.e., the neighbor label distribution of some nodes may also be random or heterophilic.
It poses a deeper challenge for GNN to take into account the possible existence of mixed patterns in graphs.

As far as the existing GNNs are concerned, most of them usually adopt message-passing architecture in the spatial domain to aggregate the node feature from neighbors over the given topology structure~\cite{GAT, Graphsage,LGCL,SGC} under the homophily assumption. 
Such near-neighbor aggregation that all neighbor nodes are considered to contribute positively to the target node has been shown to be remarkably applicable to homophilic graphs~\cite{analyzing}. Nevertheless, it will inevitably lack feasibility for non-homophilic graphs according to the experiment performance~\cite{GeomGCN,H2GCN}, and furthermore, it is also certainly unsuitable for graphs with mixed local patterns. 
Although some works have attempted to give interpretations from the spectral perspective that the essence of the above methods can be seen as a low-pass filter, it is quite contrary to the non-low-pass properties of non-homophilic graphs~\cite{SGC,analyzing}. However, considering that most of the GNNs are designed on the basis of message-passing architecture in the spatial domain, another interesting and worthwhile question to explore is: (\textbf{Q2}) \textit{how to understand why the near-neighbor aggregation performs poorly on the non-hospitability graph under message-passing architecture}.


In order to break the homophily limitation of previous GNNs, several recent studies have made some exploratory efforts beyond the near-neighbor aggregation to solve the generic GNN modeling problem for graphs with various homophily  properties and mixing local patterns from spatial and spectral perspectives. As shown in Fig.~\ref{fig::methods}, the spatial-based methods mainly focus on two aspects, various aggregation schemes~\cite{H2GCN, CPGNN, mixhop, GCNII} and relation re-estimation (also known as graph rewiring)~\cite{non_local, breaking, UGCN, GeomGCN, BMGCN}, to enhance the mastery of non-homophily graphs. Even though these methods show some progress to a certain extent, some additional calculations are always inevitably introduced, especially for the methods based on relation re-estimation. It is indispensable for them to define measures of node similarity to perform potential neighborhood discovery, whereas the design of similarity measure and the high complexity that comes with it making them less concise and flexible~\cite{survey}. Besides, the spectral-based methods, which have better interpretability compared to spatial-based methods, can also be broadly identified into two categories: predefined spectral filtering combination~\cite{acm, Interpreting_and_unifying} and adaptive spectral filter learning~\cite{GPRGNN, BernNet}. The starting point of the former is easy to understand, i.e., to deal with the heterogeneous map problem by expanding the filter set. However, the choice of filter sets obviously limits the generalizability of such methods. The latter is a straightforward way to learn a suitable global filter by higher-order polynomial approximation, which nicely extends the applicability of GNNs for graphs with different homophily properties. In addition, the global filter modeling of existing methods is incapable of capturing the changes in local structural patterns, which may still be suboptimal for graphs with a mixture of more complex local structural patterns.

 In the face of analyzing complex real-world graphs with different homophily properties, the latent mixed local structural patterns in graphs should not be neglected. Therefore, the two  questions, i.e., (\textbf{Q1}) and (\textbf{Q2}) as motioned above, should be well considered on the way to implementing a more generic GNN. For this purpose, we attempt to get deeper insights into them from two points, respectively, \textbf{(A1): Randomness of local patterns}, and \textbf{(A2): Aggregability  of near-neighbors}. In fact, the randomness of local patterns is described by the extent to which the local patterns in the graph are consistent, reflecting the complexity of modeling the graph. Meanwhile, the aggregatability of near-neighbors manifests the ability of GNNs based on neighbor aggregation to model a complex graph. It specifies why the existing near-neighborhood aggregation mechanisms fail to work for the non-homophily graphs from a spatial perspective. Considering the excellent theoretical interpretability and computational efficiency of spectral-based methods compared to spatial-based ones, we endeavor to explore the local adaptive spectral filter learning for the first time, as we know, to tackle the mixed local patterns in the graph. It breaks through the dilemma of globally shared filtering of the original adaptive spectral filter learning methods, while the computational superiority can be well maintained through the proposed reparameterization strategy. 

In summary, the main contributions of this paper can be highlighted as follows:
\begin{itemize}
\item[-]{In order to gain thorough insights into the higher-order mixing patterns in the real graphs and the adaptability of GNNs to them, both empirical and theoretical analyses on \textit{randomness of the graph} and \textit{aggregability of near-neighbors} are presented.}
\item[-]{Inspired by the generalized translation operator, we propose a node-oriented spectral filtering GNN using the polynomial-parameterized spectral convolution, namely NFGNN. It takes fully into account the specific effect of the node where the filter is positioned.}
\item[-]{To alleviate the heavy burden on learning node-oriented local filtering coefficients, we present an elegant low-rank approximation-based reparameterization to decompose the filter weight matrix, which not only simplifies the parameter complexity but also brings a trade-off between global and local filtering.}
\item[-]{ An extensive group of experiments on various real-world datasets up to 14 real-world graphs, including 6 classical homophilic graphs and 8 non-homophilic graphs, verifies that the proposed NFGNN achieves more favorable performance. }
\end{itemize}

The rest of this paper is organized as follows. The notations and terminology are introduced in Sect.~\ref{sect::Preliminaries}. In Sect.~\ref{sect::motivation}, we investigate the local mixing patterns in the graph. Sect.~\ref{sect::method} presents the methodology of the proposed NFGNN. The experiments are shown in Section~\ref{sect::experiment}. We also discuss the connection of some existing GNNs with NFGNN and the scalability of NFGNN in Sect.~\ref{sect::GNNs_connection} and  Sect.~\ref{sect::scalability}, respectively. Finally, we give the conclusion in Sect.~\ref{sect::conclusion}.

\section{Preliminaries}
\label{sect::Preliminaries}
\subsection{Notations}
An undirected graph is denoted as $\mathcal{G} = (\mathcal{V}, \mathcal{E})$, where $\mathcal{V}=\{v_i\}_{i=1}^{|\mathcal{V}|}$ denotes the set of nodes with $|\mathcal{V}|=n$, and $\mathcal{E}$ is the set of edges among nodes. 
The topology structure of graph $\mathcal{G}$ could be described by the adjacency matrix $\mathbf{A} \in \mathbb{R}^{n \times n}$ with $\mathbf{A}_{i,j}=1$ if $(i,j)\in \mathcal{E}$ or 0 otherwise. $\mathbf{D}$ is the diagonal degree matrix with its $i$-th diagonal entry $\mathbf{D}_{ii}=\sum_{j}A_{ij}$. Besides, we use $\mathbf{L}=\mathbf{I}-\mathbf{D}^{-1/2}\mathbf{A}\mathbf{D}^{-1/2}$ to denote the symmetric normalized Laplacian matrix of $\mathcal{G}$, and $\mathbf{I}$ is the identity matrix. 

For each node $v \in \mathcal{V}$, we denote its neighborhood by using $N(v)$, and further, the $i$-hop neighbors $N_i(v)$ and the neighbors within $i$-hops $N_{<i}(v)$ of node $v$ by $N_i(v) = \{m: m \in \mathcal{V} \wedge d_{\mathcal{G}}(v,m) = i\}$ and $N_{<i}(v) = \{m: m \in \mathcal{V} \wedge d_{\mathcal{G}}(v,m) \leq i\}$, respectively, where $d_{\mathcal{G}}(i,j)$ is the shortest path distance between two nodes $i$ and $j$ on graph $\mathcal{G}$.
Besides, let $\mathbf{x}=[x_1, \cdots, x_i, \cdots, x_n]^{\top} \in \mathbb{R}^n$ denote the $n$-dimensional signal defined on the given graph $\mathcal{G}$, where $x_i$ denotes the signal response (feature) at the $i$-th node $v_i$. Generally, when each node receives $f$ channels of signals, we will have a feature matrix $\mathbf{X}=[\mathbf{X}_1,\cdots, \mathbf{X}_i,\cdots, \mathbf{X}_n ]^{\top} \in \mathbb{R}^{n \times f}$ with each column of it  being  a graph signal $\mathbf{x}$ and its $i$-th row $\mathbf{X}_i\in \mathbb{R}^{f}$ representing $f$- dimensional feature vector associated with node $v_i$\footnote{Unless otherwise stated, only $\mathbf{x}\in \mathbb{R}^n$ is considered  as the input of GNNs for convenience of presentation, the following discussions of this work can also be  easily extendable to $\mathbf{X}\in \mathbb{R}^{n\times f}$  with $f$ channels of signals (i.e., $f$-dimensional features). }.

Furthermore, for the node classification task, each node $v \in \mathcal{V}$ has a class label $y_v \in \mathcal{Y}=\{1,\cdots,C\}$, where $\mathcal{Y}$ is the set of class labels with $|\mathcal{Y}|=C$, and $C$ is the number of classes. In addition, we use $\mathbf{y}_v\in\mathbb{R}^C$ to denote the one-hot vector corresponding to $y_v$.
\subsection{Graph Fourier Transform}
According to the graph signal processing theory, the graph Laplacian provides an effective way of spectral analysis on graphs. Given the Laplacian matrix $\mathbf{L}$, it can be eigendecomposed into $\mathbf{U}\Lambda\mathbf{U}^{\top}$, 
where $\mathbf{U}=[\mathbf{u}_1, \cdots, \mathbf{u}_l, \cdots, \mathbf{u}_{n}] \in \mathbb{R}^{n \times n}$ is the graph Fourier basis formed by $n$ orthonormal eigenvectors $\{\mathbf{u}_l\}_{l=1}^{n}$, 
and $\Lambda = \mathrm{diag}[\lambda_1, \cdots, \lambda_l, \cdots, \lambda_{n}]\in \mathbb{R}^{n \times n}$ is the diagonal matrix of the ordered eigenvalues $\{ \lambda_l\}_{l=1}^{n}$, $\lambda_l \in [0,2]$. Notice that $\{ \lambda_l\}_{l=1}^{n}$ is also identified as the frequencies of the graph. Thus, the graph Fourier transform of the signal $\mathbf{x}$ is defined as $\hat{\mathbf{x}}=\mathbf{U}^{\top}\mathbf{x}$, and $\hat{x}(\lambda_l) = \mathbf{u}^{\top}_l\mathbf{x}$ indicates the response of $\mathbf{x}$ at the frequency $\lambda_l$. The inverse graph Fourier transform is defined as $\mathbf{x}=\mathbf{U}\hat{\mathbf{x}}$~\cite{shuman2013emerging}. Thus, on the basis of the graph Fourier transform, the signal $\mathbf{x}$  filtered by $\hat{\mathbf{g}}$ can be given as follows:
\begin{equation}
\label{eq::graph_conv}
    \mathbf{z} = \sum_{l=1}^{n}\hat{g}(\lambda_l)\mathbf{u}_{l}\mathbf{u}_{l}^{\top}\mathbf{x}
    = \mathbf{U}\hat{\mathbf{g}} \mathbf{U}^{\top}\mathbf{x}
\end{equation}
where $\hat{\mathbf{g}}=\hat{g}(\Lambda) = \mathrm{diag}[\hat{g}(\lambda_1), \cdots, \hat{g}(\lambda_l), \cdots, \hat{g}(\lambda_{n})]$ is the spectral filter and we have $\mathbf{g}=\mathbf{U}\hat{\mathbf{g}}$. Since the spectral filtering is equivalent to convolution in the spatial domain~\cite{ortega2018graph}, Eq.~(\ref{eq::graph_conv}) could also be defined as the spectral graph convolution $\mathbf{z} = \mathbf{x} \ast_{\mathcal{G}} \mathbf{g}$, where $\ast_{\mathcal{G}}$ denotes the graph convolution operator.
\subsection{Polynomial-based spectral filtering}
Different from the spatial-based GNNs, spectral-based GNNs aim to learn a specific spectral filter for a given graph structure and node labels, thus preserving the appropriate frequency components for the downstream  tasks.
Based on Eq.~(\ref{eq::graph_conv}), early spectral GNNs~\cite{bruna2013spectral,henaff2015deep} directly eigendecompose the normalized Laplacian matrix $\mathbf{L}$ to obtain the Fourier basis $\mathbf{U}$ and treat the $\hat{\mathbf{g}}$ as the trainable parameters. However, the expensive eigenvalue decomposition restricts the availability of these methods greatly. To circumvent the eigendecomposition, $K$-order polynomial approximation $\hat{P}_K(\cdot)$ is adopted to parameterize the spectral filter:
\begin{equation}
    \hat{g}(\lambda_l) \approx \hat{P}_K(\lambda_l) = \sum^{K}_{k=0}\gamma_k
    \hat{p}_k(\lambda_l)
    \label{eq::Poly_Appro}
\end{equation}
where $\hat{p}_k(\cdot)$ is the $k$-th polynomial base and $\gamma_k$ denotes the learnable fitting coefficient corresponding to $\hat{p}_k(\cdot)$. By plugging Eq.~(\ref{eq::Poly_Appro}) into Eq.~(\ref{eq::graph_conv}), the spectral filtering can  be rewritten as:
\begin{equation}
    \mathbf{U}\hat{\mathbf{g}} \mathbf{U}^{\top}\mathbf{x} 
    \approx \mathbf{U}\left(\sum^{K}_{k=0}\gamma_k \hat{p}_k(\Lambda)\right )\mathbf{U}^{\top}\mathbf{x}
    =\left(\sum^{K}_{k=0}\gamma_k \hat{p}_k(\mathbf{L})\right )\mathbf{x}
\end{equation}
Due to the high efficiency, various polynomial bases are used for spectral filter parameterization, such as Chebyshev basis~\cite{chebyNet}, Bernstein basis~\cite{BernNet}, and Jacobi basis~\cite{Jacobi}. 

Except for the reduced complexity, another advantage of polynomial-parameterized filter learning is the localization property. When the filter $\hat{\mathbf{g}}$ centers at node $v_i$,  the value at node $v_j$ after filtering by $\hat{\mathbf{g}}$ is equal to $\sum^{K}_{k=0}\gamma_k \hat{p}_k(\mathbf{L})_{i,j}$. Since the bases of $\hat{p}_K(\mathbf{L})$ are $\left\{\mathbf{L}^k\right\}_{k=1}^K$, and $(\mathbf{L}^K)_{i,j}$ will be 0 if $d_{\mathcal{G}}(i,j) > K$~\cite{hammond2011wavelets}, the above facts show that the $K$-order polynomial spectral filter is exactly localized in $N_{<K}(i)$.

\section{Mixed Local Structural Patterns}
\label{sect::motivation}
The label consistency of nodes and their neighbors is the key factor in measuring the homophily of a graph. Beyond the global homophily, in this section, we also analyze and discuss the randomness of local patterns and the aggregability of near-neighbors from the aspect of the label distribution of neighbors.

\subsection{Randomness of the Graph} 
\begin{figure}[t]
	\centering
	\includegraphics[width=3.3in]{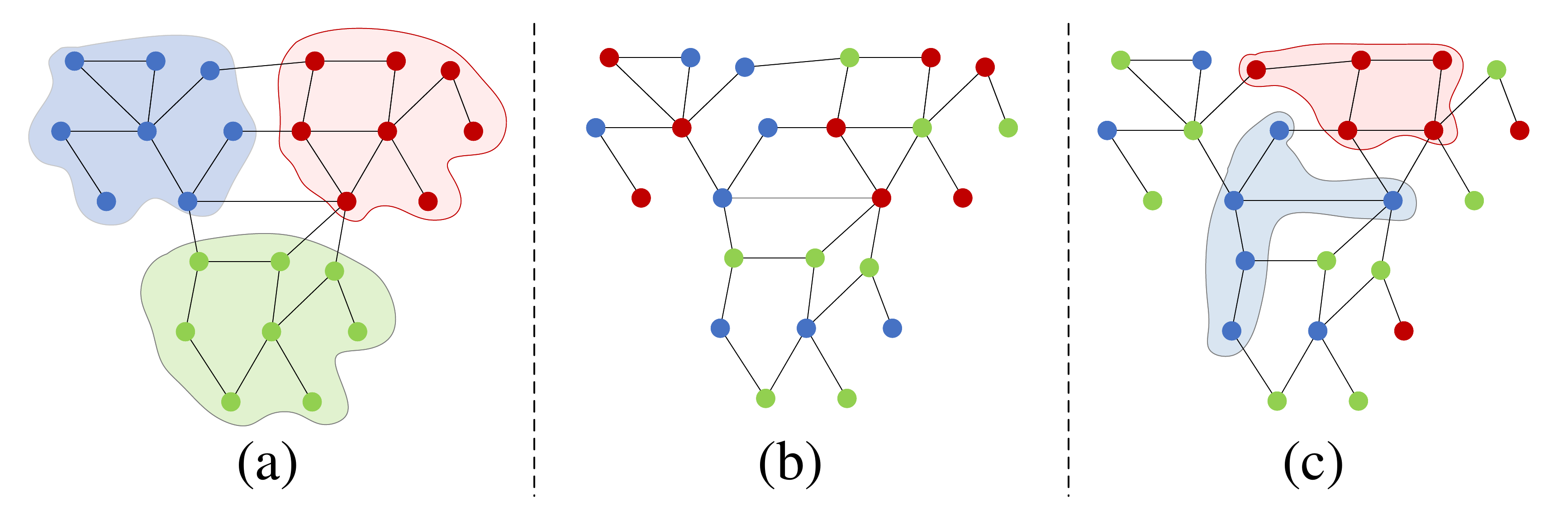}
	\vspace{-3mm}
	\caption{Three graphs with different homophily properties (a), (b), and (c). (a) is a strong homophilic graph with $\mathcal{H}_{\mathcal{G}} \approx 0.81$, (b) is a strong heterophilic graph with $\mathcal{H}_{\mathcal{G}} \approx 0.10$, and  (c) is a graph with mixing local patterns, whose $\mathcal{H}_{\mathcal{G}} \approx 0.26$.}
	\vspace{-5mm}
	\label{example}
\end{figure}
To measure the homophily of a graph, both edge homophily ratio~\cite{H2GCN} and  node homophily ratio~\cite{GeomGCN} are two widely used metrics. \textcolor{black}{In addition, ~\cite{lim1} recently proposed a more comprehensive homophily metric that mitigates homogeneity bias from class imbalance. It is less sensitive to the number of classes and size of each class than the edge homophily ratio and node homophily ratio.}
Since we aim to analyze the local patterns of the graph via the label consistency of the node neighborhoods, the node homophily ratio is adopted in this work. It should be noticed that the edge homophily ratio and the node homophily ratio  have similar qualitative behavior~\cite{benchmark}. 

In particular, the node homophily ratio $\mathcal{H}_{\mathcal{G}}$ of the graph $\mathcal{G}$ is defined as the average of the homophilic 1-hop neighbor ratio of each node $v$ in $\mathcal{G}$ and given by:
\begin{equation}
\label{H_G}
    \mathcal{H}_{\mathcal{G}} = \frac{1}{|\mathcal{V}|}\sum_{v\in\mathcal{V}}h_{N_1}(v) =  \frac{1}{|\mathcal{V}|}\sum_{v\in\mathcal{V}}\frac{|\{m\in N_1(v):y_m=y_v\}|}{|N_1(v)|}
\end{equation}
where $h_{N_1}(v)$ denotes the homophilic $1$-hop neighbor ratio of node $v$. Essentially, $H_{\mathcal{G}}$ provides an overall evaluation criterion for the homophily of a graph. However, it is not sufficient for a comprehensive assessment of a graph. As shown in Fig.~\ref{example}, we can see that the overall homophily is hard to reflect the local patterns of the graph. Therefore, we should also cast more insight into the variation of local structural patterns. 

Particularly, since $h_{N_1}(v)$ can be used to measure the homophily of a neighbor subgraph of a single node, we first give a node-level visualization of the statistical histograms of $h_{N_1}(v)$ and $h_{N_{<2}}(v)$. As shown in Fig.~\ref{fig::homo}, even in Cora and Citeseer networks which are usually considered homophilic graphs, there still exist a small number of completely 1-hop heterophilic subgraphs. Similarly, there are some subgraphs with a high homophily ratio in Cornell and Actor networks. Obviously, these observations mean that a graph with a complex topological structure is definitely a mixture of homophilic and heterophilic local subgraphs. Furthermore, for two heterophily graphs, we can find that the statistical histogram of $h_{N_{<2}}(v)$ is much different from that of $h_{N_{1}}(v)$, 
demonstrating that each node's associated local subgraph patterns varied generally with the change of neighborhood range. 

It should be noted that $h_{N_i}(v)$ only simply conveys the consistency of the neighborhood labels, but ignores the randomness of the neighborhood labels, which is also important for local pattern analysis. Thereby, \textcolor{revise}{inspired by the well-known Shannon entropy in information theory~\cite{shannon1948mathematical}}, we propose a label entropy $S_{N_i}(v)$ to measure the neighbor label distribution of node $v$, which is defined as :
\begin{equation}
\label{eq::entropy}
S_{N_i}(v) = -\sum_{y \in \mathcal{Y}}(\frac{|N^{(y)}_i(v)|}{|N_i(v)|}+\varepsilon)\mathrm{log}(\frac{|N^{(y)}_i(v)|}{|N_i(v)|}+\varepsilon)
\end{equation}
where $N^{(y)}_i(v) = \{m: m\in N_i(v)\wedge y_m = y \}$ and $\varepsilon=$1e-10 is a constant to avoid overflow. \textcolor{revise}{Similar to Shannon entropy, which measures the uncertainty or randomness of the possible outcomes of a variable, the label entropy serves as a node-level indicator that scalarizes the neighborhood label distribution of the given node and indicates the randomness of the subgraph centered at the node. Clearly, the label entropy tends to be maximal when the label distribution of neighbor nodes is uniform. On the contrary, if the neighborhood labels of a given node all belong to the same class, the label entropy will be minimal.}

As shown in Fig.~\ref{fig::entropy}, most nodes in the homophily graphs have low $S_{N_1}(v)$ and most nodes in the heterophily graphs have high $S_{N_1}(v)$. Besides, for all four graphs, the statistical histogram of $S_{N_{<2}}(v)$ is shifted to the right overall compared to $S_{N_{1}}(v)$. These observations suggest that the neighbor label distribution of each node tends to be uniform as the neighborhood range increases. More important, from Fig.~\ref{fig::entropy}(c) and (d), some explicit clustering phenomena can be easily found, indicating several types of important local patterns that may be existed in the graphs. 
Based on the above analysis of the variation of local structural patterns, we are prompted to consider that conducting adaptive modeling for the graph nodes with different degrees of homophily is a necessity, and further, we should also improve the effectiveness of GNNs for the nodes with various local structural patterns.

\subsection{Aggregability of Near-neighbors}
To facilitate the discussion of the aggregability of near-neighbors, we first give a definition of the neighborhood, i.e., \textit{homophily-preferred}:

\begin{definition}(homophily-preferred)
\label{def:homophily-preferred}
For a node $v$ with label $y_v$, $N_t(v)$ is expected to be heterophily-preferred if 
$\frac{P(y_m=y_v|y_v)}{P(y_m\neq y_v|y_v)} \le 1$, $\forall m \in N(v)$. Otherwise, $N_t(v)$ is expected to be homophily-preferred.
\end{definition}
Intuitively, the near-neighbor aggregation is effective when the near-neighbor subgraph is completely homophilic, while it may not capture adequate homophilic information when the neighborhood is expectedly heterophily-preferred. According to Definition~\ref{def:homophily-preferred}, it can be inferred that the aggregation of expectedly homophily-preferred neighborhoods is also beneficial to the node representation. Classical GNNs~\cite{GCN,GAT,Graphsage} commonly suffer from the over-smoothing problem~\cite{DAGNN} and are thus limited to shallow networks, which means that each node just aggregates the information about its neighbors within 2 or 3-hops. Hence, whether the near-neighborhood is homophily-preferred or heterophily-preferred will be of great importance to them. Moreover, Fig.\ref{fig::entropy}(c) and (d) also show that the label randomness of near-neighborhood in heterophily graphs is relatively high. Combined with the definition~\ref{def:homophily-preferred}, we can conclude that the near-neighbor-based aggregation is not the optimal solution for heterophily graphs.
\begin{figure*}[t]
    \centering
   \subfigure[Cora]{
   \includegraphics[width=35mm]{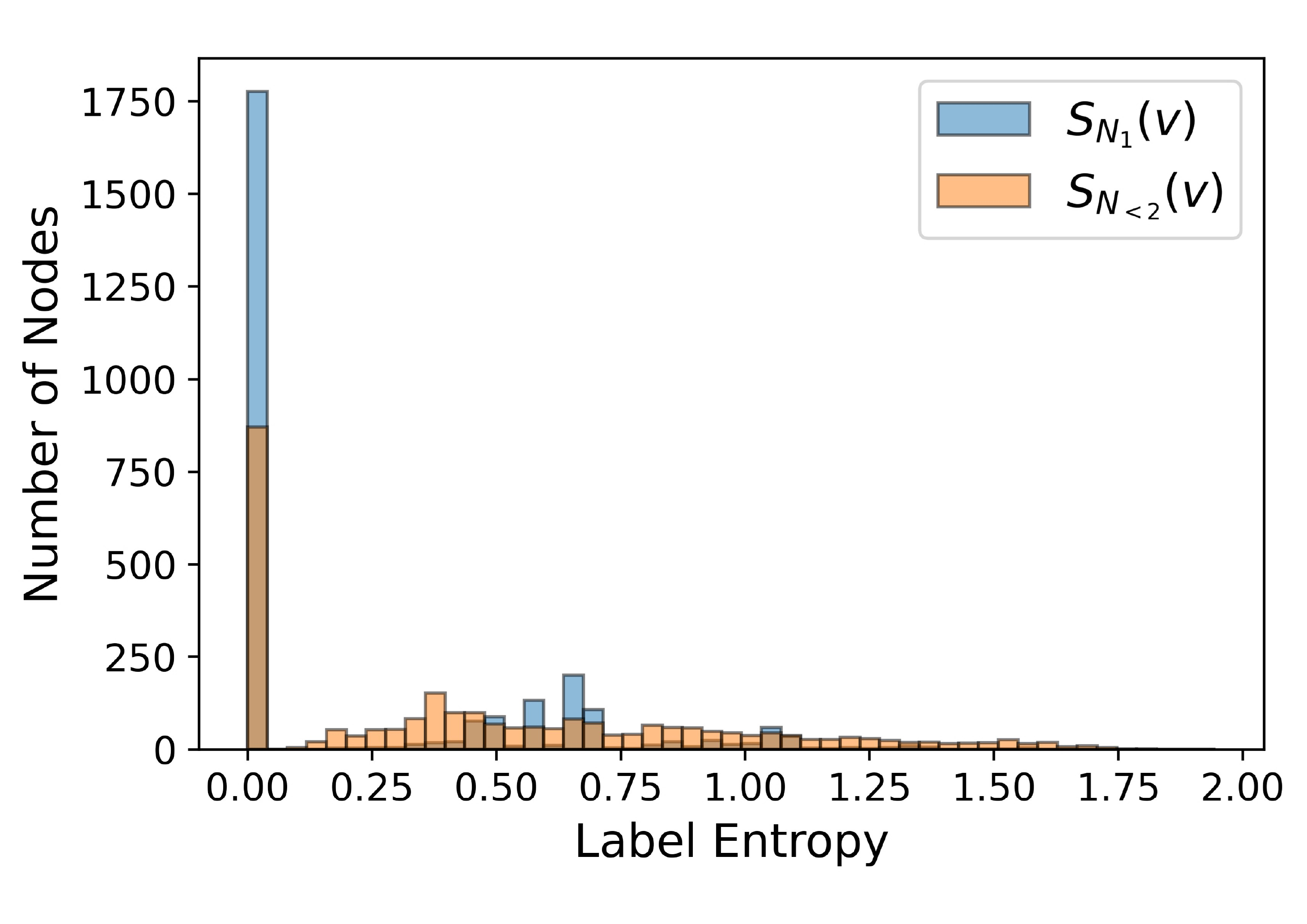}
   }
   \hspace{-2mm}
   \subfigure[Citeseer]{
   \includegraphics[width=35mm]{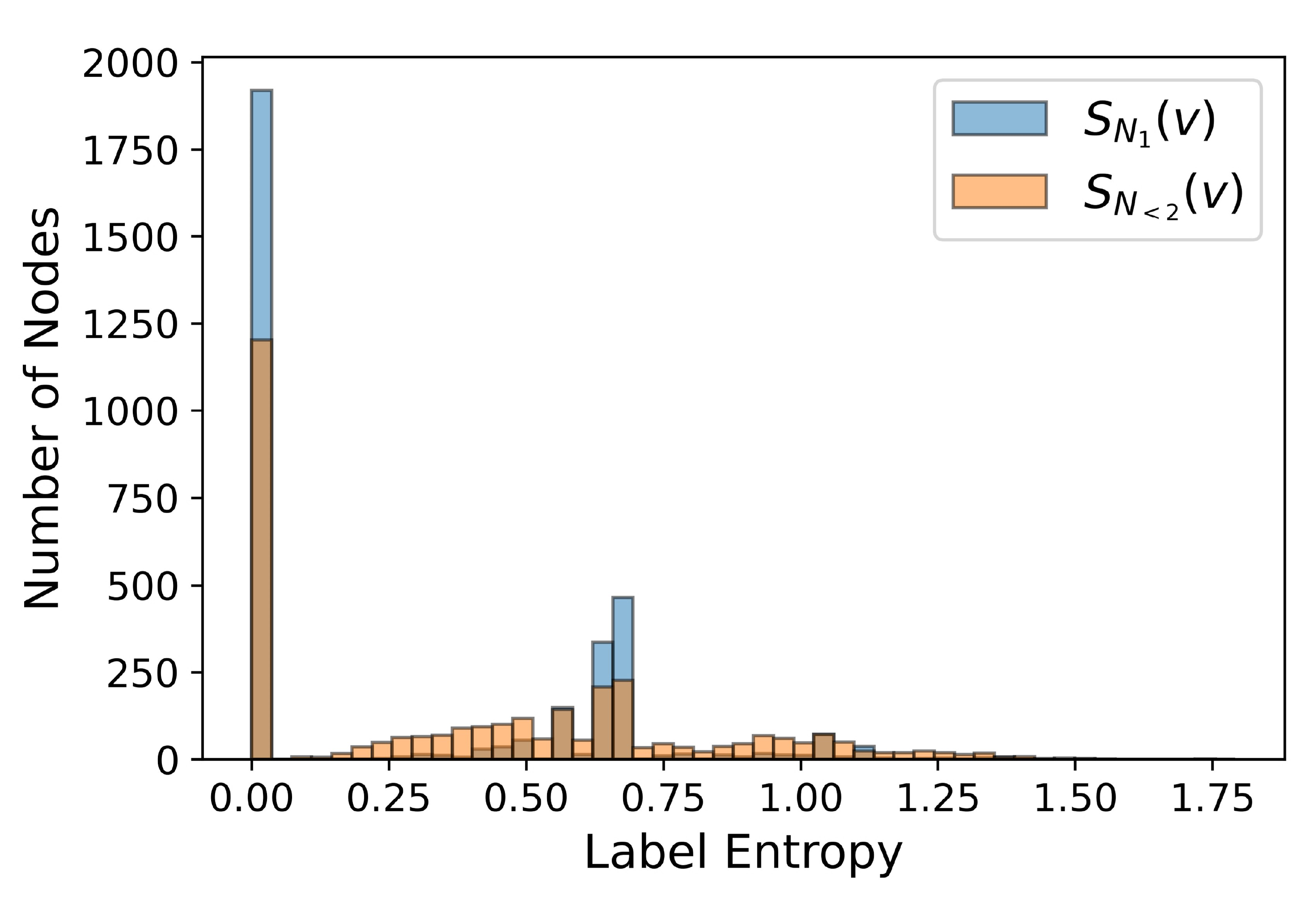}
   }
   \hspace{-2mm}
   \subfigure[Texas]{
   \includegraphics[width=35mm]{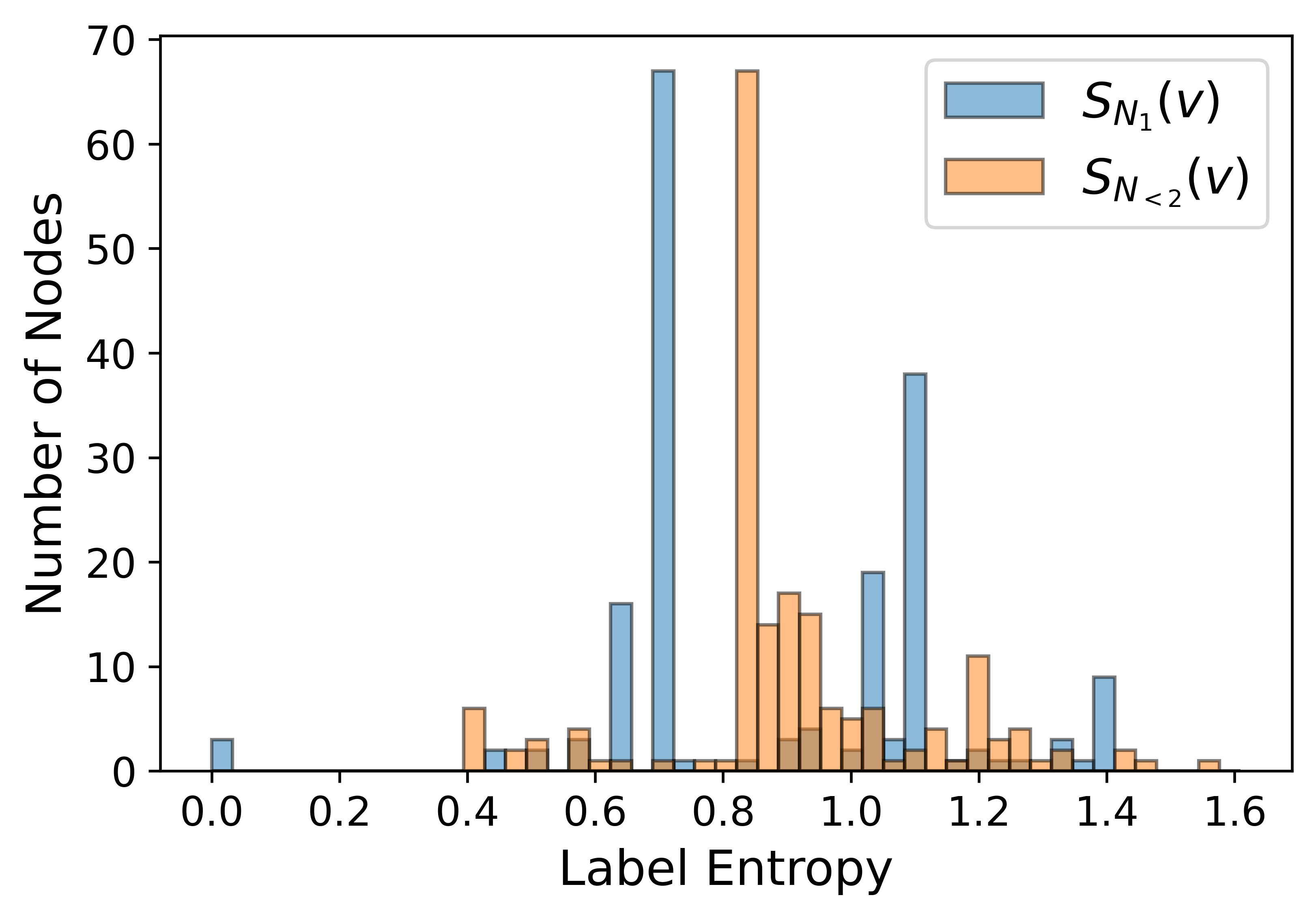}
   }
   \hspace{-2mm}
   \subfigure[Actor]{
   \includegraphics[width=35mm]{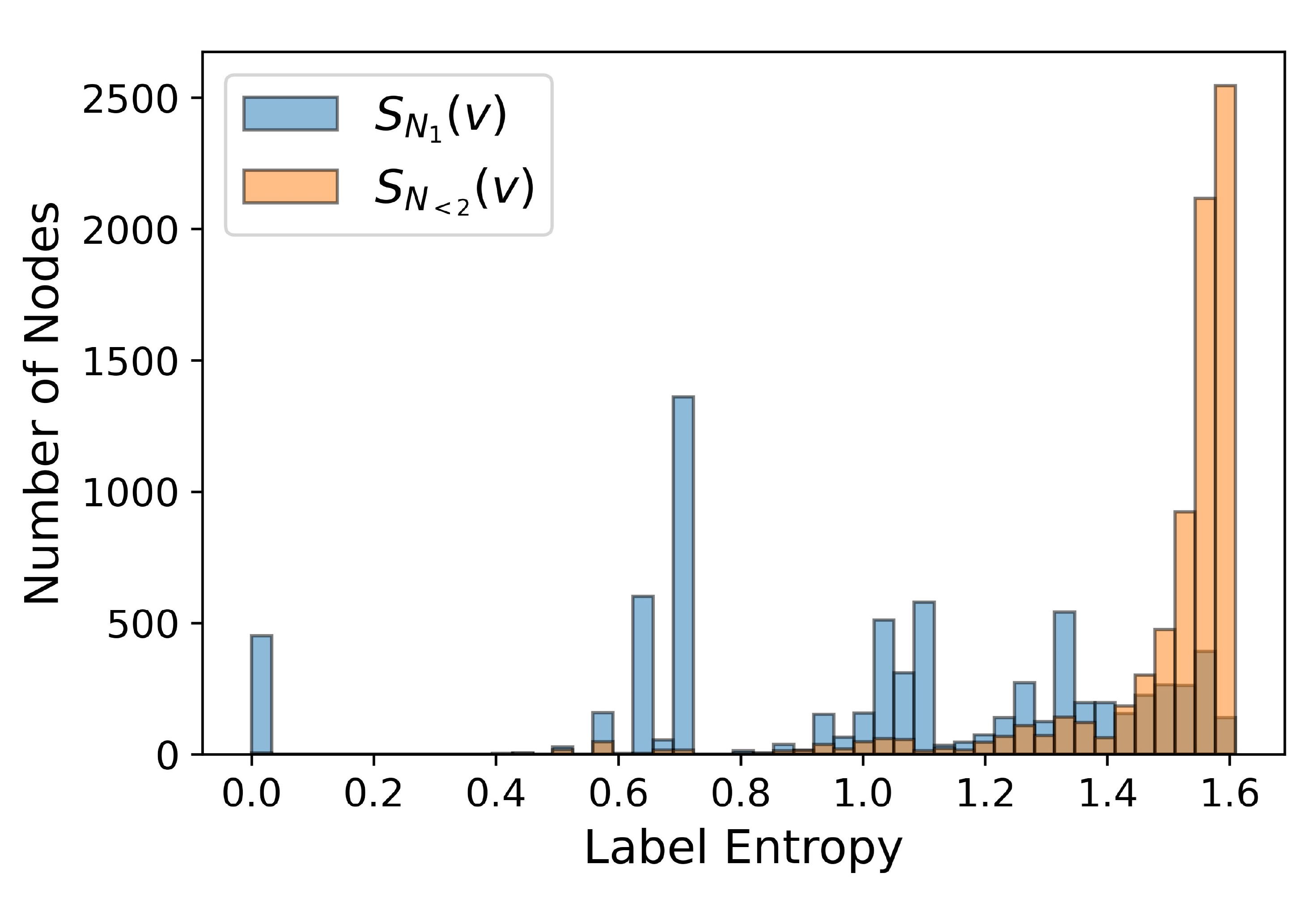}
   }
   \vspace{-3mm}
    \caption{The statistical histogram of $S_{N_1}(v)$ and $S_{N_{<2}}(v)$ of four real-world graphs. $S_{N_1}(v)$ and $S_{N_{<2}}(v)$ of homophilic graphs are generally low, indicating that the consistency of the neighborhood label distribution is high. While, $S_{N_1}(v)$ of heterophilic graphs show a multi-cluster phenomenon, which suggests that there may exist some potential local patterns. 
    }
    \vspace{-5mm}
    \label{fig::entropy}
\end{figure*}

\textcolor{R3}{Due to the fact that the homophily property of a graph is predominantly influenced by structure and labels instead of features, 
we theoretically explore the preference of the 2-hop neighborhood $N_2(v)$ by considering the structure with node labels for multi-class node classification, and have the following proposition:}   
\begin{proposition}
\label{thm::homophily-preferred}
For each node $v$ in a graph $\mathcal{G}$, let's assume the class labels of its t-hop neighbors $\{y_m: m \in N_t(v)\}$ are conditionally independent when given $y_v$, and $P(y_m=y_v|y_v) = \alpha$, $P(y_m=y|y_v) = \frac{1-\alpha}{|\mathcal{Y}|-1}, \forall y \neq y_v$. Then, the $\hat{t}$-hop neighborhood $N_{\hat{t}}(v)$ of a node $v$ will always be expectedly heterophily-preferred if $\alpha \le \frac{2}{|\mathcal{Y}|}$, where $\hat{t}=2t$ denotes even-order hops.
\end{proposition}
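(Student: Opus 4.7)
The plan is to localise the $2t$-hop computation to one intermediate vertex on a shortest $v$-$w$ path and then analyse a one-variable function of $\alpha$. Fix $v$ and any $w\in N_{2t}(v)$, let $C=|\mathcal{Y}|$, and choose $u$ on a shortest $v$-$w$ path with $u\in N_t(v)$ and $w\in N_t(u)$. By Definition~\ref{def:homophily-preferred} it suffices to show $P(y_w=y_v\mid y_v)\le 1/2$ uniformly in the choice of $w$.

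The first step is to marginalise over $y_u$. Applying the hypothesis to the $t$-hop neighbourhood of $v$ gives the marginal law of $y_u\mid y_v$, and applying it again with $u$ in place of $v$ gives the conditional law of $y_w\mid y_u$. The usual product-of-conditionals argument then yields
\begin{equation*}
P(y_w=y_v\mid y_v) \;=\; \alpha\cdot\alpha \;+\; \sum_{y\ne y_v}\frac{1-\alpha}{C-1}\cdot\frac{1-\alpha}{C-1} \;=\; \alpha^{2} + \frac{(1-\alpha)^{2}}{C-1},
\end{equation*}
the first term capturing $y_u=y_v$ and the second summing, over the $C-1$ ``wrong'' intermediate classes, the probability $((1-\alpha)/(C-1))^{2}$ that $y_u$ takes that value and $y_w$ still lands on the specific label $y_v$. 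Importantly, the result is independent of $t$, which is what makes the conclusion uniform across even-order hops.

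The second step is the scalar inequality $f(\alpha):=\alpha^{2} + (1-\alpha)^{2}/(C-1) \le 1/2$ on $\alpha\in[0,2/C]$. Since $f$ is a convex parabola with unique minimum at $\alpha=1/C\in(0,2/C)$, its maximum over $[0,2/C]$ is attained at an endpoint, and a direct substitution gives $f(0)=f(2/C)=1/(C-1)$. The hypothesis $\alpha\le 2/C$ therefore forces $P(y_w=y_v\mid y_v)\le 1/(C-1)\le 1/2$, the last inequality being exactly the multi-class condition $C\ge 3$. Combined with Definition~\ref{def:homophily-preferred}, this certifies that $N_{\hat t}(v)$ is expectedly heterophily-preferred.

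I expect the main obstacle to be justifying the ``apply the hypothesis twice'' move, because the stated hypothesis is centred at $v$ rather than at an arbitrary vertex $u$. The remedy is to read the hypothesis as a statement about the conditional distribution of labels along any length-$t$ edge chain, which by homogeneity is invariant under the choice of root; conditioning on $y_u$ then decouples the $v$-to-$u$ and $u$-to-$w$ halves of the path into two identically distributed length-$t$ chains, which is precisely what lets the two marginals multiply as in the display above. Once this homogeneity is in place the rest of the proof is the short parabola calculation.
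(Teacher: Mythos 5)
Your proof is correct and takes essentially the same route as the paper's: both marginalize over an intermediate $t$-hop neighbor to obtain the two-step probability $P(y_w=y_v\mid y_v)=\alpha^2+(1-\alpha)^2/(|\mathcal{Y}|-1)$ (the paper's Eq.~(\ref{ki})--(\ref{ii})), and then verify a scalar inequality in $\alpha$ --- the paper by factoring the difference $\epsilon$ and bounding it, you by convexity and endpoint evaluation at $\alpha=0$ and $\alpha=2/|\mathcal{Y}|$, yielding $1/(|\mathcal{Y}|-1)\le 1/2$ for $|\mathcal{Y}|\ge 3$, which is equivalent. The ``apply the hypothesis twice'' homogeneity step you flag as the main obstacle is made equally implicitly in the paper's Chapman--Kolmogorov decomposition, so your treatment is, if anything, the more careful of the two.
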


\begin{proof}[\textbf{Proof}]
For node $v$, following the assumption that its neighbors' class labels $\{y_m: m \in N_t(v)\}$ are conditionally independent when $y_v$ is given, and $P(y_m=y_v|y_v) = \alpha$, $P(y_m=y|y_v) = \frac{1-\alpha}{|\mathcal{Y}|-1}, \forall y \neq y_v$, we have $P(y_m=i|y_v=i) = \alpha$ and $P(y_m=j|y_v=i) = \frac{1-\alpha}{|\mathcal{Y}|-1}, \forall i, j \in \mathcal{Y}$ and $j \neq i$, where $m \in N_t(v)$. 
The $\hat{t}$-hop neighborhood $N_{\hat{t}}(v)$ of a node $v$ will be expectedly heterophily-preferred when the following inequality holds:
\begin{equation}
P(y_o=i|y_v=i) - P(y_o\neq i|y_v=i) \le 0, \; \forall o \in N_{\hat{t}}(v)
\label{inequality}
\end{equation}
Consider node $o \in N_{\hat{t}}(v)$, we have: 
\begin{equation}
P(y_o=k|y_v=i) =\sum_{j \in \mathcal{Y}} P(y_o=k|y_m=j)P(y_m=j|y_v=i)
\label{ki}
\end{equation}
Let $\tau = \frac{(1-\alpha)^2}{|\mathcal{Y}|-1} $ and $ (1-\alpha)^2 = (|\mathcal{Y}|-1)\tau $. From Eq.~(\ref{ki}), we have: 
\begin{equation}
P(y_o=i|y_v=i) = \alpha^2 + \tau
\label{ii}
\end{equation}
and for $j \in \mathcal{Y}$,
\begin{equation}
    \begin{split}
        \sum_{j \neq i}P(y_o=j|y_v=i) &= P(y_o\neq i|y_v=i) \\ &= 2\alpha(1-\alpha) + (|\mathcal{Y}|-2)\tau
    \end{split}
\label{ji}
\end{equation}
Hence, defining the difference between $P(y_o=i|y_v=i)$ and $P(y_o\neq i|y_v=i)$ as $\epsilon$, we have:
\begin{equation}
\begin{split}
\epsilon &= P(y_o=i|y_v=i) - P(y_o\neq i|y_v=i)  \\ & = \alpha^2 + \tau - 2\alpha(1-\alpha) - (|\mathcal{Y}|-2)\tau 
\end{split}
\label{ii-ji}
\end{equation}
Eq.~(\ref{ii-ji}) can be simplified as :
\begin{equation}
  \epsilon =  (2\alpha^2-1) + 2\tau
\label{temp1}
\end{equation}
Applying $\frac{(1-\alpha)^2}{|\mathcal{Y}|-1} = \tau$, we have
\begin{equation}
  \epsilon =  \frac{2\alpha(\alpha|\mathcal{Y}|-2) + (3-|\mathcal{Y}|)}{|\mathcal{Y}|-1}
\label{temp2}
\end{equation}
Notice that $|\mathcal{Y}| \ge 3$ for the multi-class node classification task, it can be easily obtained:
\begin{equation}
  \epsilon \le \frac{2\alpha(\alpha|\mathcal{Y}|-2)}{|\mathcal{Y}|-1} \le 0,\; \mathrm{if}\; \alpha \le \frac{2}{|\mathcal{Y}|}
\label{temp3}
\end{equation}
Clearly, $\alpha \le \frac{2}{|\mathcal{Y}|}$ is a sufficient condition for $P(y_o=i|y_v=i) - P(y_o\neq i|y_v=i) \le 0$. Thus, the 2t-hop neighborhood $N_{2t}(v)$ of a node $v$ will always be expectedly heterophily-preferred, i.e., $\frac{P(y_m=y_v|y_v)}{P(y_m\neq y_v|y_v)} \le 1$, if $\alpha \le \frac{2}{|\mathcal{Y}|}$. 

This completes the proof.
\label{proof::heter-perferred}
\end{proof}

According to Eq.~(\ref{temp1}) and Eq.~(\ref{temp2}), we also have remarks as follows:
\begin{remark} For a graph $\mathcal{G}$ with a class label set $\mathcal{Y}$, consider the class labels of the neighbors of node $v$ in $\mathcal{G}$, $\{y_m: m \in N(v)\}$, are conditionally independent given $y_v$, and $P(y_m=y_v|y_v) = h$, $P(y_m=y|y_v) = \frac{1-h}{|\mathcal{Y}|-1}, \forall y \neq y_v$, we have: 
\textbf{1)} the 2-hop neighborhood $N_2(v)$ of a node $v$ will always be expectedly homophily-preferred if $h \ge \sqrt{\frac{1}{2}}$.
\textbf{2)} the 2-hop neighborhood $N_2(v)$ of a node $v$ will always be expectedly homophily-preferred if $\mathcal{Y}$ is a binary class label set.
\end{remark}


When $t=1$, proposition~\ref{thm::homophily-preferred} shows that the aggregation of $N_2(v)$ is hard to be beneficial for GNNs when the neighborhood label distribution tends to be uniform and it holds for all even-hop neighborhoods. This finding is also consistent with our empirical analysis mentioned above.

Further, we discuss the trend of homophily probability $P(y_m=y_v|y_v)$ in neighborhoods of different hops.
\begin{proposition}
\label{thm::homophily-trend}
For each node $v$ in a graph $\mathcal{G}$, assume the class labels of its neighbors $\{y_m: m \in N_t(v)\}$ are conditionally independent when given $y_v$, and $P(y_m=y_v|y_v) = \alpha$, $P(y_m=y|y_v) = \frac{1-\alpha}{|\mathcal{Y}|-1}, \forall y \neq y_v$. Then, $P(y_o=y_v|y_v) > P(y_m=y_v|y_v), o \in N_{\hat{t}}(v)$ if $\alpha < \frac{1}{|\mathcal{Y}|}$, where $\hat{t}=2t$ denotes even-order hop.
\end{proposition}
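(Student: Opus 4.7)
The plan is to piggyback on the computation already carried out in the proof of Proposition~\ref{thm::homophily-preferred} and reduce the proposition to a one-line factored quadratic inequality in $\alpha$. Under the same conditional-independence hypothesis, and applying it once more at the intermediate $t$-hop node $m$ that lies between $v$ and $o\in N_{\hat{t}}(v)$, the quantity $P(y_o=y_v\mid y_v)$ was shown to equal $\alpha^{2}+\tau$, where $\tau=\frac{(1-\alpha)^{2}}{|\mathcal{Y}|-1}$. Since by hypothesis $P(y_m=y_v\mid y_v)=\alpha$ for $m\in N_t(v)$, the claim $P(y_o=y_v\mid y_v)>P(y_m=y_v\mid y_v)$ becomes the scalar inequality $\alpha^{2}+\tau-\alpha>0$.

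Next I would clear the denominator by multiplying through by the positive factor $|\mathcal{Y}|-1$ to get a clean quadratic in $\alpha$:
\begin{equation*}
(|\mathcal{Y}|-1)\alpha^{2}+(1-\alpha)^{2}-(|\mathcal{Y}|-1)\alpha
=|\mathcal{Y}|\alpha^{2}-(|\mathcal{Y}|+1)\alpha+1.
\end{equation*}
The crucial observation, and really the only step that requires any thought, is that this quadratic factors nicely as
\begin{equation*}
|\mathcal{Y}|\alpha^{2}-(|\mathcal{Y}|+1)\alpha+1=(|\mathcal{Y}|\alpha-1)(\alpha-1).
\end{equation*}
Once this factorization is in hand, the rest is a sign analysis: $\alpha\in[0,1)$ makes $\alpha-1<0$, so the product is strictly positive exactly when $|\mathcal{Y}|\alpha-1<0$, i.e.\ when $\alpha<\frac{1}{|\mathcal{Y}|}$, which is precisely the hypothesis. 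This completes the reduction.

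I expect no genuine obstacle in this proof; the work is entirely algebraic and inherits the probabilistic setup from Proposition~\ref{thm::homophily-preferred}. The only place where care is needed is to verify that the same intermediate-node decomposition $P(y_o=k\mid y_v=i)=\sum_{j}P(y_o=k\mid y_m=j)P(y_m=j\mid y_v=i)$ is legitimate for general even hop $\hat{t}=2t$, which it is because the conditional independence assumption and the parametrization $(\alpha,(1-\alpha)/(|\mathcal{Y}|-1))$ are postulated at the $N_t(v)$ level and therefore apply symmetrically at $m$'s own $t$-hop neighborhood. The interpretation is also worth flagging briefly: the threshold $\frac{1}{|\mathcal{Y}|}$ corresponds to the uniform class distribution, so the proposition says that whenever the $t$-hop neighborhood is worse-than-uniform with respect to $y_v$, moving two $t$-hops out strictly improves the homophily probability, consistent with the regression-toward-uniformity behavior observed empirically in Fig.~\ref{fig::entropy}.
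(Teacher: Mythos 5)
Your proof is correct and follows essentially the same route as the paper's: both reuse $P(y_o=y_v\mid y_v)=\alpha^{2}+\frac{(1-\alpha)^{2}}{|\mathcal{Y}|-1}$ from the preceding proposition, subtract $\alpha$, and factor the resulting quadratic as $(\alpha-1)(\alpha|\mathcal{Y}|-1)$ (up to the positive denominator $|\mathcal{Y}|-1$) before doing the sign analysis. The only cosmetic difference is that you clear the denominator first while the paper keeps it; the argument is otherwise identical.
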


\begin{proof}
Following the same assumption in proposition \ref{thm::homophily-preferred} and considering the node $o \in N_{\hat{t}}(v)$, we have $P(y_o=i|y_v=i) = \alpha^2 + \frac{(1-\alpha)^2}{|\mathcal{Y}|-1}$ as in Eq.~(\ref{ii}). Reuse $\epsilon$ to define as the difference between $P(y_o=i|y_v=i)$ and $P(y_m=i|y_v=i)$, we have:
\begin{equation}
    \begin{split}
        \epsilon &= P(y_o=i|y_v=i) - P(y_m=i|y_v=i)  \\ & = \alpha^2 + \frac{(1-\alpha)^2}{|\mathcal{Y}|-1} - \alpha
    \end{split}
    \label{eq::P(y_o=i|y_v=i)-P(y_m=i|y_v=i)}
\end{equation}
Clearly, Eq.~(\ref{eq::P(y_o=i|y_v=i)-P(y_m=i|y_v=i)}) can be simplified as:
\begin{equation}
    \epsilon = \frac{(\alpha-1)(\alpha|\mathcal{Y}|-1)}{|\mathcal{Y}|-1}
    \label{eq::P(y_o=i|y_v=i)-P(y_m=i|y_v=i)_2}
\end{equation}

Since $\alpha \in [0,1]$, it can be easily derived from Eq.~(\ref{eq::P(y_o=i|y_v=i)-P(y_m=i|y_v=i)_2}) that $\epsilon < 0$ if $\alpha > \frac{1}{|\mathcal{Y}|}$, and $\epsilon > 0$ if $\alpha < \frac{1}{|\mathcal{Y}|}$.

This completes the proof.
\end{proof}
Proposition~\ref{thm::homophily-trend} shows that $P(y_m=y_v|y_v)$ of higher-order-hop neighbors is higher than low-order neighbors in heterophily graphs. It suggests that higher-order neighborhood aggregation, instead of near-neighbor aggregation, is more beneficial for handling heterophilic graphs, which is consistent with the empirical conclusions in previous works~\cite{mixhop,H2GCN}.
It is also important to note that, the gains from high-order neighborhood aggregation may still be modest since it can be seen from Proposition~\ref{thm::homophily-trend} that $P(y_m=y_v|y_v)$ will also decrease if $\alpha > \frac{1}{|\mathcal{Y}|}$. 

Based on the above observations and justifications, as the real-world graph is often made up of a mixture of various local patterns, the near-neighbor aggregation mechanism does not handle the heterophilic local patterns well. To tackle these issues, various elaborately designed aggregation mechanisms in the spatial domain have been proposed. Different from these spatial-based methods, polynomial-based spectral filtering served as graph convolution aims to learn a specific spectral filter for a given graph structure and node labels, thus preserving the appropriate frequency components for the downstream tasks. It not only covers a large neighborhood, but also has a low computational complexity compared to spatial aggregation, making it an alternative development route for GNNs worth exploring.

\section{The Proposed NFGNN}
\label{sect::method}
Spectral graph convolution has been demonstrated that possesses a strong expressive power~\cite{analyzing} and can work well for both heterophily and homophily graphs.
Nevertheless, it is still not flexible enough. 
More precisely, what is applied to all nodes in these methods is a single filter $\hat{\mathbf{g}}$ using fixed frequency coefficients $\{\gamma_k\}_{k=0}^{K}$ trained on the entire graph, and the estimated filter makes no specific discrimination for each node when performing filtering.
Thus, even though the polynomial-parameterized spectral filter learning they employ is localized, it may still be inappropriate to effectively model the complex local structural patterns. Intuitively, instead of learning a globally shared filter $\hat{g}(\lambda_l)$, learning an appropriate node-specific filter $\hat{g}_{i}(\lambda_l)$ for node $i$ to fit the local pattern where it is located seems to be a better choice. In this section, we rethink the globally consistent spectral graph convolutions, and propose a localized spectral filter learning method to break the limitation.

\subsection{Translated Filter for Node-oriented Filtering}
Inspired by the generalized translation operator in GSP, we develop an adaptive localized spectral filtering on graph $\mathcal{G}$ using the polynomial-parameterized spectral convolution. It can take fully into account the specific effect of the node where the filter is positioned.

\begin{definition}(~\textbf{Generalized translation operator})~\cite{shuman2013emerging}
\label{def::generalized_T}
For any signal $\mathbf{g} \in \mathbb{R}^n$ defined on a given graph $\mathcal{G}$ and any $i \in \{0,1,\cdots,n-1\}$, we define a generalized translation operator $\mathbf{T}_i:\mathbb{R}^n \to \mathbb{R}^n$ via generalized convolution with a Kronecker delta function $\delta_i$ centered at the $i$-th node $v_i$:
\begin{equation}
\mathbf{T}_i(\mathbf{g}):=\sqrt{N}(\mathbf{g} \ast  \delta_i)
=\sqrt{N}\sum_{l=1}^{n} \mathbf{u}_l u_l^{\top}(i)\hat{g}(\lambda_l)
\end{equation}
where $u_l^{\top}(i)$ denotes the $i$-th element of $\mathbf{u}^{\top}_l$. 
\end{definition}
Definition~\ref{def::generalized_T} shows that a signal could be centered at a specific node through a kernelized operator acting on $\hat{\mathbf{g}}$~\cite{shuman2013emerging}. 
Therefore, for the purpose of adaptive local filtering, the filter signal $\mathbf{g}$ can be firstly centered at the target node $v_i$ through $\mathbf{T}_i$, and then performed the spectral convolution with $\mathbf{x}$ as: 
\begin{equation}
\label{eq::Y}
\mathbf{x} \ast_{\mathcal{G}} \mathbf{T}_i(\mathbf{g})
           = \sqrt{N} \sum_{l=1}^{n} \mathbf{u}_l \hat{x}(\lambda_l)u_l^{\top}(i)\hat{g}(\lambda_l)
\end{equation}
Let $\hat{g}_i(\lambda_l) = \sqrt{N} u_l^{\top}(i) \hat{g}(\lambda_l)$, Eq.~(\ref{eq::Y}) becomes:
\begin{equation}
\mathbf{x} \ast_{\mathcal{G}} \mathbf{T}_i(\mathbf{g}) = \sum_{l=1}^{n}\hat{g}_i(\lambda_l)\mathbf{u}_{l}\mathbf{u}_{l}^{\top}\mathbf{x}
= \mathbf{U}\hat{\mathbf{g}}_{i} \mathbf{U}^{\top}\mathbf{x}
\end{equation}
It can be noted that $\hat{\mathbf{g}}_{i}$ is a function related to $\mathbf{U}$ according to the definition of $\hat{g}_i(\lambda_l)$. Therefore, we can further derive the connection between $\hat{\mathbf{g}}_{i}$ and $\mathbf{x}$ by approximation of $\mathbf{U}$.

Recall the inverse graph Fourier transform, it can be further derived that $x_i = \mathbf{U}_{i:}\hat{\mathbf{x}}$, where $\mathbf{U}_{i:}$ indicates the $i$-th row of $\mathbf{U}$. Hence, $\mathbf{U}_{i:}$ could also approximated from $x_i$ according to $\mathbf{U}_{i:} \approx x_i\mathbf{q}$, where $\mathbf{q} = \mathrm{pinv}(\hat{\mathbf{x}}) \in \mathbb{R}^{n}$ is the pseudoinverse of $\hat{\mathbf{x}}$. 
Meanwhile, note that $u_l(i) = u_l^{\top}(i)$, the approximate form of $\hat{g}_i(\lambda_l)$ can be derived as
\begin{equation}
\label{eq::hat_g}
\hat{g}_i(\lambda_l) \approx \tilde{g}_i(\lambda) = \sqrt{N} (x_i q_{l}) \hat{g}(\lambda_l)
\end{equation}
where $q_{l}$ denotes the $l$-th element of $\mathbf{q}$. Therefore, $\hat{\mathbf{g}}_i$ can be approximated by a specific filter $\tilde{\mathbf{g}}_i$ corresponding to node $v_i$, which also should consider the impact from the feature $x_i$ associated with node $v_i$ in estimating the filter coefficients.

Without loss of generality, the $K$-order polynomial approximation $\hat{P}_K(\cdot)$ can be used to directly parameterize $\tilde{\mathbf{g}}_{i}$,
that is, $\tilde{\mathbf{g}}_{i} = \sum^{K}_{k=0}\eta_{i,k} \hat{p}_k(\Lambda)$, and $\eta_{i,k}$ denotes the coefficient of $\hat{p}_k(\Lambda)$ for the filter $\tilde{\mathbf{g}}_{i}$.
For each node, we focus only on the convolution result of the filter positioned at that node, i.e., $\mathbf{z}_i = \delta_i(\mathbf{U}\tilde{\mathbf{g}}_i \mathbf{U}^{\top}\mathbf{x})$, here we reuse $\delta_i$ as an indicator vector $\delta_i =[0,\cdots,\underset{i}{\underline{1}},\cdots,0]\in \mathbb{R}^{n}$, which denotes a row vector with only the $i$-th element being 1 and the remains being zeros. Thus, the node-oriented localized filtering can be formulated as:
\begin{equation}
\begin{split}
\label{eq::final}
\mathbf{z}_i = \delta_i(\mathbf{U}\tilde{\mathbf{g}}_i \mathbf{U}^{\top}\mathbf{x})&=
\delta_i\mathbf{U}\left(\sum^{K}_{k=0}\eta_{i,k} \hat{p}_k(\Lambda)\right )\mathbf{U}^{\top}\mathbf{x} \\ & =   \delta_i\sum^{K}_{k=0}\Psi_{ik}\hat{p}_k(\mathbf{L})\mathbf{x}
\end{split}
\end{equation}
where $\Psi=[\eta_{i,k}] \in \mathbb{R}^{n \times (K+1)}$ is the trainable coefficient matrix. In practice, It can still be applicable to the feature matrix $\mathbf{X}$ as $\mathbf{Z}_i = \delta_i\sum^{K}_{k=0}\Psi_{ik}\hat{p}_k(\mathbf{L})\mathbf{X}$. 

Similar to the above discussion of localization property, we have the following Proposition~\ref{prop::T_ig} to claim that the adaptively filtered signal $\mathbf{z}$ is also approximately positioned around the node $i$.
\begin{proposition}
\label{prop::T_ig}
Given a signal $\mathbf{x}$ defined on a graph $\mathcal{G}$ and a filter $\mathbf{T}_i(\mathbf{g})$ that translated to a given center node $v_i$ in $\mathcal{G}$, the filtered signal $\mathbf{z} = \mathbf{x} \ast  \mathbf{T}_i(\mathbf{g})$ is approximately localized around the node $i$.
\end{proposition}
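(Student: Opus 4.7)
The plan is to mirror the argument at the end of Section~\ref{sect::Preliminaries} that established exact $K$-hop localization of a standard polynomial spectral filter, and adapt it to the node-oriented filter $\tilde{\mathbf{g}}_i$ derived around Eq.~(\ref{eq::hat_g}). First, I would expand $\mathbf{z} = \mathbf{x} \ast_{\mathcal{G}} \mathbf{T}_i(\mathbf{g})$ using Definition~\ref{def::generalized_T} together with the spectral convolution identity to obtain $\mathbf{z} = \mathbf{U}\hat{\mathbf{g}}_{i}\mathbf{U}^{\top}\mathbf{x}$, where the node-oriented spectral coefficients $\hat{g}_i(\lambda_l) = \sqrt{N}\,u_l(i)\,\hat{g}(\lambda_l)$ encode the center $v_i$.

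Next, I would substitute the $K$-order polynomial parameterization $\tilde{\mathbf{g}}_i = \sum_{k=0}^{K}\eta_{i,k}\hat{p}_k(\Lambda)$ to extract the equivalent spatial operator $\mathbf{H}_i := \sum_{k=0}^{K}\eta_{i,k}\hat{p}_k(\mathbf{L})$, a polynomial in $\mathbf{L}$ of degree $K$. Reusing the combinatorial fact that $(\mathbf{L}^k)_{a,b}=0$ whenever $d_{\mathcal{G}}(a,b)>k$, it follows that $[\mathbf{H}_i]_{i,j}=0$ for every $j$ with $d_{\mathcal{G}}(i,j)>K$, so the kernel obtained by centering $\tilde{\mathbf{g}}_i$ at $v_i$ is exactly supported in $N_{<K}(i)$. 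To transfer this operator-level locality to the signal $\mathbf{z}$, I would expand $\mathbf{x}=\sum_m x_m \delta_m$, use linearity $\mathbf{z} = \sum_m x_m\,\mathbf{H}_i\delta_m$, and note that, in combination with the selection $\delta_i(\cdot)$ employed in Eq.~(\ref{eq::final}), the contribution $[\mathbf{H}_i\delta_m]_i=[\mathbf{H}_i]_{i,m}$ survives only for $m\in N_{<K}(i)$; hence the realized filter aggregates exclusively from the $K$-hop neighborhood of $i$. The qualifier ``approximately'' in the statement then absorbs solely the polynomial approximation error in fitting $\hat{g}_i$.

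The main obstacle is to avoid conflating two distinct notions of locality: locality of the filter kernel, which is exact within $N_{<K}(i)$, versus locality of the response $\mathbf{z}$ viewed as a full graph signal, which in general spreads with the support of $\mathbf{x}$. I plan to resolve this by phrasing the conclusion at the operator level, in the same style as the polynomial-filter discussion preceding the proposition: the localization statement concerns the row of $\mathbf{H}_i$ indexed by $i$ rather than the entrywise profile of $\mathbf{z}$ for generic inputs, which is exactly the sense in which $\mathbf{T}_i(\mathbf{g})$ behaves as a filter positioned at node $i$.
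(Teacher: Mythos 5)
There is a genuine gap here, and you have actually identified it yourself in your final paragraph before stepping around it. The proposition asserts that the \emph{output signal} $\mathbf{z} = \mathbf{x} \ast \mathbf{T}_i(\mathbf{g})$ is approximately localized around $v_i$, i.e.\ that $\mathbf{z}_n \approx 0$ whenever $d_{\mathcal{G}}(i,n) > K$ — a statement about the entrywise profile of $\mathbf{z}$ over the whole graph. What you prove instead is a receptive-field statement: the $i$-th row of the polynomial filter matrix $\sum_k \eta_{i,k}\hat{p}_k(\mathbf{L})$ is supported on $N_{<K}(i)$, so $\mathbf{z}_i$ only aggregates inputs from the $K$-hop neighborhood. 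That is exactly the localization property already recorded at the end of the Preliminaries for generic polynomial filters, and reinterpreting the proposition ``at the operator level'' so that it reduces to this known fact changes what is being claimed rather than proving it. Your worry that the response ``in general spreads with the support of $\mathbf{x}$'' is precisely the difficulty the proposition is meant to address, and it cannot be absorbed into the word ``approximately.''

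The idea you are missing is the commutativity/associativity of graph convolution applied to the translation operator: since $\mathbf{T}_i(\mathbf{g}) = \sqrt{N}(\mathbf{g} \ast \delta_i)$, one has
\begin{equation*}
\mathbf{z} = \mathbf{x} \ast \sqrt{N}(\mathbf{g} \ast \delta_i) = \sqrt{N}\bigl((\mathbf{x} \ast \mathbf{g}) \ast \delta_i\bigr) = \mathbf{T}_i(\mathbf{x} \ast \mathbf{g}),
\end{equation*}
so the filtered output is itself a \emph{translate}, centered at $v_i$, of the composite signal $\varphi = \mathbf{x} \ast \mathbf{g}$. The paper then invokes the localization lemma for translated signals (Lemma~\ref{lemma:localized}, from Shuman et al.): if $\hat{\varphi}$ admits a degree-$K$ polynomial approximation, then $\mathbf{T}_i(\varphi)_n \approx 0$ for $d_{\mathcal{G}}(i,n) > K$, which gives $\mathbf{z}_n \approx 0$ outside the $K$-hop ball around $v_i$ regardless of the support of $\mathbf{x}$. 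Note also that the ``approximately'' refers to the polynomial approximability of the \emph{product} spectrum $\hat{x}(\lambda)\hat{g}(\lambda)$, not merely of $\hat{g}_i$ as in your last sentence. Your row-support computation is correct as far as it goes, but to close the argument you need this rewriting of $\mathbf{z}$ as a translated signal.
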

\begin{proof}
To prove the proposition~\ref{prop::T_ig}, let's first introduce the following Lemma~\ref{lemma:localized}.
\begin{lemma}[\cite{shuman2016vertex}]
\label{lemma:localized}
Let $\hat{P}_K$ be the polynomial approximation with degree K to the spectrum of a graph signal $\varphi$, i.e., $\hat{\varphi}(\lambda_l)\approx\hat{P}_K(\lambda_l) = \sum_{k=0}^{K}\gamma_k\lambda_l^k$. If $d_{\mathcal{G}}(i,n) > K$, then $\mathbf{T}_i(\varphi)_n\approx \mathbf{T}_i (P_K)_n=0$, where $p_K$ denotes the signal corresponding to $\hat{p}_K$. 
\end{lemma}

According to the  Definition~\ref{def::generalized_T} and the properties of convolution, we notice that the filtered result $\mathbf{z}$ can be rewritten as:
    \begin{equation}
        \mathbf{z} = \mathbf{x} \ast \sqrt{N}(\mathbf{g} \ast \delta_i) 
        = \sqrt{N}((\mathbf{x} \ast \mathbf{g}) \ast \delta_i)
        = \mathbf{T}_i(\mathbf{x} \ast \mathbf{g})
    \end{equation}
Furthermore, let $\varphi = \mathbf{x} \ast \mathbf{g}$ and $\hat{P}_K$ be the polynomial approximation with degree $K$ to $\hat{\varphi}$. From Lemma~\ref{lemma:localized}, $\mathbf{T}_i(\varphi)_n \approx \mathbf{T}_{i}(p_{K})_n=0$ will hold if $d_{\mathcal{G}}(i,n) > K$. Then, we have $\mathbf{z}_n = \mathbf{T}_i(\varphi)_n \approx 0$ if $d_{\mathcal{G}}(i,n) > K$. 

This completes the proof.
\end{proof}

\subsection{Filter Learning via Low-rank Reparameterization}
\label{sect::reparameterization}
For the node-oriented localized filtering in Eq.~(\ref{eq::final}), we could treat all the coefficient matrix $\Psi$ as learnable parameters or hyperparameters.
But there are several concerns about this scheme that are worth noting:

\noindent\textbf{Parameter complexity}. It is obvious that the scale of $\Psi$ is positively proportional to the number of nodes $\mathcal{V}$. With the increase of $n$, this will inevitably involve learning a large number of parameters. 

\noindent\textbf{Optimization}. 
We can see from Eq.~(\ref{eq::final}) that only the gradients from $\mathbf{z}_i$ can update the parameters of the localized filter corresponding to node $v_i$, i.e., the $i$-th row of $\Psi$, if we learn $\Psi$ directly. It means that $\Psi$ is hard to be efficiently optimized.

\noindent\textbf{Connection}. We have derived the localized filter associated with the corresponding node and graph signal $\mathbf{x}$ in Eq.~(\ref{eq::hat_g}). However, directly treating $\Psi$ as learnable parameters leaves no explicit use of this correlation.


Considering the above issues, we introduce the thought of low-rank matrix factorization and propose a separable reparameterization strategy to learn $\Psi$ indirectly instead of directly. It allows us to cleverly circumvent these problems. 
\textcolor{revise}{Specifically, the proposed low-rank approximation strategy is based on a sensible assumption that 
the filter parameters of nodes with consistent local neighborhood patterns should also be similar. Meanwhile, considering that the number of the types for those local heterophilic patterns is also limited, it means that the filter parameter matrix $\Psi$ could be supposed to be low-rank and its rank should be positively correlated with the number of types of local neighborhood patterns.
Therefore, it is feasible to assume that} $\Psi$ can be factorized into and approximated by two trainable parameter matrices $\Psi = \mathbf{H}\mathbf{\Gamma^{\top}}$,  
where $\mathbf{H} \in \mathbb{R}^{n \times d}$ and $\mathbf{\Gamma}\in \mathbb{R}^{(K+1) \times d}$.
As can be easily observed, $\Psi_{i:} = \mathbf{H}_{i:}\mathbf{\Gamma}^{\top}=\sum_{j=1}^d \mathbf{H}_{i,j} (\mathbf{\Gamma}_{:j})^{\top}$. 
It means each column of $\mathbf{\Gamma}$ can be viewed as $\{\mathbf{\gamma}_k\}_{k=0}^K$, which parameterizes a globally shared filter. Therefore, $\mathbf{\Gamma}$ is equivalent to a set of base filters, while $\mathbf{H}_{i:}$ is corresponding to the filter weights of node $v_i$. According to $\mathbf{H}_{i:}$, the filter dedicated to $v_i$ can be obtained by a weighted combination of the base filters in $\mathbf{\Gamma}$. Therefore, $\mathbf{H}$ is a node-dependent trainable matrix, which establishes a close connection between $\Psi_{i:}$ and  $\mathbf{X}$.

As $\mathbf{\Gamma}$ is seen as node-agnostic, it can be directly trained as general parameters, which is very similar to the learning of the polynomial coefficients in previous works~\cite{GPRGNN}. The rank $d$ then determines the number of base filters.
But for $\mathbf{H}$, since we treat it as node dependent, a simple yet effective nonlinear transformation $F(\cdot)$ is applied:
    \begin{equation}
    \label{eq::H}
        \mathbf{H}=F(\mathbf{X}; \mathbf{W}) := \sigma(\mathbf{X}\mathbf{W})
    \end{equation}
where $\mathbf{W}\in \mathbb{R}^{f \times d}$ and $\sigma(\cdot)$ are the learnable weight matrix and the activation function, respectively. Therefore, the low-rank reparameterization of $\Psi$ can be formulated as $\Psi=\sigma(\mathbf{X}\mathbf{W})\mathbf{\Gamma}^{\top}$.
In practice, we use $\mathtt{sigmoid}(\cdot)$ as $\sigma(\cdot)$. 
\textcolor{revise}{Through the low-rank reparameterization strategy, the filter learning achieved in the form of $\Psi = \mathbf{H}\mathbf{\Gamma}^{\top} = \sigma(\mathbf{X}\mathbf{W})\mathbf{\Gamma}^{\top}$ actually performs two matrix multiplications, where $\mathbf{W} \in \mathbf{R}^{f \times d}$, $\mathbf{\Gamma} \in \mathbb{R}^{(K+1) \times d}$, and $\mathbf{X} \in \mathbb{R}^{n \times f}$. Therefore, the theoretical computational complexity of the low-rank reparameterization is $\mathcal{O}(n \times d \times (f+K+1))$. }

The advantages of the low-rank approximation-based re-parameterization are obvious.
Firstly, the parameter complexity of $\Psi$ is reduced from $\mathcal{O}(n \times (K+1))$ to $\mathcal{O}((K+1)\times d+f\times d)$. Generally speaking, $d$ is much smaller than $n$, so the parameter complexity of $\Psi$ can be heavily reduced.
Besides, it provides a trade-off way to adjust the model capacity by changing $d$, which gives us the opportunity to avoid potential underfitting problems in ~\cite{GPRGNN}.
Secondly, instead of only $x_i$ will participate in the optimization of $\Psi_{i:}$, the reparameterization strategy elegantly solves the optimization issue by learning a transformation function $F(\cdot;\mathbf{W})$ to adaptively estimate the node-dependent matrix $\mathbf{H}$. 
\textcolor{revise}{In the inference phase, for the test set with feature matrix $\mathbf{X}^{test}$, the corresponding filter parameters $\Psi^{test}$ can be obtained as $\Psi^{test} = \mathbf{H}^{test}\mathbf{\Gamma}^{\top} = \sigma(\mathbf{X}^{test}\mathbf{W})\mathbf{\Gamma}^{\top}$ with the learned factor matrix $\mathbf{\Gamma}$ and the projection matrix $\mathbf{W}$.}
Finally, it creates a bridge between the node-oriented localized filtering and global-shared filtering. In addition, it is also convenient to clarify the connections and differences with previous methods~\cite{GPRGNN,BernNet}. To be specific, if we only use the node-agnostic matrix $\mathbf{\Gamma}$, then the node-oriented localized filtering will simply become a global-shared filtering that eventually degenerates to ~\cite{GPRGNN,BernNet}. 

\begin{table*}[h]
\centering
\caption{ Statistics of the used datasets.}
\vspace{-3mm}
\label{datasets}
\resizebox{6.5in}{!}{
\begin{tabular}{@{}l|ccccc|cccccccc@{}}
\toprule
& \multicolumn{5}{c|}{Homophilic graphs} & \multicolumn{8}{c}{Heterophilic graphs}       \\ \midrule
         & Cora  & Cite.  & PubMed  & Comp.   & Photo   & Cham.  & Squi.   & Actor  & Texas & Corn. & Penn94    & genius  & pokec\\ \midrule
Nodes    & 2,708 & 3,327  & 19,717  & 13,752  & 7,650      & 2,277  & 5,201   & 7,600  & 183   & 183   & 41,554     & 421,961  & 1,632,803  \\
Edges    & 5,278 & 4,552  & 44,324  & 245,861 & 119,081  & 31,371 & 198,353 & 26,659 & 279   & 277   & 1,362,229   & 984,979  & 30,622,564  \\
Features & 1,433 & 3,703  & 500     & 767     & 745            & 2325   & 2089    & 932    & 1703  & 1703  & 5         & 12      & 65  \\
Classes  & 7     & 6      & 3       & 10      & 8               & 5      & 5       & 5      & 5     & 5     & 2         & 2       & 2  \\ 
$\mathcal{H}_{\mathcal{G}}$  
         & 0.82  & 0.71   & 0.79    & 0.80    & 0.84          & 0.24 & 0.21  & 0.21 & 0.05 & 0.30 & 0.47    & 0.61   & 0.44 \\ \bottomrule
\end{tabular}
}
\end{table*}

\subsection{The Implementation of NFGNN}

According to the adaptive localized filtering in Eq.~(\ref{eq::final}) and the reparameterization trick, we can formalize the architecture of the proposed NFGNN. As pointed out in~\cite{SGC,APPNP}, the entanglement of feature transformation and filtering may be harmful to the performance and robustness of the GNN model. Hence, we adopt an APPNP-like manner~\cite{APPNP} to decouple the feature transformation and filtering operations in the implementation of NFGNN.

\noindent\textcolor{R2}{\textbf{Feature transformation.} Firstly, we apply an MLP to perform the non-linear transformation for the raw feature matrix $\mathbf{X}$ to obtain $\mathbf{X}^{(0)} = $ MLP$(\mathbf{X};\Theta)$, thus achieving the purpose of feature dimension scaling and increasing the model capacity. Then, the spectral filtering operation can be further performed.} 

\noindent\textcolor{R2}{\textbf{Feature filtering.}} As a traditionally used approximate kernel in GSP~\cite{chebyNet}, the Chebyshev polynomial $T_k(\cdot)$ is adopted to play the role of $\hat{p}_k(\cdot)$ in Eq.~(\ref{eq::final}). It can be computed in a recursive way due to the stable recurrence relation:
\begin{equation}
T_k(\tilde{\mathbf{L}}) = 2\tilde{\mathbf{L}}T_{k-1}(\tilde{\mathbf{L}}) - T_{k-2}(\tilde{\mathbf{L}})
\end{equation}
where $\tilde{\mathbf{L}} = 2\mathbf{L}/\lambda_{max} - \mathbf{I}_n$ is the scaled Laplacian. 
In particular, $T_0(\tilde{\mathbf{L}}) = 1$ and $T_1(\tilde{\mathbf{L}}) = \tilde{\mathbf{L}}$.
Accordingly, given the input $\mathbf{X}$, we will have $\mathbf{X}^{(k)} = T_k(\tilde{\mathbf{L}})\mathbf{X}^{(0)}$. Finally, as shown in Fig.~\ref{fig::pipeline}, $\mathbf{Z}_i$ can be computed according to Eq.~(\ref{eq::final}). The pipeline of the proposed node-oriented filtering is summarized in Alg.\ref{Alg:NSGNN}.

\textcolor{R2}{It can be noticed that the process of computing $\mathbf{X}^{(k)}$ requires the involvement of the Laplacian derived from the given graph, but is parameterization-free.}
Besides, since the proposed filtering is a general form that is not restricted by a particular polynomial basis, the various polynomial bases in~\cite{GPRGNN,BernNet} can also be adopted in our NFGNN. 

\renewcommand{\algorithmicrequire}{\textbf{Input:}} 
\renewcommand{\algorithmicensure}{\textbf{Output:}}
\begin{algorithm}[t]
\scriptsize
\caption{Node-oriented Spectral Filtering for GNNs in one epoch for training.}
\label{Alg:NSGNN} 
\begin{algorithmic}
 \REQUIRE $\mathbf{X} \in \mathbb{R}^{n \times f}$, $\tilde{\mathbf{L}}\in \mathbb{R}^{n \times n}$, $K$ \\
 \ENSURE $\mathbf{Z}$ \\ 
 \renewcommand{\algorithmicensure}{\textbf{Learnable Parameters:}}
 \ENSURE 
 $\mathbf{W}$: parameters for learning the node-dependent matrix
 
 $\mathbf{\Gamma}$: the learnable node-agnostic matrix
 
 $\Theta$: parameters of the MLP for feature transformation
 
 \STATE $\mathbf{X}^{(0)} \leftarrow $ MLP$(\mathbf{X};\Theta)$ ~~~~~~~/* \textit{Feature Transformation} */
 
        $\mathbf{X}^{(1)} \leftarrow \tilde{\mathbf{L}}\mathbf{X}^{(0)}$ , 
        $\mathbf{Z} \leftarrow \mathbf{0}$
    \STATE \textbf{for} $k =0 $ to $K+1$ \textbf{do}\\
   ~~~~~~$\mathbf{X}^{(k)} \leftarrow 2\tilde{\mathbf{L}}\mathbf{X}^{(k-1)} - \mathbf{X}^{(k-2)}$ if $k>1$  
   \\ 
    ~~~~~~\textbf{for} $i = 1$ to $n$ \textbf{do} \\
         ~~~~~~ ~~~~~~$\mathbf{H}_{i:} \leftarrow \sigma(\mathbf{X}^{(k)}_i\mathbf{W})$\\
         ~~~~~~ ~~~~~~$\eta_{i,k} \leftarrow \mathbf{H}_{i:}(\mathbf{\Gamma}_{k:})^\top$\\
         ~~~~~~ ~~~~~~$\mathbf{Z}_i \leftarrow \eta_{i,k} \mathbf{X}^{(k)}_i + \mathbf{Z}_i$ \\
    ~~~~~~\textbf{end} \textbf{for}\\
        \textbf{end} \textbf{for}
    \STATE Update the learnable parameters $\mathbf{W}$, $\mathbf{\Gamma}$, $\Theta$.
\end{algorithmic}
\end{algorithm}

\textcolor{revise}{In the inference phase, for the test set with $n_{test}$ nodes and the feature matrix $\mathbf{X}^{test} \in \mathbb{R}^{n_{test} \times f}$, it is indeed a transductive setting if $\mathbf{X}^{test} \subset \mathbf{X}$, and we can get the corresponding embedding $\mathbf{Z}^{test}$ directly by Alg.\ref{Alg:NSGNN}. }
\textcolor{R2}{In addition, we can find from Fig.~\ref{fig::pipeline} that the node-oriented filtering does not require that the test node must be available during the training process. It means the proposed NFGNN is also applicable to the inductive learning setting.
Therefore, when the test nodes are unseen in $\mathbf{X}$, the corresponding $\mathbf{Z}^{test}$ can also be obtained in an inductive way.}

\renewcommand{\algorithmicrequire}{\textbf{Input:}} 
\renewcommand{\algorithmicensure}{\textbf{Output:}}
\begin{algorithm}[t]
\scriptsize
\caption{\textcolor{revise}{Scalable Node-oriented Spectral Filtering for GNNs in one epoch}}
\label{Alg::scalable NFGNN} 
\begin{algorithmic}
 \REQUIRE \textcolor{revise}{$\mathbf{X} \in \mathbb{R}^{n \times f}$, $\tilde{\mathbf{L}}\in \mathbb{R}^{n \times n}$, $K$}
 \ENSURE \textcolor{revise}{$\mathbf{Z}$} \\ 
 \renewcommand{\algorithmicensure}{\textbf{Learnable Parameters:}}
 \ENSURE 
 \textcolor{revise}{$\mathbf{W}$},
 \textcolor{revise}{$\mathbf{\Gamma}$},
 \textcolor{revise}{$\Theta$}.
 
 \STATE 
  \textcolor{revise}{$\mathbf{X}^{(0)} \leftarrow \mathbf{X}$,
  $\mathbf{X}^{(1)} \leftarrow \tilde{\mathbf{L}}\mathbf{X}^{(0)}$}  
  ~~~~~~~
   
    \STATE \textcolor{revise}{\textbf{for} $k =0 $ to $K+1$ \textbf{do}~~~~~/* \textit{\textbf{Steps that can be precomputed}} */}
    \\
   ~~~~~~\textcolor{revise}{$\mathbf{X}^{(k)} \leftarrow 2\tilde{\mathbf{L}}\mathbf{X}^{(k-1)} - \mathbf{X}^{(k-2)}$ if $k>1$ }
   
   \textcolor{revise}{\textbf{end} \textbf{for}}\\ 
   \textcolor{revise}{$\tilde{\mathbf{Z}} \leftarrow \mathbf{0}$ }\\
   \STATE \textcolor{revise}{\textbf{for} $k =0 $ to $K+1$ \textbf{do}} \\
   ~~~~~~\textcolor{revise}{$\mathbf{H}=\sigma(\mathbf{X}^{(k)}\mathbf{W})$} \\
    ~~~~~~\textcolor{revise}{\textbf{for} $i = 1$ to $n$ \textbf{do}} \\
         ~~~~~~ ~~~~~~\textcolor{revise}{$\eta_{i,k} \leftarrow \mathbf{H}_{i:}(\mathbf{\Gamma}_{k:})^\top$}\\
         ~~~~~~ ~~~~~~\textcolor{revise}{$\tilde{\mathbf{Z}}_i \leftarrow \eta_{i,k} \mathbf{X}^{(k)}_i + \tilde{\mathbf{Z}}_i$} \\
    ~~~~~~\textcolor{revise}{\textbf{end} \textbf{for}}\\
        \textcolor{revise}{\textbf{end} \textbf{for}}
        
        \textcolor{revise}{$\mathbf{Z}\leftarrow $ MLP$(\tilde{\mathbf{Z}};\Theta)$ ~~~~~~~/* \textit{Feature Transformation} */}
    \STATE \textcolor{revise}{Update the learnable parameters $\mathbf{W}$, $\mathbf{\Gamma}$, $\Theta$.}
\end{algorithmic}
\end{algorithm}
\begin{figure}[t]
	\centering
	\includegraphics[width=2.75in]{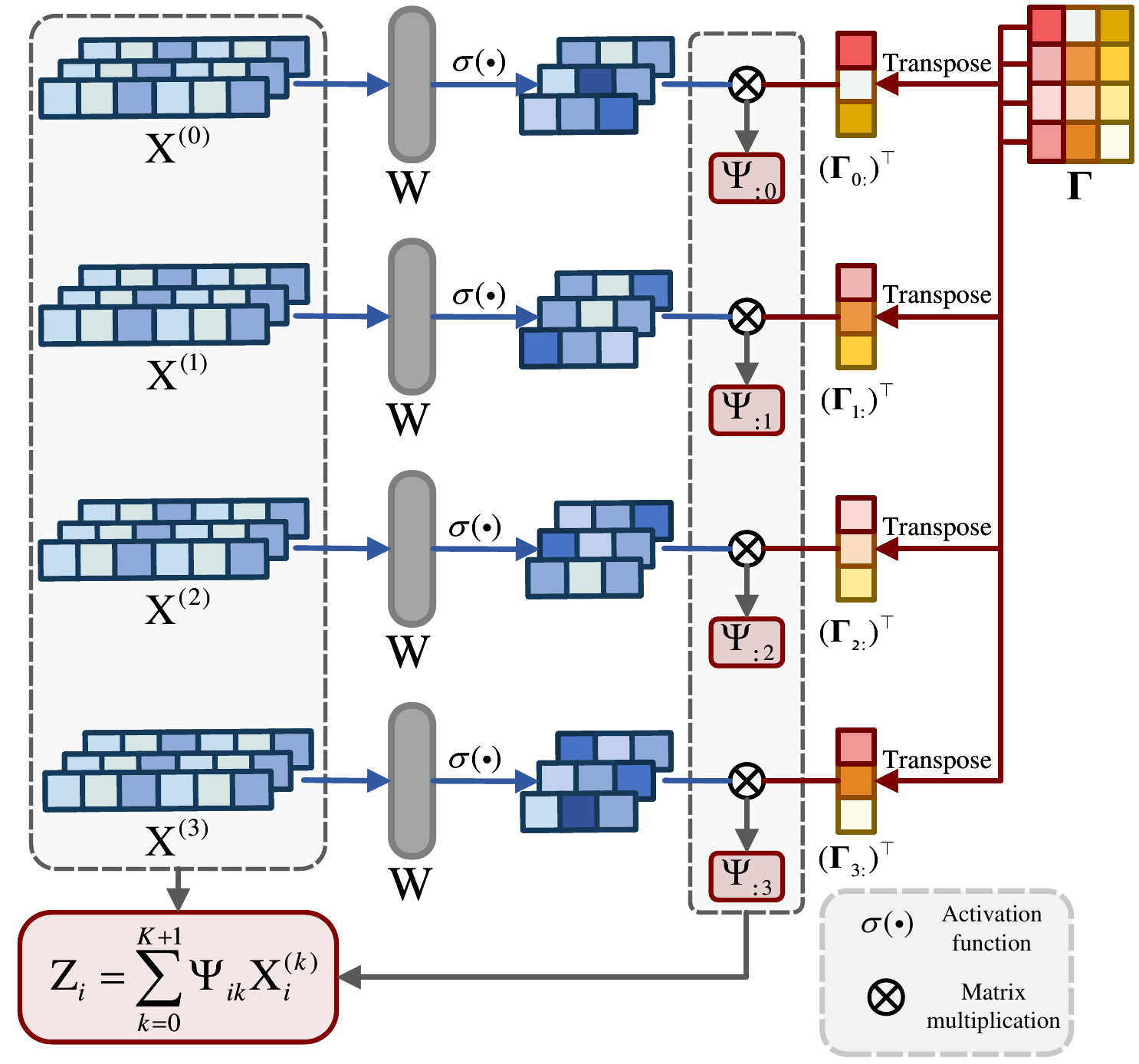}
	\vspace{-3mm}
	\caption{\textcolor{R2}{The pipeline of the proposed node-oriented spectral filtering when $K$ = 2 and $d$ = 3. $\mathbf{W}$ and $\mathbf{\Gamma}$ are the trainable parameter matrices. During the training phase, $\mathbf{W}$ and $\mathbf{\Gamma}$ will be updated through nodes from the training set. While in the test phase, we can use the trained $\mathbf{W}$ and $\mathbf{\Gamma}$ to obtain the embedding of the specific node from the test set.}}
	\label{fig::pipeline}
	\vspace{-5mm}
\end{figure}

\noindent\textbf{\textcolor{R2}{The scalable variant of NFGNN.}}
\label{sect::scalability}
\noindent\textcolor{revise}
{
Except for the implementation of NFGNN introduced above in an APPNP-like manner~\cite{APPNP}, 
NFGNN can also be implemented by adopting an SGC-like manner~\cite{SGC} to scale up to very large graphs like ogbn-paper100M with more than a billion nodes, i.e., performing feature filtering first and feature transformation last. In such a pipeline, we can precompute and save $\{\hat{p}_k(\mathbf{L})\mathbf{x}\}_{k=0}^{K}$ of Eq.~\eqref{eq::final} without training since there are no trainable parameters to be involved in the calculation of $\{\hat{p}_k(\mathbf{L})\mathbf{x}\}_{k=0}^{K}$. Then, the parameters of NFGNN can be trained in a mini-batch manner which is suitable for graphs with arbitrary scales without the limitation of GPU memory. 
The scalable variant of the proposed node-oriented filtering is summarized in Alg.~\ref{Alg::scalable NFGNN}}.

\section{Experimental Results and Analysis}
\label{sect::experiment}
\subsection{Experimental Settings}
\textbf{Datasets and Baselines.}

To provide a comprehensive evaluation of our method, several graphs from various domains with different homophily ratios and scales are used, which range from 183 to 1,632,803 nodes and from 277 to 1,362,229 edges. Specifically, there are 6 homophilic graphs: citation graphs Cora, CiteSeer, PubMed~\cite{sen2008collective}, co-purchase graphs Computers and Photo~\cite{pitfalls}, \textcolor{R2}{and three large-scale datasets from OGB~\cite{OGB}: ogbn-arxiv, ogbn-products, and ogbn-papers100M}; 5 heterophilic graphs with small scale: Chameleon and Squirrel~\cite{rozemberczki2021multi}, Actor, Texas, and Cornell~\cite{GeomGCN}; and 3 heterophilic graphs with large scale from~\cite{benchmark}: Penn94, genius, and pokec. The statistics of datasets except for the datasets from OGB are summarized in Table~\ref{datasets}. 
\begin{table*}[t]
\centering
\setlength{\tabcolsep}{2.5pt}
\caption{Results under the semi-supervised learning setting: mean accuracy ($\%$) $\pm$ $95\%$ confidence interval. Boldface letters mark the best result,
while underlined letters denote the second-best result.}
\vspace{-2mm}
\label{tab::semi}
\small
\resizebox{7in}{!}{
\begin{tabular}{@{}ccccccccccc@{}}
\toprule
 &
  Cora &
  Citeseer &
  PubMed &
  Computers &
  Photo &
  Chameleon &
  Actor &
  Squirrel &
  Texas &
  Cornell \\ \midrule

  \textcolor{revise}{P-value}
  & - 
  & - 
  & - 
  & \textcolor{revise}{$2.47 \times 10^{-33}$} 
  & \textcolor{revise}{$1.30 \times 10^{-6}$} 
  & - 
  & \textcolor{revise}{$5.03 \times 10^{-7}$} 
  & \textcolor{revise}{$7.78 \times 10^{-5}$} 
  & \textcolor{revise}{$1.44 \times 10^{-15}$} 
  & \textcolor{revise}{$4.18 \times 10^{-18}$} \\ \midrule

\textbf{NFGNN} &
  77.69\tiny{$\pm$0.91} &
  {\ul 67.74\tiny{$\pm$0.52}} &
  \textbf{85.07\tiny{$\pm$0.13}} &
  \textbf{84.18\tiny{$\pm$0.40}} &
  \textbf{92.16\tiny{$\pm$0.82}} &
  {\ul 40.10\tiny{$\pm$1.34}} &
  \textbf{30.33\tiny{$\pm$0.82}} &
  \textbf{30.91\tiny{$\pm$0.62}} &
  \textbf{50.37\tiny{$\pm$6.54}} &
  \textbf{48.37\tiny{$\pm$7.34}} \\  

BernNet &
  76.37\tiny{$\pm$0.36} &
  65.83\tiny{$\pm$0.61} &
  82.57\tiny{$\pm$0.17} &
  79.57\tiny{$\pm$0.28} &
  91.60\tiny{$\pm$0.35} &
  27.33\tiny{$\pm$1.14} &
  28.61\tiny{$\pm$0.55} &
  26.42\tiny{$\pm$0.39} &
  48.21\tiny{$\pm$3.17} &
  {\ul 46.50\tiny{$\pm$5.57}} \\
  
GPRGNN &
  \textbf{79.51\tiny{$\pm$0.36}} &
  67.63\tiny{$\pm$0.38} &
  \textbf{85.07\tiny{$\pm$0.09}} &
  {\ul 82.90\tiny{$\pm$0.37}} &
  {\ul 91.93\tiny{$\pm$0.26}} &
  31.46\tiny{$\pm$0.89} &
  28.32\tiny{$\pm$0.76} &
  26.14\tiny{$\pm$0.51} &
  44.76\tiny{$\pm$4.33} &
  43.45\tiny{$\pm$4.65} \\
APPNP &  
  {\ul 79.41\tiny{$\pm$0.38}} &
  \textbf{68.59\tiny{$\pm$0.30}} &
  {\ul 85.02\tiny{$\pm$0.09}} &
  81.99\tiny{$\pm$0.26} &
  91.11\tiny{$\pm$0.26} &
  30.12\tiny{$\pm$0.92} &
  {\ul 29.18\tiny{$\pm$0.62}} &
  26.01\tiny{$\pm$0.48} &
  46.44\tiny{$\pm$3.10} &
  44.37\tiny{$\pm$4.90} \\
ChebNet &
  71.39\tiny{$\pm$0.51} &
  65.67\tiny{$\pm$0.38} &
  83.83\tiny{$\pm$0.12} &
  82.41\tiny{$\pm$0.28} &
  90.09\tiny{$\pm$0.28} &
  32.36\tiny{$\pm$1.61} &
  26.52\tiny{$\pm$3.01} &
  27.76\tiny{$\pm$0.93} &
  28.51\tiny{$\pm$4.03} &
  26.21\tiny{$\pm$3.12} \\
GCN &
  75.21\tiny{$\pm$0.38} &
  67.30\tiny{$\pm$0.35} &
  84.27\tiny{$\pm$0.01} &
  82.52\tiny{$\pm$0.32} &
  90.54\tiny{$\pm$0.21} &
  35.73\tiny{$\pm$0.87} &
  22.43\tiny{$\pm$1.01} &
  28.37\tiny{$\pm$0.77} &
  32.40\tiny{$\pm$2.31} &
  35.90\tiny{$\pm$3.64} \\ \midrule
  

BMGCN  &
  74.07\tiny{$\pm$0.25} &
  64.34\tiny{$\pm$0.92} &
  84.71\tiny{$\pm$0.34} &
  NA &
  NA &
  \textbf{41.22\tiny{$\pm$1.11}} &
  NA &
  {\ul 30.28\tiny{$\pm$0.96}} &
  48.21\tiny{$\pm$4.39} &
  NA \\
  
  FAGCN &
  78.10\tiny{$\pm$0.21} &
  66.77\tiny{$\pm$0.18} &
  84.09\tiny{$\pm$0.02} &
  82.11\tiny{$\pm$1.55} &
  90.39\tiny{$\pm$1.34} &
  37.24\tiny{$\pm$3.54} &
  28.02\tiny{$\pm$2.21}&
  26.91\tiny{$\pm$3.08} &
{\ul  48.44\tiny{$\pm$1.78}} &
  46.38\tiny{$\pm$1.82}  \\


GAT &
  76.70\tiny{$\pm$0.42} &
  67.20\tiny{$\pm$0.46} &
  83.28\tiny{$\pm$0.12} &
  81.95\tiny{$\pm$0.38} &
  90.09\tiny{$\pm$0.27} &
  36.17\tiny{$\pm$0.47} &
  23.71\tiny{$\pm$0.91} &
  27.55\tiny{$\pm$1.19} &
  35.11\tiny{$\pm$3.32} &
  37.37\tiny{$\pm$3.74} \\
  
  MLP &
  50.34\tiny{$\pm$0.48} &
  52.88\tiny{$\pm$0.51} &
  80.57\tiny{$\pm$0.12} &
  70.48\tiny{$\pm$0.28} &
  78.69\tiny{$\pm$0.30} &
  22.47\tiny{$\pm$2.05} &
  28.77\tiny{$\pm$0.50} &
  24.10\tiny{$\pm$0.93} &
  45.15\tiny{$\pm$2.46} &
  46.28\tiny{$\pm$4.73} \\

  LINK &
  42.94\tiny{$\pm$2.02} &
  25.52\tiny{$\pm$1.98} &
  54.78\tiny{$\pm$0.96} &
  70.05\tiny{$\pm$1.31} &
  78.84\tiny{$\pm$1.45} &
  38.97\tiny{$\pm$1.92} &
  21.07\tiny{$\pm$0.71} &
  29.53\tiny{$\pm$0.58} &
  44.89\tiny{$\pm$7.08} &
  27.48\tiny{$\pm$5.34} \\ \bottomrule
\end{tabular}}
\normalsize
\vspace{-3mm}
\end{table*}

\begin{table*}[t]
\centering
\setlength{\tabcolsep}{2.5pt}
\caption{Results under the full-supervised learning setting: mean accuracy ($\%$) $\pm$ $95\%$ confidence interval.}
\vspace{-2mm}
\label{tab::full}
\small
\resizebox{7in}{!}{
\begin{tabular}{@{}ccccccccccc@{}}
\toprule
 &
  Cora &
  Citeseer &
  PubMed &
  Computers &
  Photo &
  Chameleon &
  Actor &
  Squirrel &
  Texas &
  Cornell \\ \midrule

  \textcolor{revise}{P-value}
  & \textcolor{revise}{$1.37 \times 10^{-10}$}
  & - 
  & \textcolor{revise}{$2.32 \times 10^{-14}$}  
  & \textcolor{revise}{$9.43 \times 10^{-11}$} 
  & \textcolor{revise}{$5.66 \times 10^{-28}$} 
  & \textcolor{revise}{$2.62 \times 10^{-4}$ }
  & \textcolor{revise}{$1.15 \times 10^{-2}$ }
  & - 
  & \textcolor{revise}{$8.42 \times 10^{-13}$} 
  & - \\ \midrule

\textbf{NFGNN} &
  \textbf{89.82\tiny{$\pm$0.43}} &
  {\ul 80.56\tiny{$\pm$0.55}} &
  \textbf{89.89\tiny{$\pm$0.68}} &
  \textbf{90.31\tiny{$\pm$0.42}} &
  \textbf{94.85\tiny{$\pm$0.24}} &
  \textbf{72.52\tiny{$\pm$0.59}} &
  \textbf{40.62\tiny{$\pm$0.38}} &
  {\ul58.90\tiny{$\pm$0.35}} &
  \textbf{94.03\tiny{$\pm$0.82}} &
  {\ul 91.90\tiny{$\pm$0.91}} \\  

BernNet &
  88.06\tiny{$\pm$0.91} &
  80.17\tiny{$\pm$0.78} &
  88.79\tiny{$\pm$0.25} &
  88.61\tiny{$\pm$0.41} &
  93.32\tiny{$\pm$0.40} &
  68.73\tiny{$\pm$0.57} &
  {\ul 40.01\tiny{$\pm$0.42}} &
  50.75\tiny{$\pm$0.67} &
  92.30\tiny{$\pm$1.23} &
  \textbf{91.96\tiny{$\pm$1.07}} \\
  
GPRGNN &
  {\ul 88.57\tiny{$\pm$0.69}} &
  80.13\tiny{$\pm$0.67} &
  {\ul 88.92\tiny{$\pm$0.68}} &
  87.20\tiny{$\pm$0.47} &
  {\ul 93.87\tiny{$\pm$0.42}} &
  67.48\tiny{$\pm$0.40} &
  39.30\tiny{$\pm$0.27} &
  49.93\tiny{$\pm$0.53} &
  92.92\tiny{$\pm$0.61} &
  91.36\tiny{$\pm$0.70} \\
APPNP &  
  88.20\tiny{$\pm$0.62} &
  80.22\tiny{$\pm$0.63} &
  87.98\tiny{$\pm$0.67} &
  88.04\tiny{$\pm$0.51} &
  90.32\tiny{$\pm$0.17} &
  51.91\tiny{$\pm$0.56} &
  38.86\tiny{$\pm$0.24} &
  34.77\tiny{$\pm$0.34} &
  91.18\tiny{$\pm$0.70} &
  91.80\tiny{$\pm$0.63} \\
ChebNet &
  86.36\tiny{$\pm$0.46} &
  79.32\tiny{$\pm$0.39} &
  88.12\tiny{$\pm$0.43} &
  87.82\tiny{$\pm$0.72} &
  93.58\tiny{$\pm$0.19} &
  59.96\tiny{$\pm$0.51} &
  38.02\tiny{$\pm$0.23} &
  40.67\tiny{$\pm$0.31} &
  86.08\tiny{$\pm$0.96} &
  85.33\tiny{$\pm$1.04} \\
GCN &
  86.99\tiny{$\pm$1.23} &
  79.67\tiny{$\pm$0.86} &
  86.65\tiny{$\pm$0.70} &
  86.64\tiny{$\pm$1.04} &
  92.49\tiny{$\pm$0.55} &
  60.96\tiny{$\pm$0.78} &
  30.59\tiny{$\pm$0.23} &
  45.66\tiny{$\pm$0.39} &
  75.16\tiny{$\pm$0.96} &
  66.72\tiny{$\pm$1.37} \\ \midrule
  

BMGCN  &
  87.49\tiny{$\pm$1.18} &
  80.54\tiny{$\pm$0.35} &
  88.38\tiny{$\pm$0.27} &
  NA &
  NA &
  69.69\tiny{$\pm$1.21} &
  NA &
  53.16\tiny{$\pm$0.74} &
  {\ul 93.00\tiny{$\pm$0.57}} &
  NA \\
  
  FAGCN &
  {\ul88.57\tiny{$\pm$0.92}} &
  \textbf{83.51\tiny{$\pm$0.43}} &
  86.08\tiny{$\pm$0.33} &
  {\ul 89.68\tiny{$\pm$0.97}} &
  93.67\tiny{$\pm$0.50} &
  61.59\tiny{$\pm$1.98} &
  39.08\tiny{$\pm$0.65}&
  44.41\tiny{$\pm$0.62} &
  89.61\tiny{$\pm$1.52} &
  88.52\tiny{$\pm$1.33}  \\


GAT &
  88.13\tiny{$\pm$0.72} &
  80.33\tiny{$\pm$0.60} &
  87.25\tiny{$\pm$0.24} &
  85.83\tiny{$\pm$0.49} &
  92.94\tiny{$\pm$0.66} &
  63.90\tiny{$\pm$0.46} &
  35.98\tiny{$\pm$0.23} &
  42.72\tiny{$\pm$0.33} &
  78.87\tiny{$\pm$0.86} &
  76.00\tiny{$\pm$1.01} \\
  MLP &
  77.37\tiny{$\pm$0.51} &
  76.26\tiny{$\pm$0.49} &
  85.82\tiny{$\pm$0.30} &
  81.78\tiny{$\pm$0.76} &
  88.17\tiny{$\pm$0.62} &
  46.72\tiny{$\pm$0.46} &
  38.58\tiny{$\pm$0.25} &
  31.28\tiny{$\pm$0.27} &
  92.26\tiny{$\pm$0.71} &
  91.36\tiny{$\pm$0.70} \\
  LINK &
  80.88\tiny{$\pm$0.58} &
  60.76\tiny{$\pm$1.35} &
  81.14\tiny{$\pm$0.26} &
  83.71\tiny{$\pm$0.32} &
  89.28\tiny{$\pm$0.31} &
  {\ul 71.09\tiny{$\pm$1.16}} &
  26.25\tiny{$\pm$1.43} &
  \textbf{59.77\tiny{$\pm$1.27}} &
  89.61\tiny{$\pm$1.52} &
  44.91\tiny{$\pm$2.19} \\ \bottomrule
\end{tabular}}
\normalsize
\vspace{-3mm}
\end{table*}
Several baselines have been selected for comparison, including 8 GNN baselines from both spatial and spectral perspectives, GCN~\cite{GCN}, ChebNet~\cite{chebyNet},  APPNP~\cite{APPNP},  GPRGNN~\cite{GPRGNN}, BernNet~\cite{BernNet}, GAT~\cite{GAT}, BMGCN~\cite{Graphsage}, and FAGCN~\cite{FAGCN}. \textcolor{black}{Besides, 2 classical non-GNN baselines are also chosen: MLP and LINK~\cite{LINK}}. 
Specifically, we use 2 GCN layers with 64 hidden units for GCN implementation. In addition, 2 GAT layers are adopted for GAT implementation, where the attention heads are (8, 1), and the number of hidden units of each head is (8, 64), respectively. For ChebyNet, each layer is set to 2 propagation steps with 32 hidden units. For APPNP, we use a 2-layer MLP with 64 hidden units for feature transformation and 10 steps for feature propagation. For SGC, we use the default $K$ = 2. The MLP in baselines is the same as the 2-layer MLP of APPNP.

\begin{table}[t]
\centering
\caption{Results on large-scale graphs: Mean accuracy (\%) $\pm$ standard deviation. "OOM" represents out of memory and '*' denotes the results of the scalable version of the model.}
\vspace{-3mm}
\label{tab::large}
\resizebox{3.5in}{!}{
\setlength{\tabcolsep}{2.5pt}
\begin{tabular}{lcccccc}
\toprule
                
& Penn94         
& genius         
& pokec         
& \textcolor{revise}{\begin{tabular}[c]{@{}c@{}}ogbn\\ arxiv\end{tabular}}
& \textcolor{R2}{\begin{tabular}[c]{@{}c@{}}ogbn\\ products\end{tabular}}
& \textcolor{R2}{\begin{tabular}[c]{@{}c@{}}ogbn\\ papers100M\end{tabular}}\\ \midrule
\textbf{NFGNN } 
& 84.10\tiny{$\pm$0.40} 
& 90.94\tiny{$\pm$0.43} 
& 81.37\tiny{$\pm$0.11} 
& \textcolor{revise}{72.13\tiny{$\pm$0.33}}
& \textcolor{R2}{81.02\tiny{$\pm$0.31}}*
& \textcolor{R2}{67.11\tiny{$\pm$0.52}}*\\
BernNet         
& 81.41\tiny{$\pm$0.27} 
& 90.13\tiny{$\pm$0.22} 
& 79.36\tiny{$\pm$0.09} 
& \textcolor{revise}{71.40\tiny{$\pm$0.37}}
& \textcolor{R2}{80.01\tiny{$\pm$0.27}}*
& \textcolor{R2}{65.12\tiny{$\pm$0.35}}*\\
GPRGNN          
& 81.38\tiny{$\pm$0.16} 
& 90.05\tiny{$\pm$0.31} 
& 78.83\tiny{$\pm$0.05} 
& \textcolor{revise}{71.68\tiny{$\pm$0.44}}
& \textcolor{R2}{79.93\tiny{$\pm$0.44}}*
& \textcolor{R2}{65.76\tiny{$\pm$0.40}}*\\
APPNP           
& 74.33\tiny{$\pm$0.38} 
& 85.36\tiny{$\pm$0.62} 
& 62.58\tiny{$\pm$0.08} 
& \textcolor{revise}{70.97\tiny{$\pm$0.71}} 
& \textcolor{R2}{76.57\tiny{$\pm$0.20}}
& \textcolor{R2}{OOM}\\
GCN             
& 82.47\tiny{$\pm$0.27} 
& 87.42\tiny{$\pm$0.37} 
& 75.45\tiny{$\pm$0.17} 
& \textcolor{revise}{71.03\tiny{$\pm$0.38}}
& \textcolor{R2}{OOM}
& \textcolor{R2}{OOM}\\

\bottomrule
\end{tabular}
}
\vspace{-5mm}
\end{table}

\begin{table*}[t]
\centering
\setlength{\tabcolsep}{2.5pt}
\caption{\textcolor{R2}{Results on the observed and the unobserved test subsets of inductive node classification: mean accuracy ($\%$) $\pm$ $95\%$ confidence interval.}}

\label{tab::inductive}
\small
\resizebox{0.9\textwidth}{!}{
\begin{tabular}{@{}l|l|cccccccccc@{}}
\toprule
&         
& Cora
& Citeseer
& PubMed
& Computers
& Photo
& Chameleon
& Actor
& Squirrel
& Texas
& Cornell \\ \midrule
\multirow{5}{*}{Observed}
  & \textcolor{R2}{P-value}
  & -
  & \textcolor{R2}{$7.53 \times 10^{-7}$} 
  & \textcolor{R2}{$1.41 \times 10^{-4}$}  
  & -
  & \textcolor{R2}{$8.26 \times 10^{-24}$} 
  & \textcolor{R2}{$3.57 \times 10^{-30}$ }
  & \textcolor{R2}{$6.72 \times 10^{-6}$ }
  & \textcolor{R2}{$8.14 \times 10^{-18}$ } 
  & \textcolor{R2}{$1.24 \times 10^{-33}$} 
  & \textcolor{R2}{$2.75 \times 10^{-17}$ } \\ \cmidrule(l){2-12} 

& NFGNN   
& \textcolor{R2}{78.78\tiny{$\pm$1.02} }    
& \textcolor{R2}{\textbf{68.79}\tiny{$\pm$0.57} }        
& \textcolor{R2}{\textbf{85.84}\tiny{$\pm$0.43} }      
& \textcolor{R2}{86.91\tiny{$\pm$0.71}          }
& \textcolor{R2}{\textbf{92.51}\tiny{$\pm$0.32} }     
& \textcolor{R2}{\textbf{42.23} \tiny{$\pm$1.96}}         
& \textcolor{R2}{\textbf{32.80} \tiny{$\pm$0.77}}     
& \textcolor{R2}{\textbf{33.59} \tiny{$\pm$1.07}}        
& \textcolor{R2}{\textbf{54.51} \tiny{$\pm$6.78}}     
& \textcolor{R2}{\textbf{47.97} \tiny{$\pm$9.35}}       
\\
                            
& BernNet 
& \textcolor{R2}{\textbf{78.98} \tiny{$\pm$0.94} }   
& \textcolor{R2}{68.23 \tiny{$\pm$1.60}        }
& \textcolor{R2}{85.26 \tiny{$\pm$0.26}      }
& \textcolor{R2}{\textbf{87.17} \tiny{$\pm$0.79} }        
& \textcolor{R2}{91.74 \tiny{$\pm$1.37}     }
& \textcolor{R2}{36.87 \tiny{$\pm$4.38}         }
& \textcolor{R2}{32.14 \tiny{$\pm$1.01}     }
& \textcolor{R2}{29.75 \tiny{$\pm$0.61}        }
& \textcolor{R2}{50.27 \tiny{$\pm$4.28}     }
& \textcolor{R2}{46.13 \tiny{$\pm$8.39}       }
\\
                            
& GPRGNN  
& \textcolor{R2}{77.43 \tiny{$\pm$1.70}}    
& \textcolor{R2}{66.82 \tiny{$\pm$1.25}}        
& \textcolor{R2}{85.14 \tiny{$\pm$0.31}}      
& \textcolor{R2}{86.09 \tiny{$\pm$0.59}}         
& \textcolor{R2}{91.49 \tiny{$\pm$0.62}}     
& \textcolor{R2}{35.80 \tiny{$\pm$2.59}}         
& \textcolor{R2}{31.24 \tiny{$\pm$1.03}}     
& \textcolor{R2}{28.47 \tiny{$\pm$0.55}}        
& \textcolor{R2}{46.19 \tiny{$\pm$5.81}}     
& \textcolor{R2}{46.32 \tiny{$\pm$8.27}}       
\\
                            
& APPNP   
& \textcolor{R2}{77.28 \tiny{$\pm$1.81} }   
& \textcolor{R2}{67.73 \tiny{$\pm$0.69} }       
& \textcolor{R2}{83.97 \tiny{$\pm$0.32} }     
& \textcolor{R2}{86.30 \tiny{$\pm$0.88} }        
& \textcolor{R2}{91.52 \tiny{$\pm$0.94} }    
& \textcolor{R2}{29.63 \tiny{$\pm$2.03} }        
& \textcolor{R2}{30.82 \tiny{$\pm$0.80} }    
& \textcolor{R2}{26.94 \tiny{$\pm$0.26} }       
& \textcolor{R2}{44.35 \tiny{$\pm$6.42} }    
& \textcolor{R2}{43.96 \tiny{$\pm$10.40}}       
\\ \midrule

\multirow{5}{*}{Unobserved} 
  & \textcolor{R2}{P-value}
  & \textcolor{R2}{$3.81 \times 10^{-12}$}
  & \textcolor{R2}{$1.44 \times 10^{-24}$} 
  & \textcolor{R2}{$6.48 \times 10^{-17}$}  
  & \textcolor{R2}{$4.13 \times 10^{-5}$} 
  & \textcolor{R2}{$1.79 \times 10^{-2}$} 
  & \textcolor{R2}{$2.66 \times 10^{-8}$ }
  & \textcolor{R2}{$5.32 \times 10^{-10}$ }
  & \textcolor{R2}{$7.96 \times 10^{-16}$}  
  & \textcolor{R2}{$2.41 \times 10^{-4}$} 
  & \textcolor{R2}{$2.83 \times 10^{-2}$}  \\ \cmidrule(l){2-12}

& NFGNN   
& \textcolor{R2}{\textbf{70.09} \tiny{$\pm$2.78}}    
& \textcolor{R2}{\textbf{63.53} \tiny{$\pm$1.88}}        
& \textcolor{R2}{\textbf{82.90} \tiny{$\pm$0.49}}      
& \textcolor{R2}{\textbf{82.85} \tiny{$\pm$1.02}}         
& \textcolor{R2}{\textbf{89.41} \tiny{$\pm$0.92}}     
& \textcolor{R2}{\textbf{32.89} \tiny{$\pm$2.43}}         
& \textcolor{R2}{\textbf{32.04} \tiny{$\pm$1.98}}     
& \textcolor{R2}{\textbf{27.29} \tiny{$\pm$2.17}}        
& \textcolor{R2}{\textbf{47.35} \tiny{$\pm$7.85}}     
& \textcolor{R2}{\textbf{43.97} \tiny{$\pm$9.55}}       
\\
& BernNet 
& \textcolor{R2}{69.13 \tiny{$\pm$3.47}}    
& \textcolor{R2}{62.72 \tiny{$\pm$1.53}}        
& \textcolor{R2}{81.25 \tiny{$\pm$0.51}}      
& \textcolor{R2}{80.64 \tiny{$\pm$1.09}}         
& \textcolor{R2}{89.25 \tiny{$\pm$1.20}}     
& \textcolor{R2}{31.55 \tiny{$\pm$4.88}}         
& \textcolor{R2}{30.51 \tiny{$\pm$1.39}}     
& \textcolor{R2}{26.12 \tiny{$\pm$1.92}}        
& \textcolor{R2}{47.05 \tiny{$\pm$6.96}}     
& \textcolor{R2}{40.57 \tiny{$\pm$8.51}}       
\\
& GPRGNN  
& \textcolor{R2}{66.36 \tiny{$\pm$2.54}}    
& \textcolor{R2}{61.77 \tiny{$\pm$3.31}}        
& \textcolor{R2}{82.67 \tiny{$\pm$0.56}}      
& \textcolor{R2}{82.60 \tiny{$\pm$0.99}}         
& \textcolor{R2}{89.19 \tiny{$\pm$0.94}}     
& \textcolor{R2}{31.49 \tiny{$\pm$3.34}}         
& \textcolor{R2}{31.72 \tiny{$\pm$1.79}}     
& \textcolor{R2}{26.80 \tiny{$\pm$1.37}}        
& \textcolor{R2}{45.12 \tiny{$\pm$9.41}}     
& \textcolor{R2}{43.81 \tiny{$\pm$7.83}}       
\\
& APPNP   
& \textcolor{R2}{67.48 \tiny{$\pm$1.97}}    
& \textcolor{R2}{61.55 \tiny{$\pm$2.62}}        
& \textcolor{R2}{81.50 \tiny{$\pm$0.72}}      
& \textcolor{R2}{79.45 \tiny{$\pm$1.41}}         
& \textcolor{R2}{86.39 \tiny{$\pm$1.73}}     
& \textcolor{R2}{27.32 \tiny{$\pm$2.37}}         
& \textcolor{R2}{30.95 \tiny{$\pm$1.68}}     
& \textcolor{R2}{23.24 \tiny{$\pm$1.10}}        
& \textcolor{R2}{44.87 \tiny{$\pm$8.78}}     
& \textcolor{R2}{42.65 \tiny{$\pm$9.47}}      
\\ \bottomrule
\end{tabular}}
\normalsize
\vspace{-10pt}
\end{table*}
\noindent\textbf{Experimental Setup.} 
\textcolor{R2}{For \textbf{the node classification task under the transductive setting}}, we set two random split ratios to split the dataset into training/validation/test under two settings. Specifically, the sparse splitting ratio (2.5\%/2.5\%/95\%) is used for the semi-supervised learning setting, and the dense splitting ratio (60\%/20\%/20\%) for the full-supervised learning setting. As exceptions, following the settings in ~\cite{benchmark}, a splitting ratio (50\%/25\%/25\%) is applied for the 3 heterophilic graphs with large scale from~\cite{benchmark}. \textcolor{revise}{For the datasets from OGB~\cite{OGB}, we adopt the same official splitting from OGB}.
We run each experiment 50 times with random initialization, and using random data split seeds. Finally, we report the average results with a 95\% confidence interval.

\textcolor{R2}{For \textbf{the node classification task under the inductive setting}, based on the sparse splitting ratio, we followed the setup in~\cite{GLNN} to further divide the test subset into the observed test subset and the unobserved test subset in a ratio of 8:2. The nodes in the training set, validation set, and the observed test subset are available in the training phase, while the nodes in the unobserved test subset and the corresponding edges can only be available during the test phase. We run each experiment 20 times with random initialization and random data split seeds. Finally, we report the average results with a 95\% confidence interval.}


For GPRGNN, BernNet, and BMGCN, we use the best combination of hyperparameters provided in the original paper to report the results for each dataset. For our NFGNN, a 2-layer MLP is used for feature transformation, whose dropout rate $dp_l$ is set to 0.5 for all datasets, and hidden units are set to ($f_{h}, |\mathcal{Y}|$).  Besides, we use lr$_l$ to denote the learning rate of the MLP. lr$_p$, $dp_p$ are used to denote the learning rate and dropout rate of the node-oriented filtering layer. The Adam~\cite{Adam} with weight decay $L_2$ is employed as the optimizer for the NFGNN training.
For each dataset, we search the optimal lr$_l$ within $\{0.01, 0.05\}$, lr$_p$ within $\{0.001, 0.005, 0.01\}$, $dp_p$ within $\{0, 0.1, 0.2, 0.5, 0.7, 0.8, 0.9\}$, $f_h$ within $\{ 16, 32, 64\}$, and $L_2$ within $\{ 0.0001, 0.0005, 0.001\}$. We set the order of the polynomial $K = 10$ and the rank $d = 1$ for all datasets for the node classification task under transductive setting, since $\mathbf{\Gamma}$ can be directly seen as $\{\mathbf{\gamma}_k\}_{k=0}^K$ when $d$ is set to 1,  which is easy to visualize and analyze. For the visualization analysis, it will be discussed in Sect.~\ref{node_analysis}.


\begin{figure*}[t]
    \centering
   \subfigure[Cora]{
   \includegraphics[width=40mm]{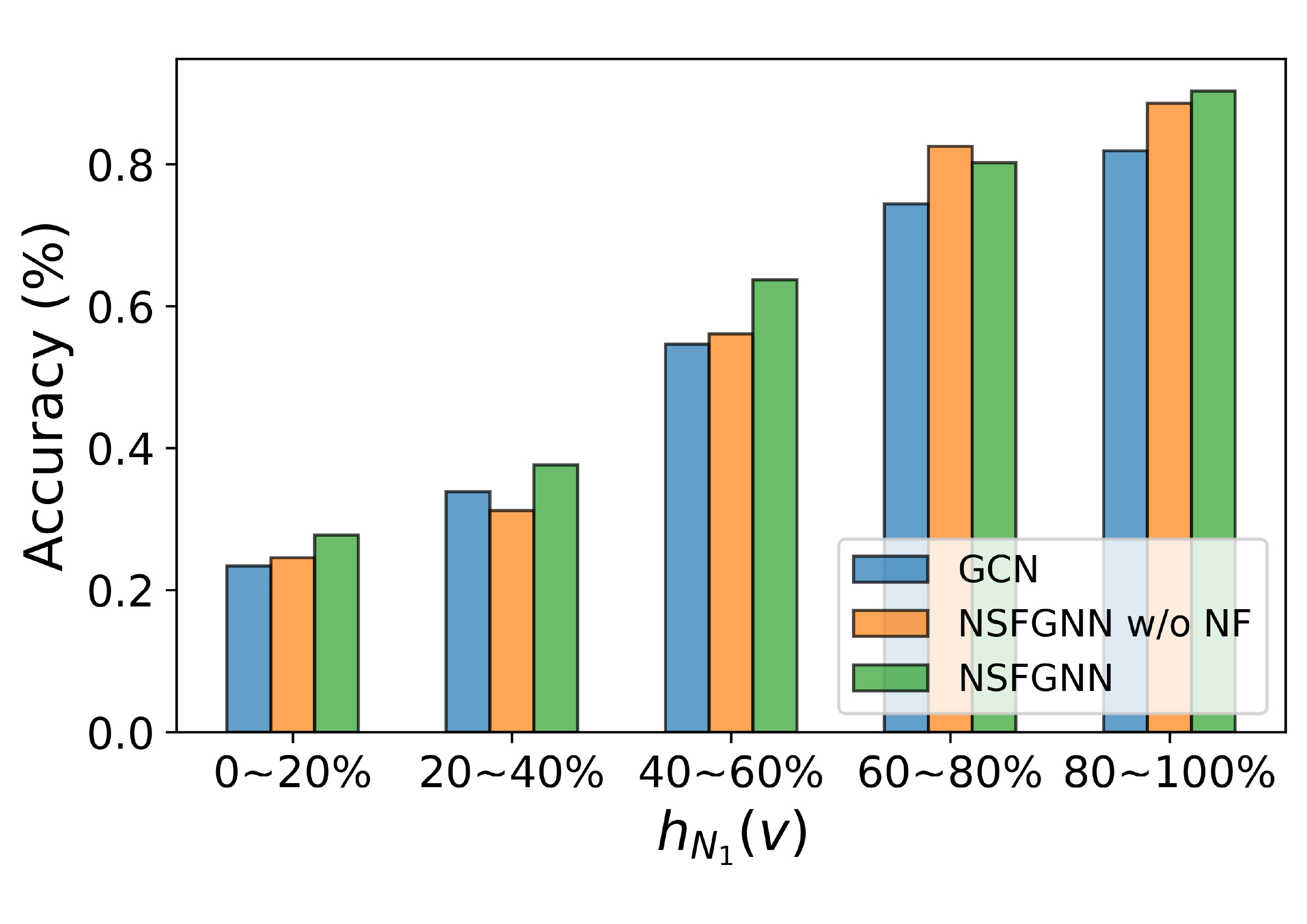}
   }
   \hspace{-2mm}
   \subfigure[Citeseer]{
   \includegraphics[width=40mm]{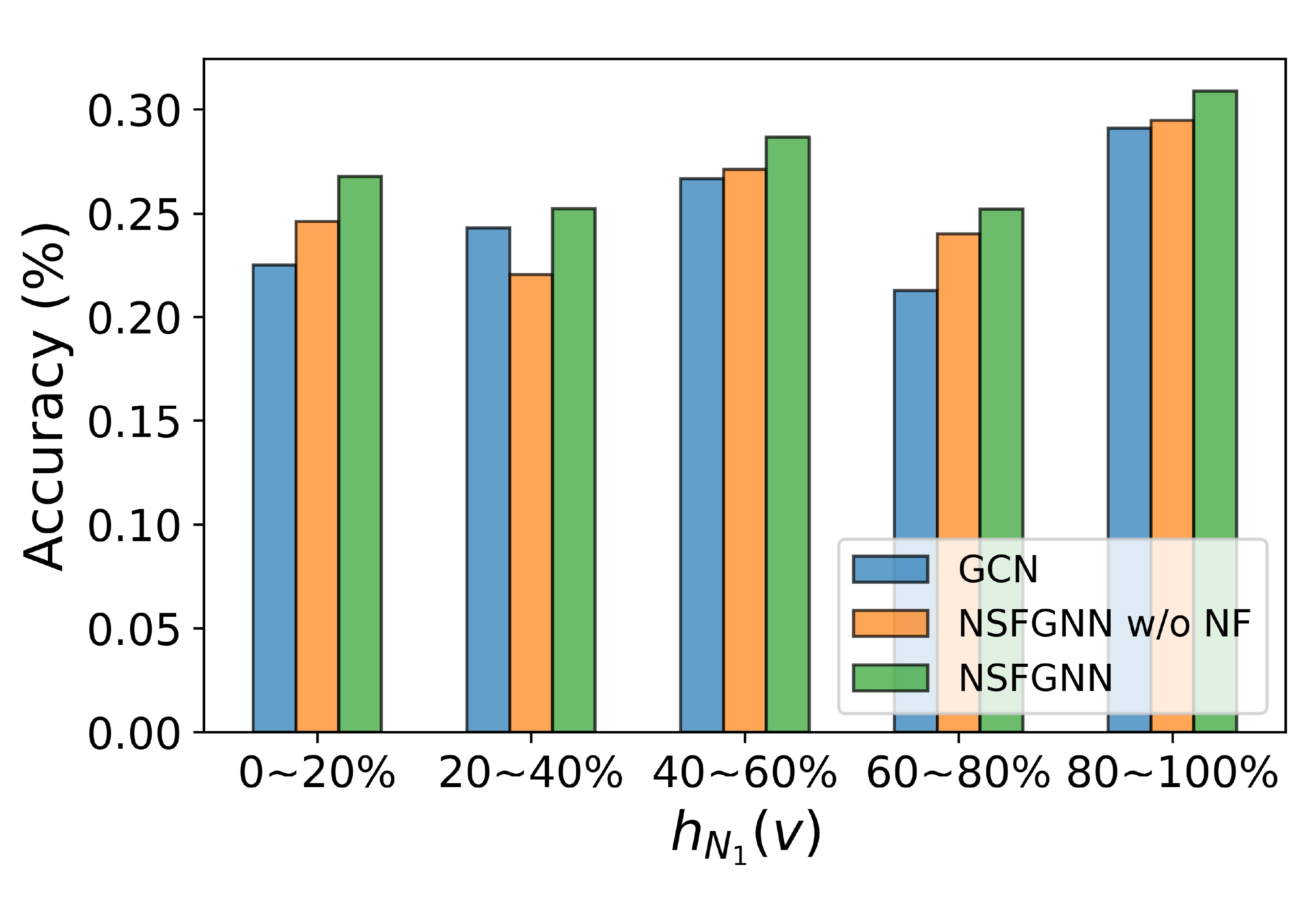}
   }
   \hspace{-2mm}
   \subfigure[Texas]{
   \includegraphics[width=40mm]{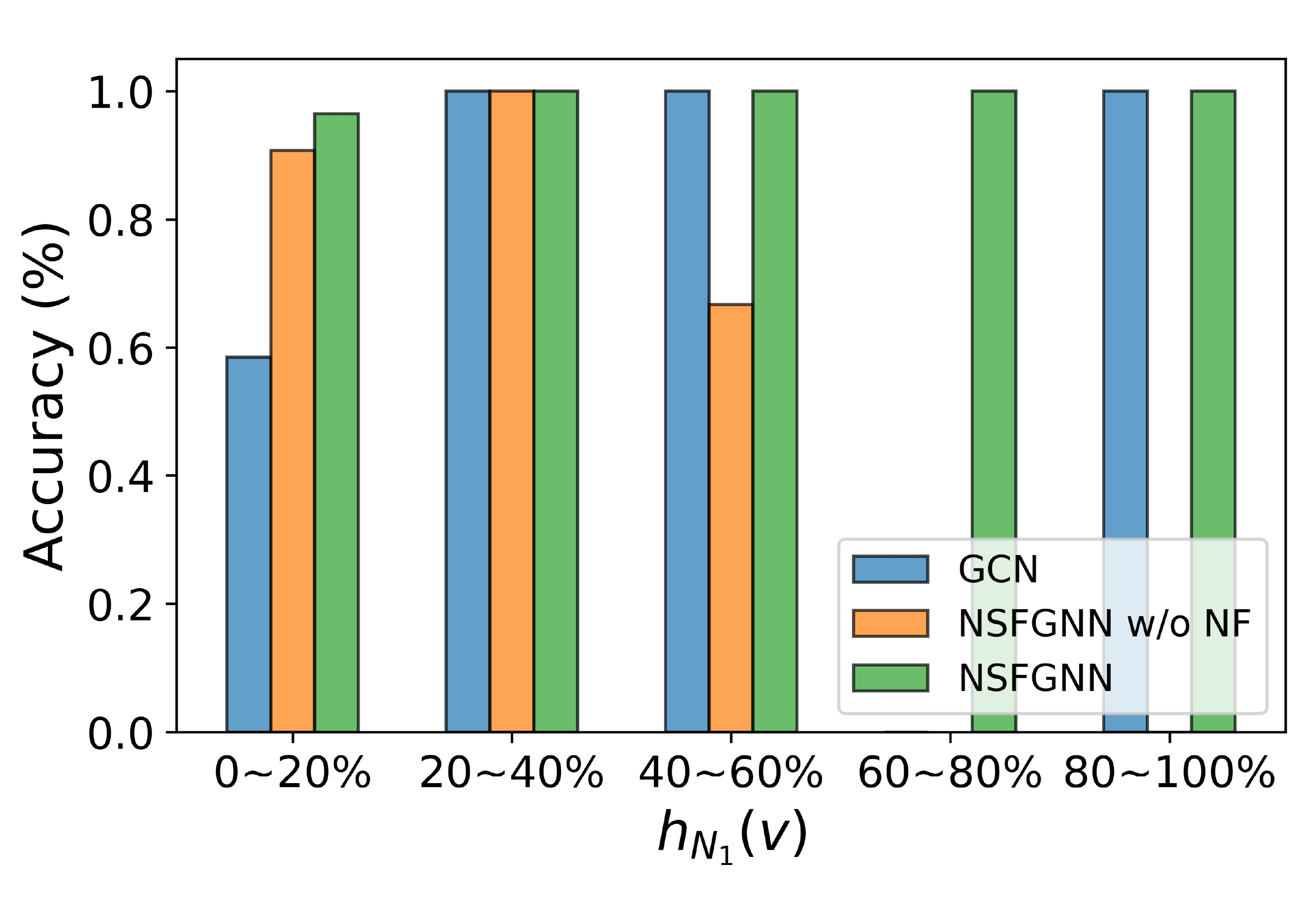}
   }
   \hspace{-2mm}
   \subfigure[Actor]{
   \includegraphics[width=40mm]{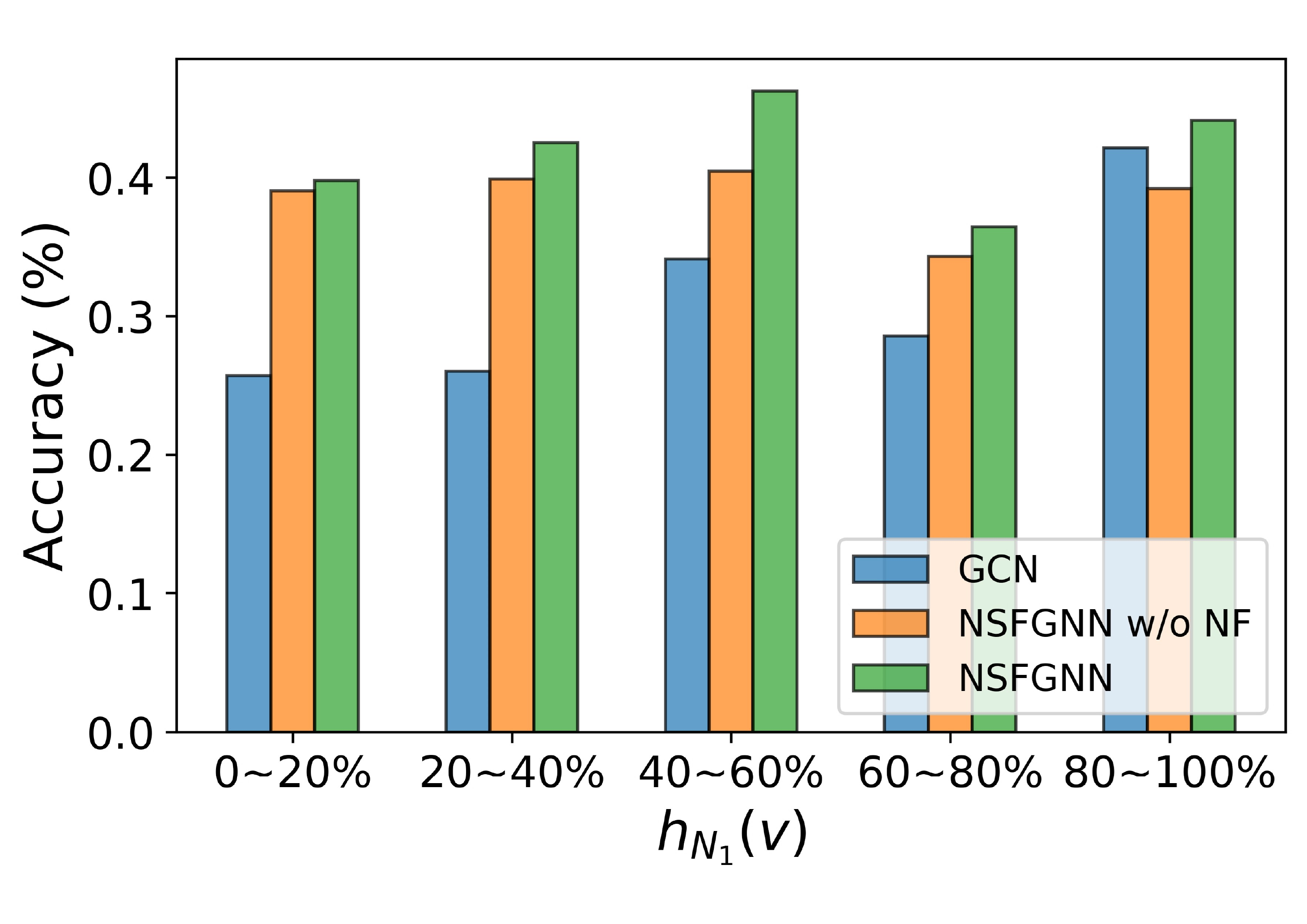}
   }
   \vspace{-3mm}
    \caption{Mean classification accuracy of nodes range by homophily ratio $h_{N_1}(v)$ on four datasets. 
    }
   \vspace{-3mm}
    \label{fig::node_analysis}
\end{figure*}

\begin{figure*}[t]
    \centering
   \subfigure[Cora]{
   \includegraphics[width=40mm]{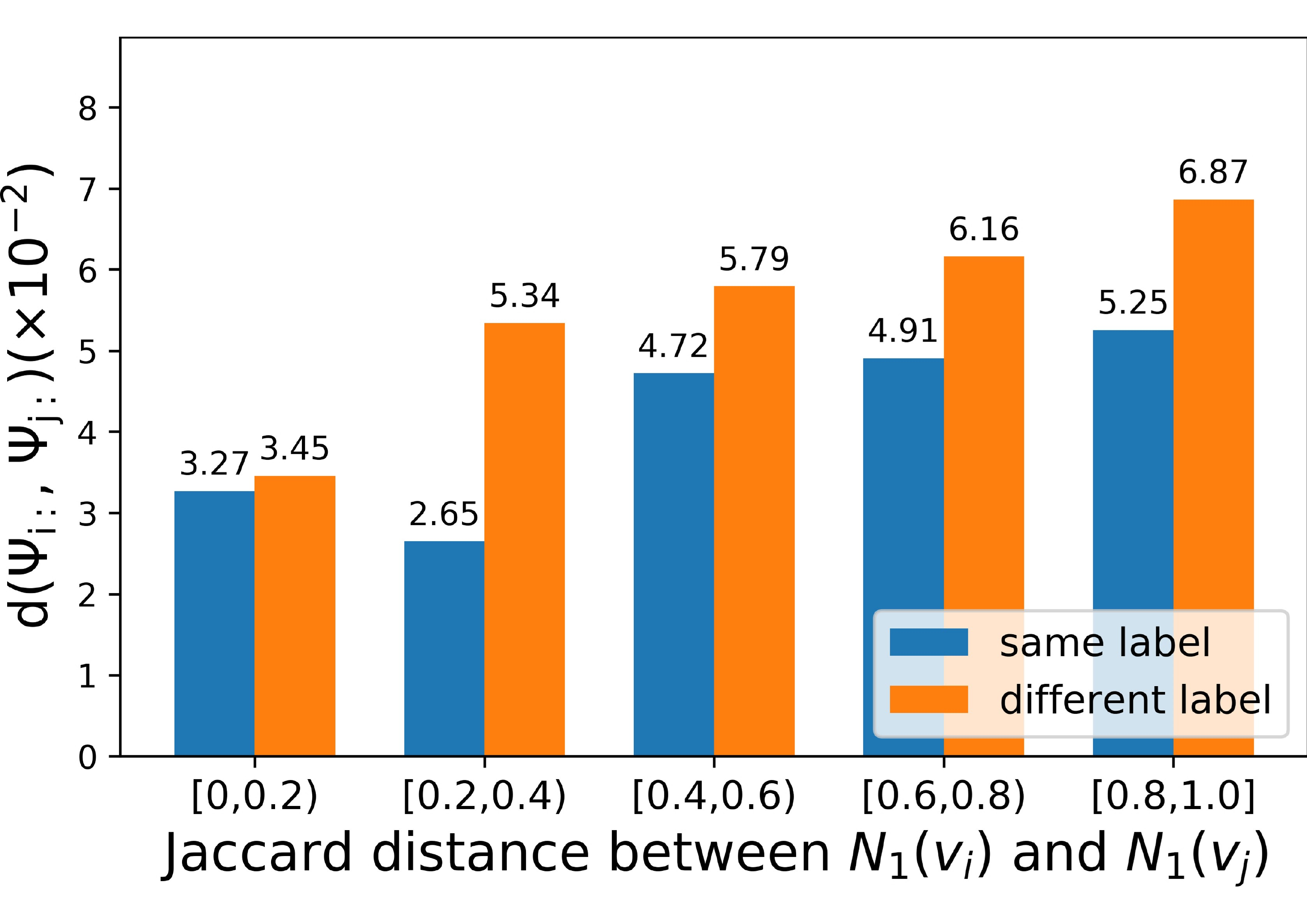}
   }
   \hspace{-2mm}
   \subfigure[Citeseer]{
   \includegraphics[width=40mm]{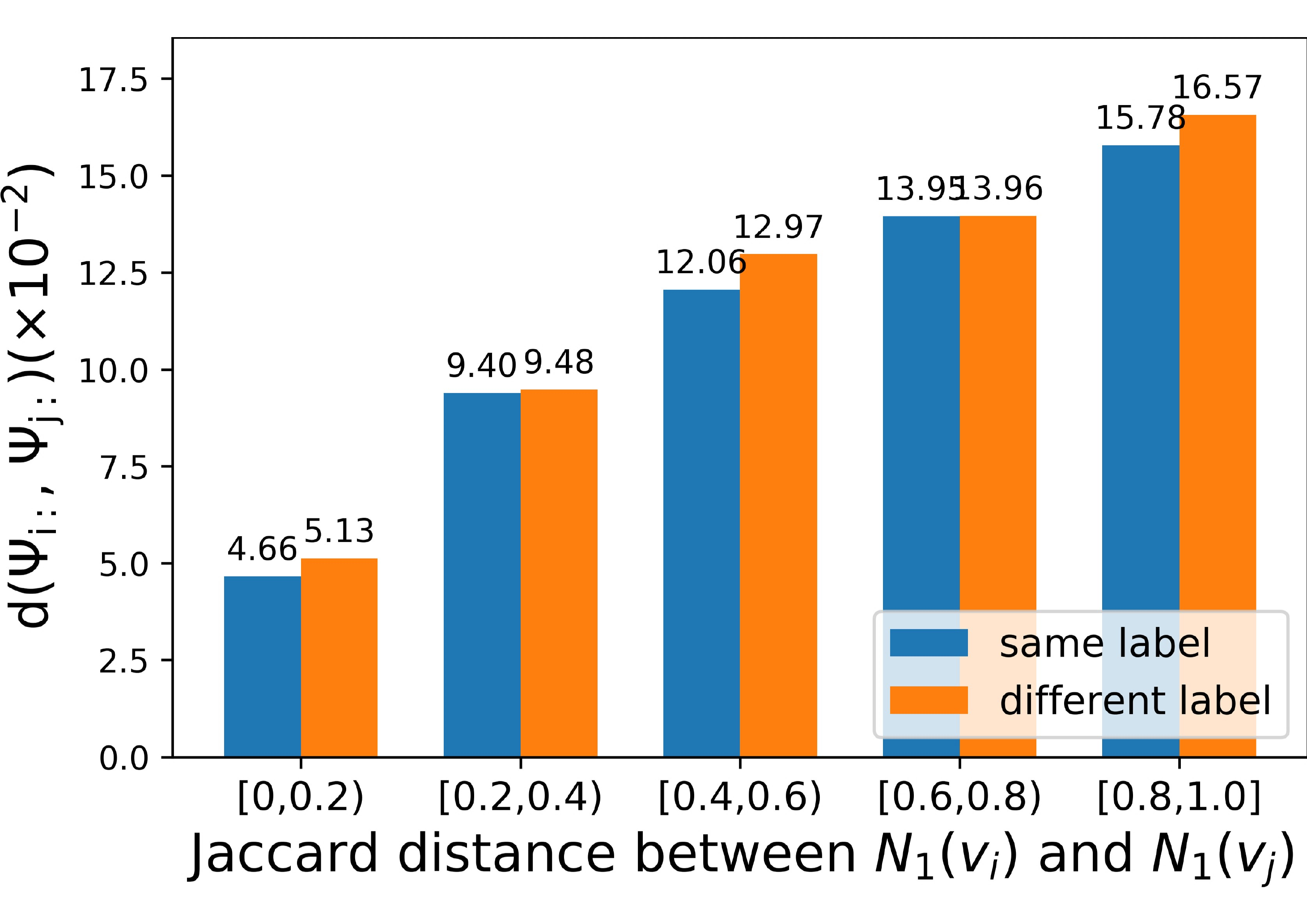}
   }
   \hspace{-2mm}
   \subfigure[Texas]{
   \includegraphics[width=40mm]{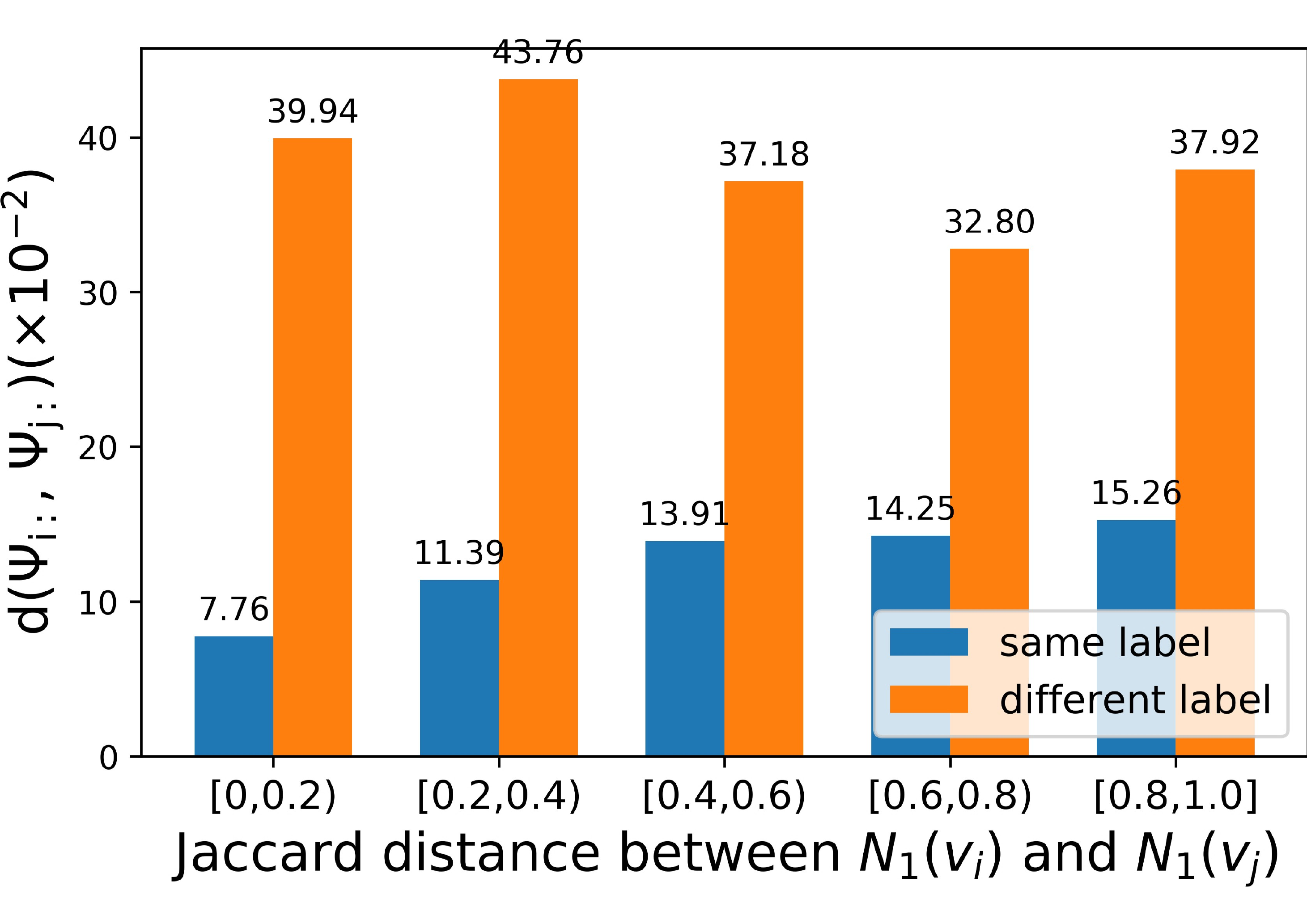}
   }
   \hspace{-2mm}
   \subfigure[Actor]{
   \includegraphics[width=40mm]{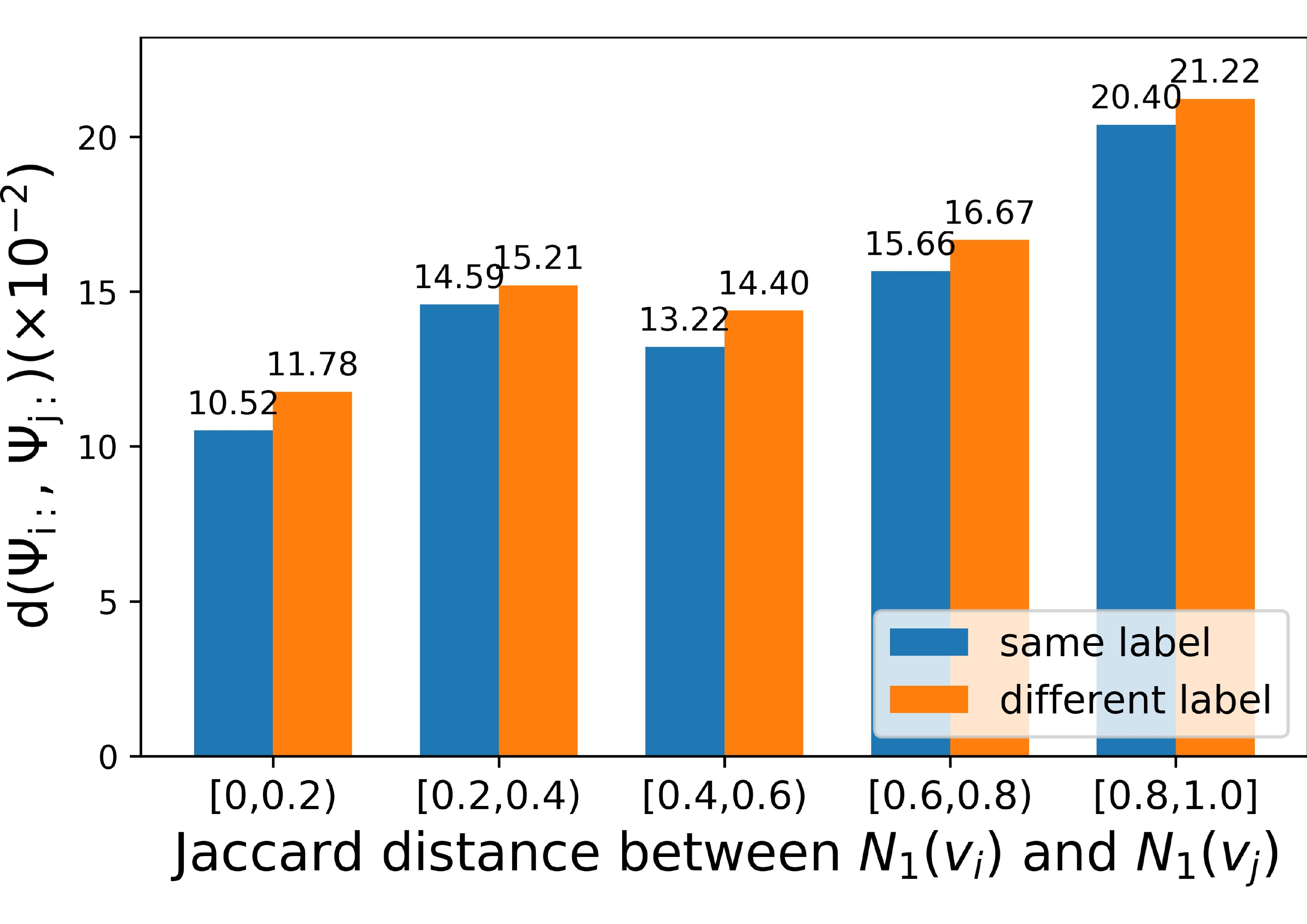}
   }
   \vspace{-10pt}
    \caption{Average coefficient distance $d(\Psi_{i:},\Psi_{j:})$ of node pairs $(v_i, v_j)$ range by Jaccard distance between $N_1(v_i)$ and $N_1(v_j)$.The blue bar denotes $d(\Psi_{i:},\Psi_{j:})$ with the same label, and the orange bar denotes $d(\Psi_{i:},\Psi_{j:})$ with different labels.}
   \vspace{-10pt}
    \label{fig::coefficient_analysis}
\end{figure*}

\subsection{Performance Comparison}
\textcolor{R2}{\textbf{Transductive node classification.}} The results of the node classification task under two split settings are reported in Table~\ref{tab::semi} and Table~\ref{tab::full}, respectively.  
\textcolor{revise}{To verify the improvements gained by NFGNN, we also perform the paired t-test between NFGNN and the second-best baselines on the datasets that NFGNN achieves the best performance, and report the p-values in Table~\ref{tab::semi} and Table~\ref{tab::full}. It can be seen that the results on different datasets are statistically significant with p $<$ 0.05 as tested via the paired t-test.
When the sparse splitting is employed for the semi-supervised learning, NFGNN shows superior performance on 6 datasets and demonstrated comparable results on the remaining 4 datasets in contrast to the baselines. In addition, under the full-supervised learning setting, NFGNN outperforms all the baselines on 7 datasets and achieves comparable results on the other 3 datasets.}
The performance variance between different methods can be relatively large on heterophily graphs under a full-supervised learning setting. Especially on Chameleon and Squirrel graphs, NFGNN outperforms the SOTA method BMGCN by a large margin, i.e., 2.83\% and 5.03\%, demonstrating the superiority of our method. What's more, as shown in Table~\ref{tab::large}, the proposed NFGNN still performs better than other global filter learning methods on graphs with large scales, which further verifies the effectiveness of the adaptive localized filtering.

\noindent\textcolor{R2}{\textbf{Inductive node classification.} Table~\ref{tab::inductive} reveals the impressive performance of NFGNN, not only on the observable subsets of the 8 datasets but also on each unobserved test subset. This can be attributed to the fact that NFGNN is equipped with the capability to adaptively filter each node, rather than relying on a generic global filter. As a result, NFGNN can effectively capture and utilize the unique characteristics and patterns present in different nodes, leading to its robust performance across various subsets compared to the previous spectral-based GNNs.}

Meanwhile, several interesting phenomena can be observed:
\textbf{i}) some GNNs are even inferior to MLP and LINK on some heterophilic graphs.
The performance of MLP shows that the utilization of node features is also very important for GNNs. Besides, the performance of LINK shows that there is still information in the topology that has not been captured by existing GNNs.
\textbf{ii}) The filter-learning-based methods generally have a good performance on both the  homophilic and heterophilic graphs, indicating that adaptive filter learning does have good transferability.



\subsection{Node-level Analysis}
\label{node_analysis}
The motivation of the proposed NFGNN is to solve the mixed local patterns discussed in Sect.~\ref{sect::motivation}. 
Therefore, we divide the test nodes into 5 different intervals according to the homophilic 1-hop neighbor ratio $h_{N_1}(v)$ and report the mean accuracy of each interval. The results of GCN, NFGNN with only $\Gamma$ (marked as NFGNN w/o NF), and NFGNN are shown in Fig.~\ref{fig::node_analysis}. It should be noticed that the NFGNN w/o NF is equivalent to learning a globally consistent filter using the Chebyshev polynomial. 
Different from GCN, NFGNN has a promising and similar performance on all five intervals as shown in Fig.~\ref{fig::node_analysis}(c) and (d). It indicates that NFGNN can effectively capture the various local patterns under the condition as long as the amount of trainable data is sufficient. Besides, both NFGNN and NFGNN w/o NF perform better than GCN on the semi-supervised node classification task, as shown in Fig.~\ref{fig::node_analysis}(a) and (b). It suggests that adaptive learning filters are no less expressive than pre-designed filters, even in the semi-supervised case.

As we discussed before, the local patterns can also be analyzed based on the neighborhood subgraphs of the nodes. In general, if the local patterns of nodes are similar, then the coefficients of the learned filter for that nodes should also be similar. Therefore, in addition to classification performance, we also perform a node-level experiment to analyze the ability of NFGNN to handle the mixing local patterns according to the coefficient matrix $\Psi$. Specifically, we also compute the 1-order neighbor's Jaccard distance $J(N_1(v_i), N_1(v_j))$ for all node pairs $\{(v_i, v_j):v_i, v_j \in \mathcal{V} \wedge i \neq j \}$ to measure the similarity of 1-order local pattern between nodes. Then, for each node pair $(v_i, v_j)$, we compute the average coefficients distance $d(\Psi_{i:}, \Psi_{j:})$ according to the intervals of $J(N_1(v_i), N_1(v_j))$ as shown in Fig.~\ref{fig::coefficient_analysis}. On the whole, as we can see, the larger $J(N_1(v_i), N_1(v_j))$ is, the correspondingly larger $d(\Psi_{i:}, \Psi_{j:})$. More specifically, the $d(\Psi_{i:}, \Psi_{j:})$ of the node pair with the same label is smaller than the node pairs with different labels. The visualization results illustrate that our NFGNN is at least capable of learning a variety of 1-hop local pattern properties.

\begin{table}[t]
\setlength{\tabcolsep}{2.5pt}
\scriptsize
\centering
\caption{Accuracy (\%) improvement of the node-oriented filtering (NF).}
\begin{tabular}{@{}c|c|cccccc@{}}
\toprule
Basis     &          & Cora  & Citeseer & Pubmed & Chameleon & Actor & Texas   \\ \midrule
\multirow{3}{*}{Monomial}  
          & w/o NF    & 78.15 & 66.60    & 82.28  & 61.79     & 38.88 & 91.80   \\
          & w/ NF   & 79.16 & 68.42    & 84.79  & 63.36     & 39.53 & 91.47   \\ 
          & Improv.  & (1.01)& (1.82)   & (2.51) & (1.57)    & (0.66)& (-0.33) \\ \midrule

\multirow{3}{*}{Bernstein} 
          & w/o NF    & 76.32 & 65.61    & 82.10  & 67.82     & 39.31 & 92.29  \\
          & w/ NF   & 78.41 & 66.22    & 83.09  & 69.93     & 40.77 & 93.37  \\ 
          & Improv.  & (2.09)& (0.61)   & (0.99) & (2.11)    & (1.46)& (1.08) \\ \midrule

\multirow{3}{*}{Chebyshev} 
          & w/o NF    & 76.07 & 65.11    & 84.02  & 68.48     & 39.11 & 92.47  \\
          & w/ NF   & 77.69 & 67.74    & 85.07  & 72.52     & 40.62 & 94.03  \\ 
          & Improv.  & (0.62)& (2.63)   & (1.05) & (4.04)    & (1.51)& (1.56) \\ \bottomrule
\end{tabular}
\label{tab::acc_improve}
\vspace{-15pt}
\end{table}

\begin{figure}[t]
    \centering
   \subfigure[Globally shared filter on homophilic graphs]{
   \includegraphics[width=42mm]{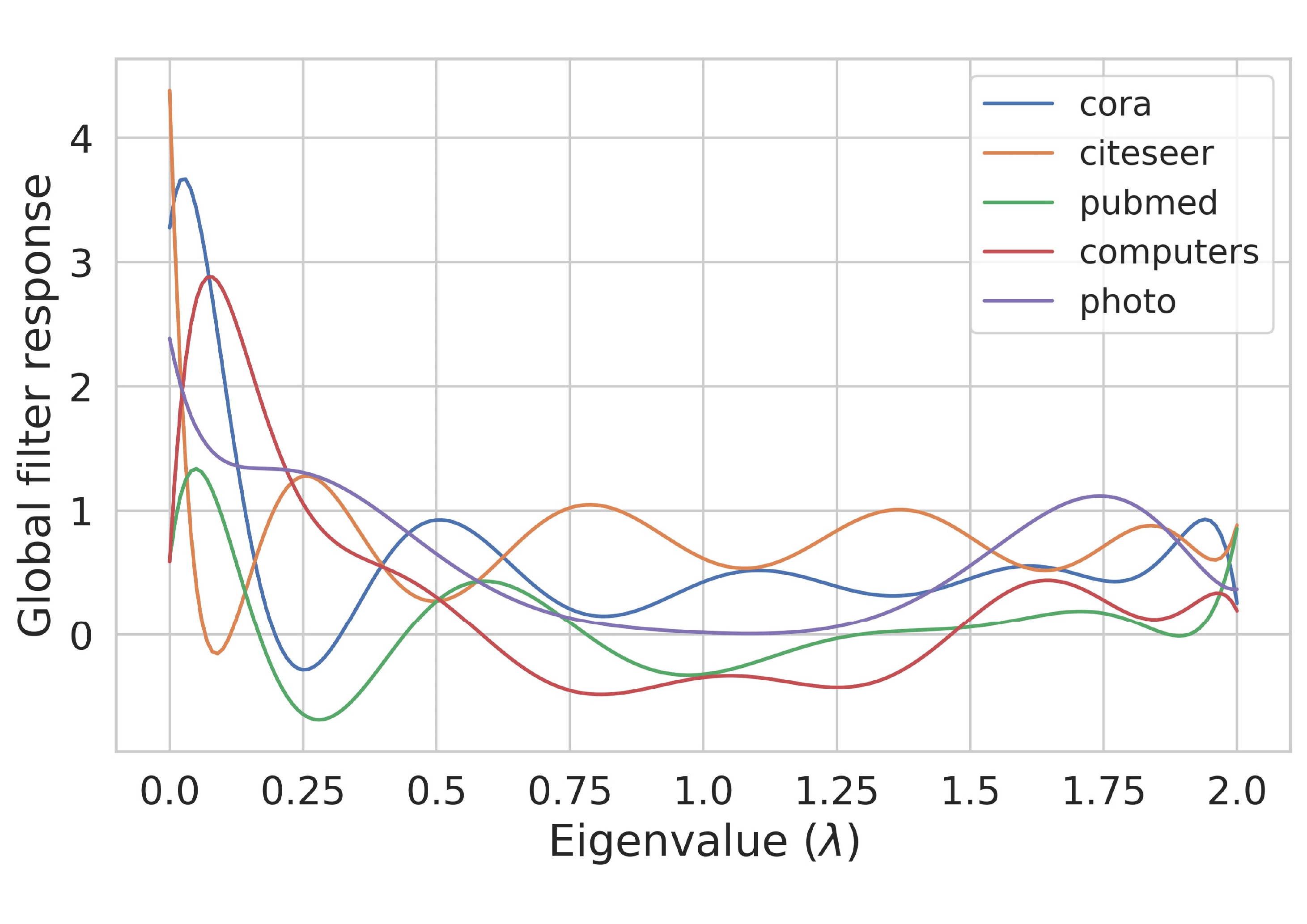}
   }
   \hspace{-1mm}
   \subfigure[Globally shared filter on heterophilic graphs]{
   \includegraphics[width=42mm]{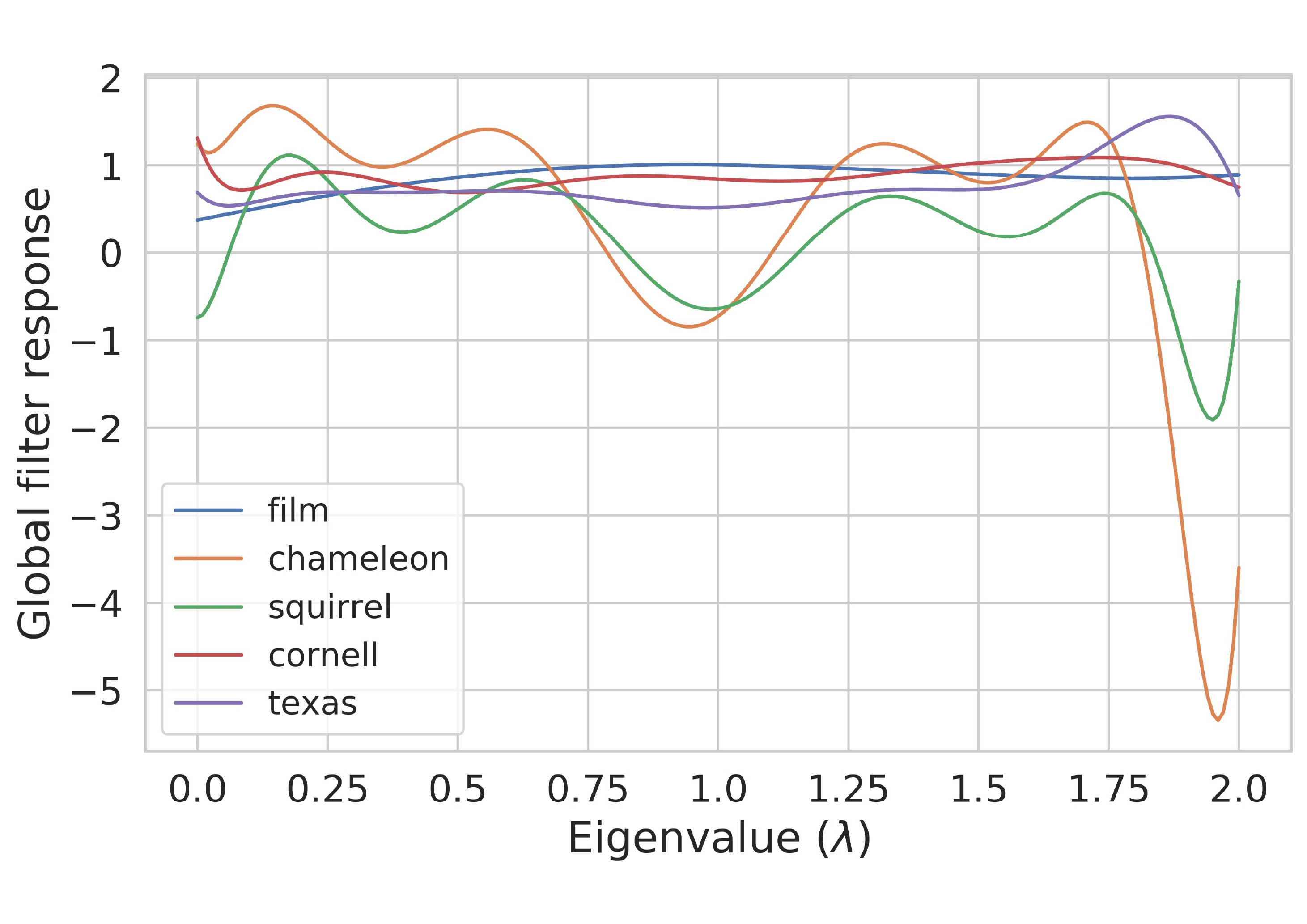}
   }
   \vspace{-7pt}
    \caption{Visualization of the learned globally shared filters on homophilic and heterophilic graphs, respectively. The filters in (a) are all roughly low-pass filters, while the filters in (b) are not identical.}
    \label{fig::visualization}
    \vspace{-7pt}
\end{figure}

\subsection{Effectiveness of the Node-oriented Filtering}
\noindent\textbf{\textcolor{R2}{Ablation Study.}}
To evaluate the effectiveness of the proposed node-oriented filtering more comprehensively, we first compare the performance of NFGNN and NFGNN w/o NF. Further, since the node-oriented filtering is independent of the polynomial basis, the Chebyshev basis is replaced by the Monomial basis and Bernstein basis, respectively, and we check the improvement brought by the node-oriented filtering mechanism for them. For the Bernstein basis, we refer to the implementation form given in~\cite{BernNet}. 
The results on six graphs are summarized in Table~\ref{tab::acc_improve}. Firstly, it can be seen that the globally consistent filters learned using three different bases have leading performance on different datasets, respectively, illustrating the effectiveness of using a polynomial approximation to learn filters. Particularly, the Monomial basis seems more suitable for semi-supervised node classification of homophilic graphs, while the Chebyshev basis performs well on heterophilic graphs. 
Furthermore, except for the Monomial basis on the Texas graph, the node-oriented filtering mechanism has different enhancements for each basis. The improvements not only validate the effectiveness of the proposed node-oriented filtering, but also demonstrate that the polynomial filter and the node-oriented filtering can each other to some extent. 

\noindent\textcolor{R2}{\textbf{Filter Visualization.}}
When we set $d = 1$, $\mathbf{\Gamma}$ can be seen directly as $\{\gamma_k\}_{k=0}^K$ in Eq.~(\ref{eq::Poly_Appro}).
Therefore, we can extract the global shared coefficient matrix $\mathbf{\Gamma}$ and visualize the learned \textbf{global shared filters} according to $\mathbf{\Gamma}$ and the Chebyshev bases. As shown in Fig.~\ref{fig::visualization}, the filters learned on homophilic graphs show strong low-pass properties, while the filters learned on heterophilic graphs have their own focus. Among them, the filter learned on the Chameleon graph shows a strong high-pass property, and the filter learned on the Squirrel graph shows a distinct comb-like property.

\textcolor{R2}{In addition, we also provide visualizations of the local node-oriented filters corresponding to nodes that exhibit different levels of homophily when $d > 1$.
we begin by selecting nodes into three subsets based on their homophily ratio within 2-hops:
$\mathrm{Subset}_{LP} = \{u: u \in \mathcal{V} \wedge h_{N_1}(v) > 0.7\}$, $\mathrm{Subset}_{BP} = \{u: u \in \mathcal{V} \wedge 0.4<h_{N_1}(v)<0.7 \wedge 0.4<h_{N_2}(v)<0.7\}$, and $\mathrm{Subset}_{HP} = \{u: u \in \mathcal{V} \wedge h_{N_1}(v) < 0.2 \wedge h_{N_2}(v) > 0.7 \}$.
Then, we randomly select 3 nodes from each subset and plot the frequency response curves of their corresponding filters as shown in Fig.~\ref{fig::Freq_Response}. It can be noted that the curves of the same color exhibit similar characteristics, whereas the curves of different colors display certain variations among them. The visualization of filters corroborates the effectiveness of NFGNN, i.e., NFGNN can learn filters adaptively based on the local patterns of the nodes.}

\noindent\textbf{\textcolor{R2}{Time Consumption.}} \textcolor{revise}{Except for what is mentioned above, we also report the average training time per epoch and the total training time on several datasets with different scales as numerical complexity analysis. It can be seen from Table~\ref{tab::time_analysis} that NFGNN is only a little slower than GPRGNN and much faster than BernNet, with respect to the average training time per epoch. Besides, in terms of the total training time, there  exists a certain difference between NFGNN and GPRGNN.
It may be due to the slower convergence of NFGNN compared to GPRGNN, and more epochs are needed for model training. Nevertheless, NFGNN is still much faster than BernNet.} 

\begin{table}[t]
\centering
\caption{\textcolor{revise}{Average running time per epoch (ms)/average total training time (s).}}
\label{tab::time_analysis}
\begin{tabular}{@{}ccccc@{}}
\toprule
          & NFGNN      & GPR-GNN    & BernNet    & GCN        \\ \midrule
\textcolor{revise}{Cora}      & \textcolor{revise}{6.1/1.89 }  & \textcolor{revise}{5.9/1.21}   & \textcolor{revise}{14.7/5.03}  & \textcolor{revise}{3.1/1.05}   \\
\textcolor{revise}{PubMed    }& \textcolor{revise}{6.9/2.52  } & \textcolor{revise}{6.4/1.49  } & \textcolor{revise}{15.7/5.92  }& \textcolor{revise}{3.4/1.28  } \\
\textcolor{revise}{Computers }& \textcolor{revise}{6.8/3.17  } & \textcolor{revise}{6.5/2.27  } & \textcolor{revise}{18.4/11.12 }& \textcolor{revise}{4.0/1.88  } \\
\textcolor{revise}{Actor     }& \textcolor{revise}{6.4/1.28  } & \textcolor{revise}{5.6/1.18  } & \textcolor{revise}{13.7/2.76  }& \textcolor{revise}{3.6/0.71  } \\
\textcolor{revise}{Squirrel  }& \textcolor{revise}{5.9/1.19  } & \textcolor{revise}{5.8/1.16  } & \textcolor{revise}{14.2/2.86  }& \textcolor{revise}{4.0/0.81  } \\
\textcolor{revise}{Cornell   }& \textcolor{revise}{5.9/1.16  } & \textcolor{revise}{5.2/1.03  } & \textcolor{revise}{15.5/3.8   }& \textcolor{revise}{3.0/0.63  } \\
\textcolor{revise}{pokec     }& \textcolor{revise}{25.9/12.33} & \textcolor{revise}{23.9/11.16} & \textcolor{revise}{29.8/14.67 }& \textcolor{revise}{41.8/19.92} \\ \bottomrule
\end{tabular}
\end{table}

\begin{figure}[t]
    \centering
   \subfigure[Cora]{
   \includegraphics[height=30mm]{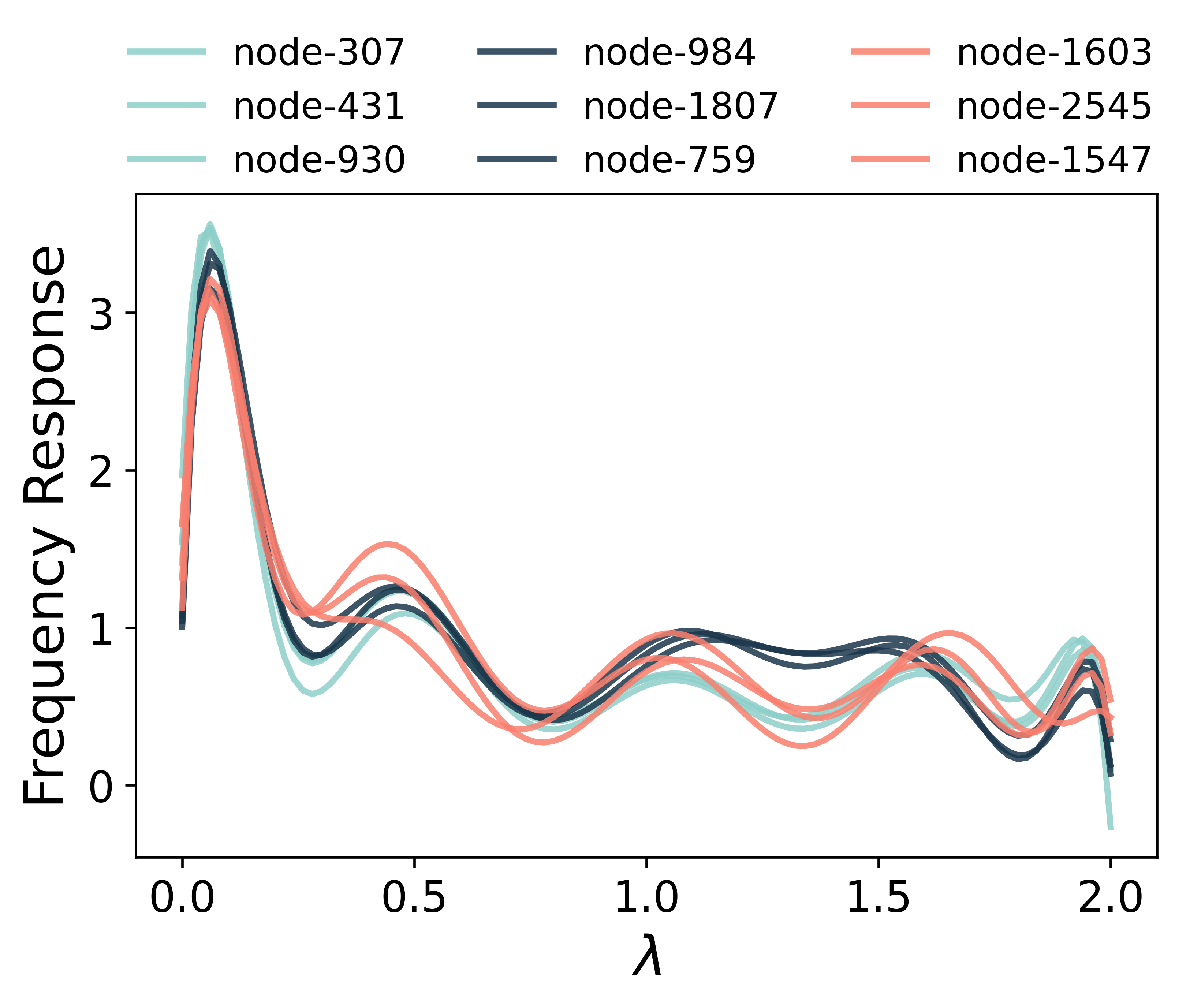}
   }
   \hspace{-2mm}
   \subfigure[Cornell]{
   \includegraphics[height=30mm]{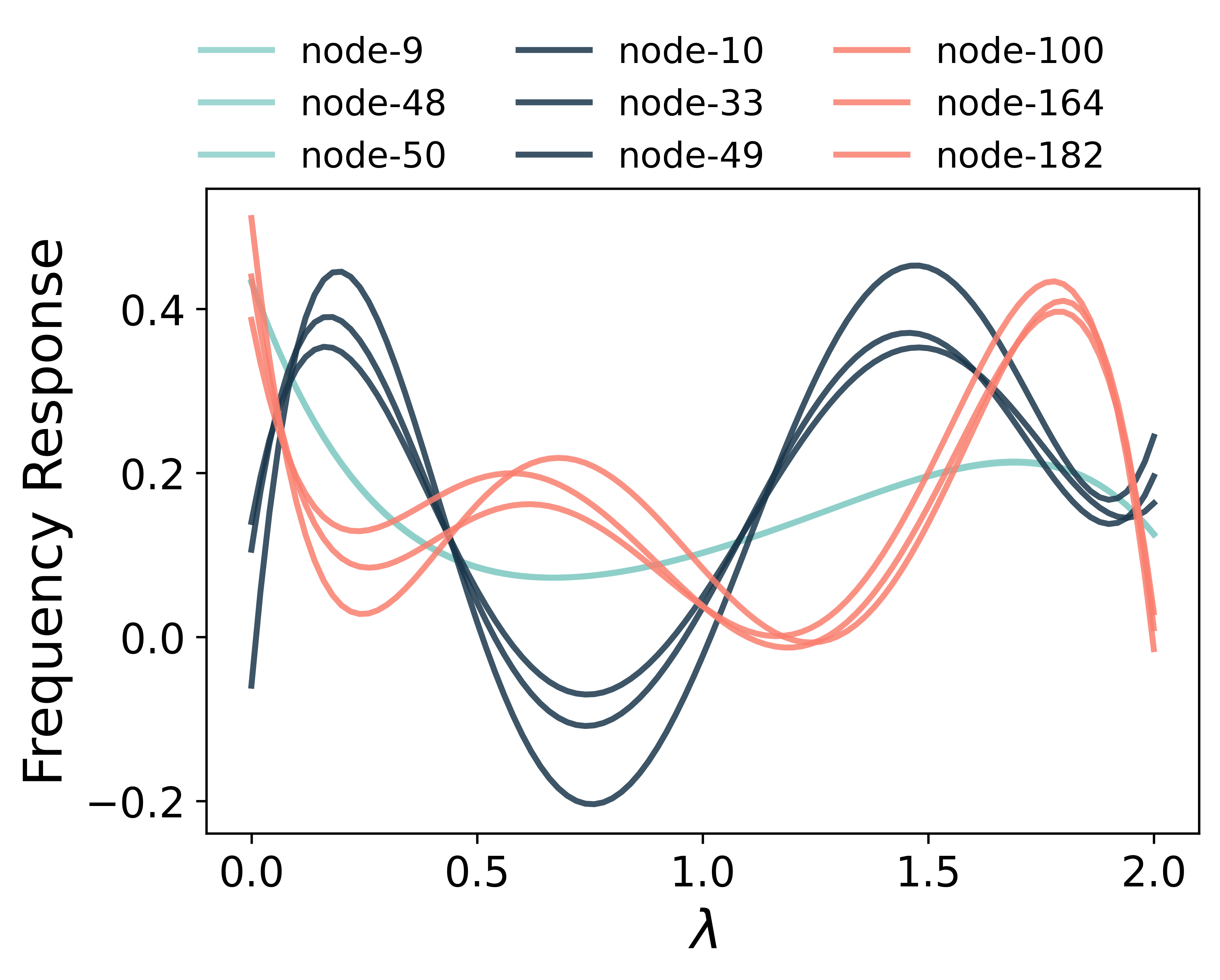}
   }
   \vspace{-10pt}
    \caption{\textcolor{R2}{Visualization of the node-oriented filters of nodes with different local homophilic patterns, where the green lines, the blue lines, and the red lines are the frequency responses of the filters of the nodes sampled from $\mathrm{Subset}_{LP}$, $\mathrm{Subset}_{BP}$, and $\mathrm{Subset}_{HP}$, respectively.}}
    \label{fig::Freq_Response}
\end{figure}

\begin{figure*}[t]
    \centering
   \subfigure[Cora]{
   \includegraphics[width=28mm]{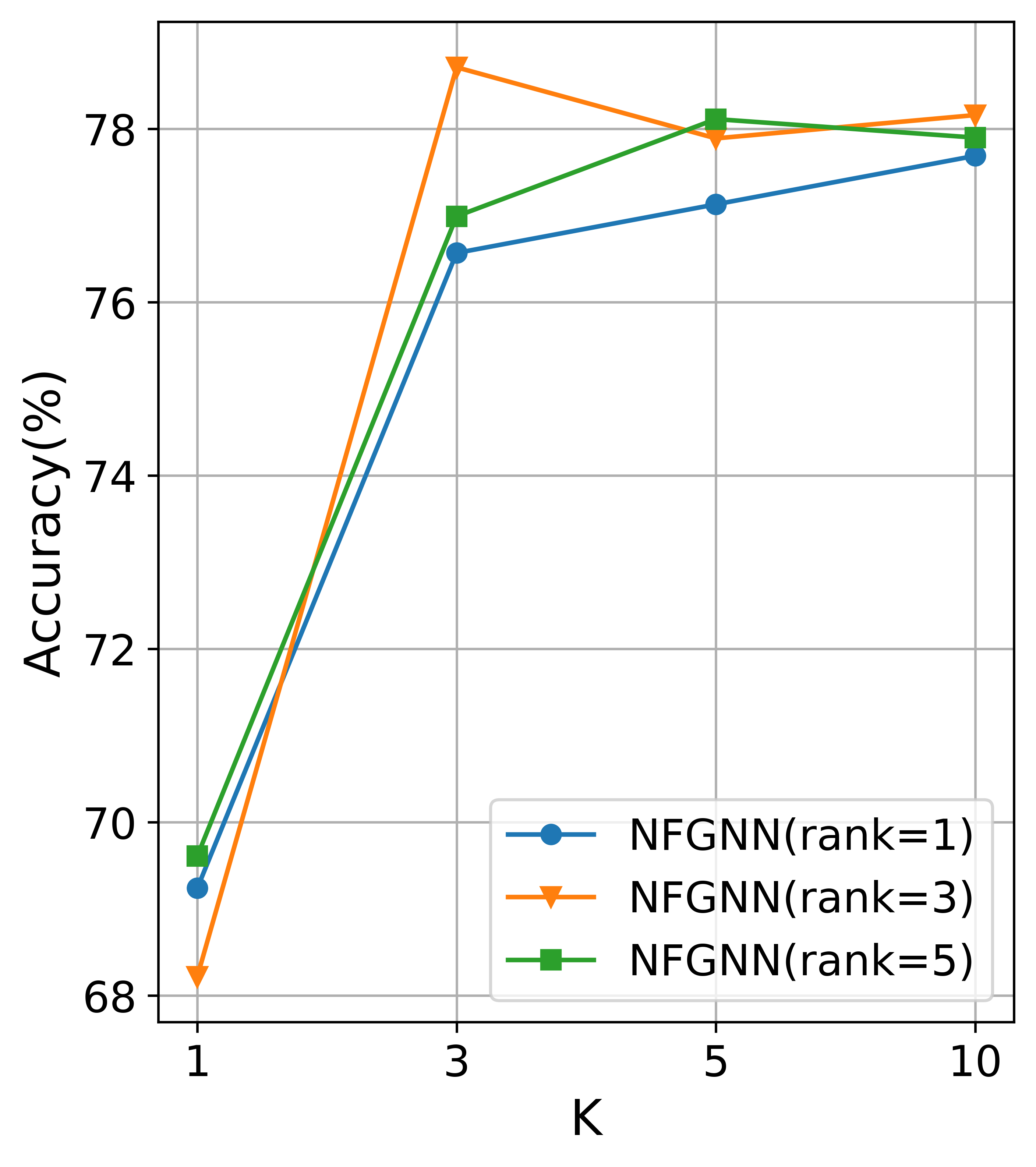}
   }
   \hspace{-2.5mm}
   \subfigure[Citeseer]{
   \includegraphics[width=28mm]{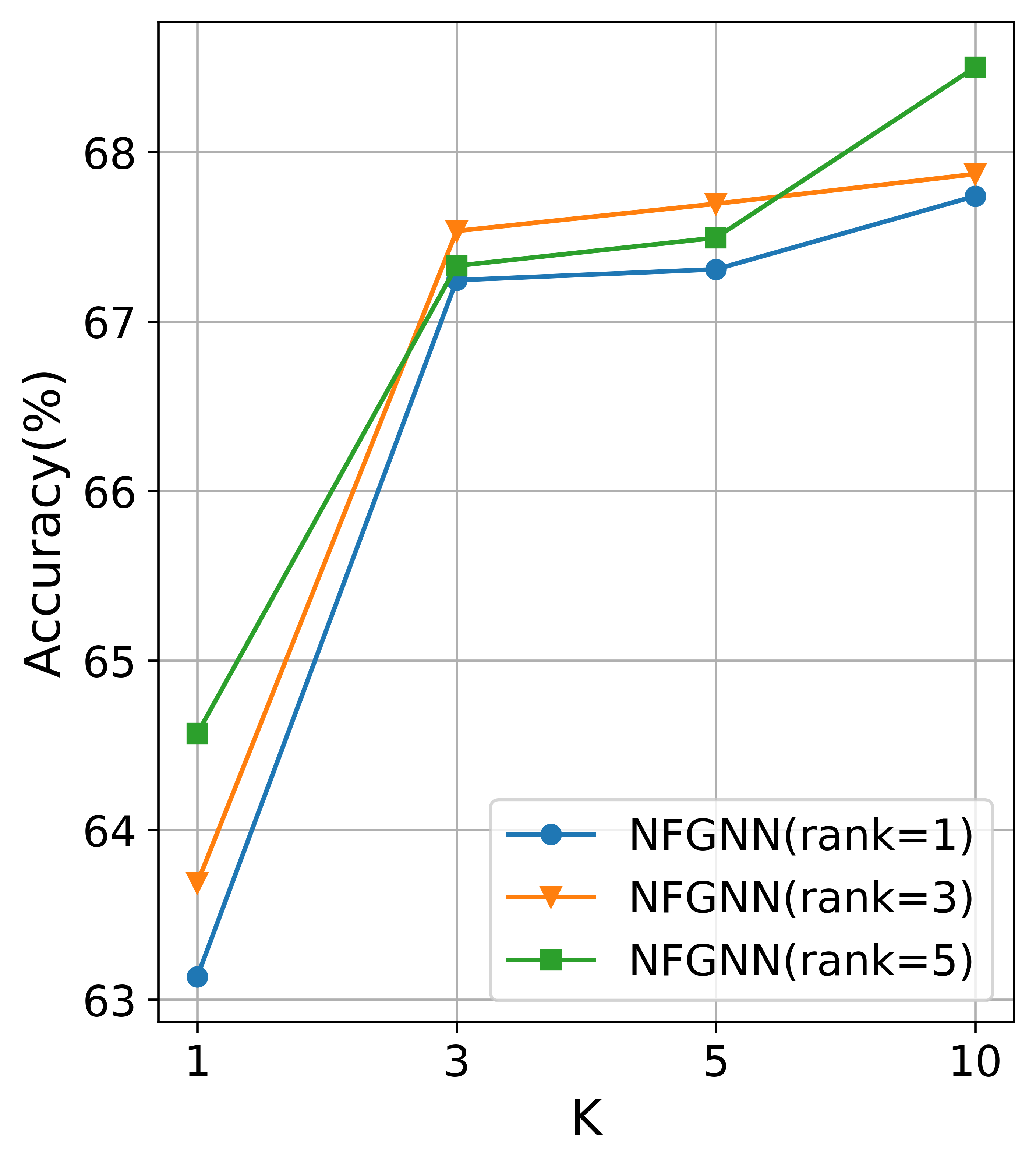}
   }
   \hspace{-2.5mm}
   \subfigure[Photo]{
   \includegraphics[width=29mm]{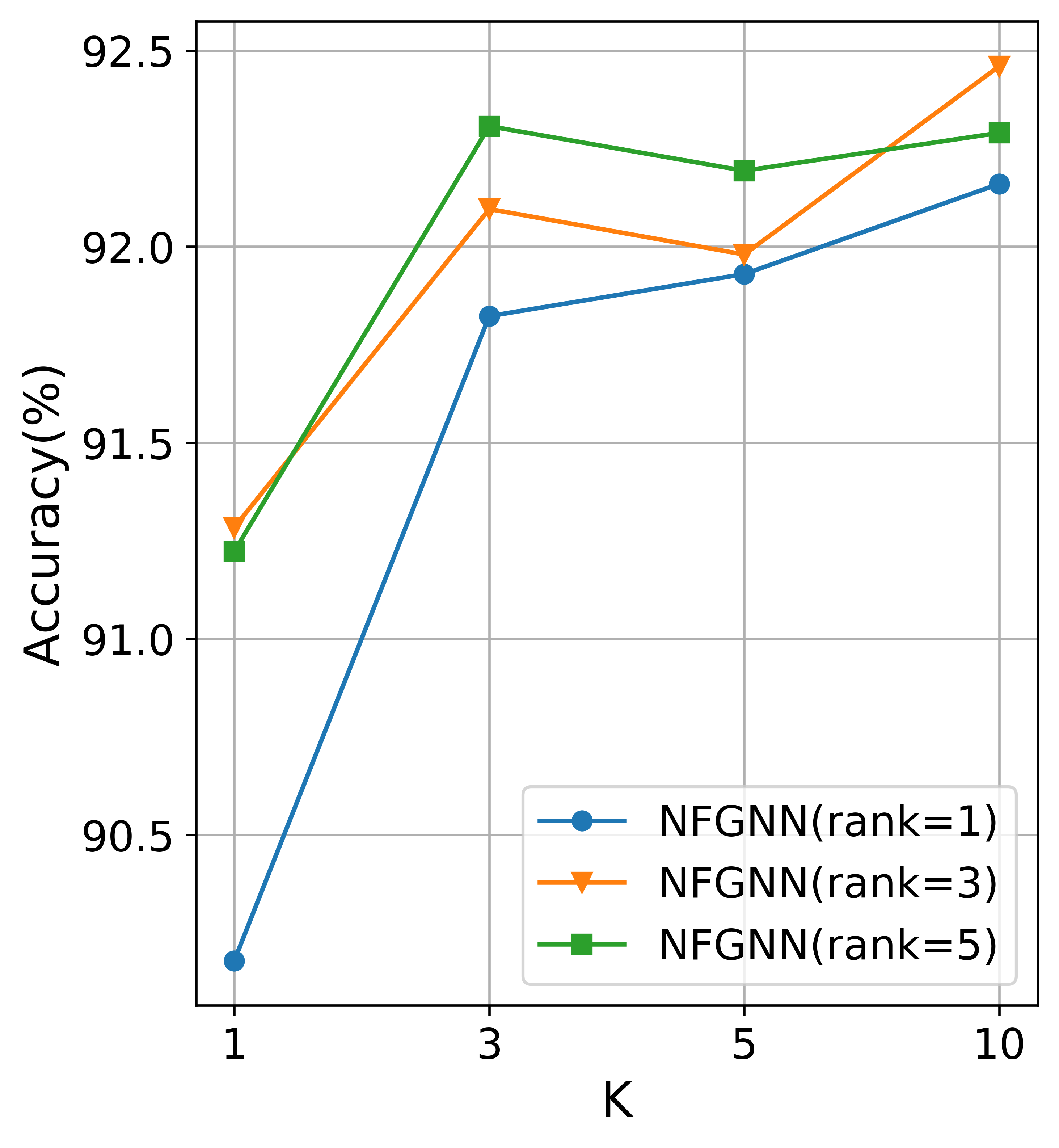}
   }
   \hspace{-2.5mm}
   \subfigure[Texas]{
   \includegraphics[width=29mm]{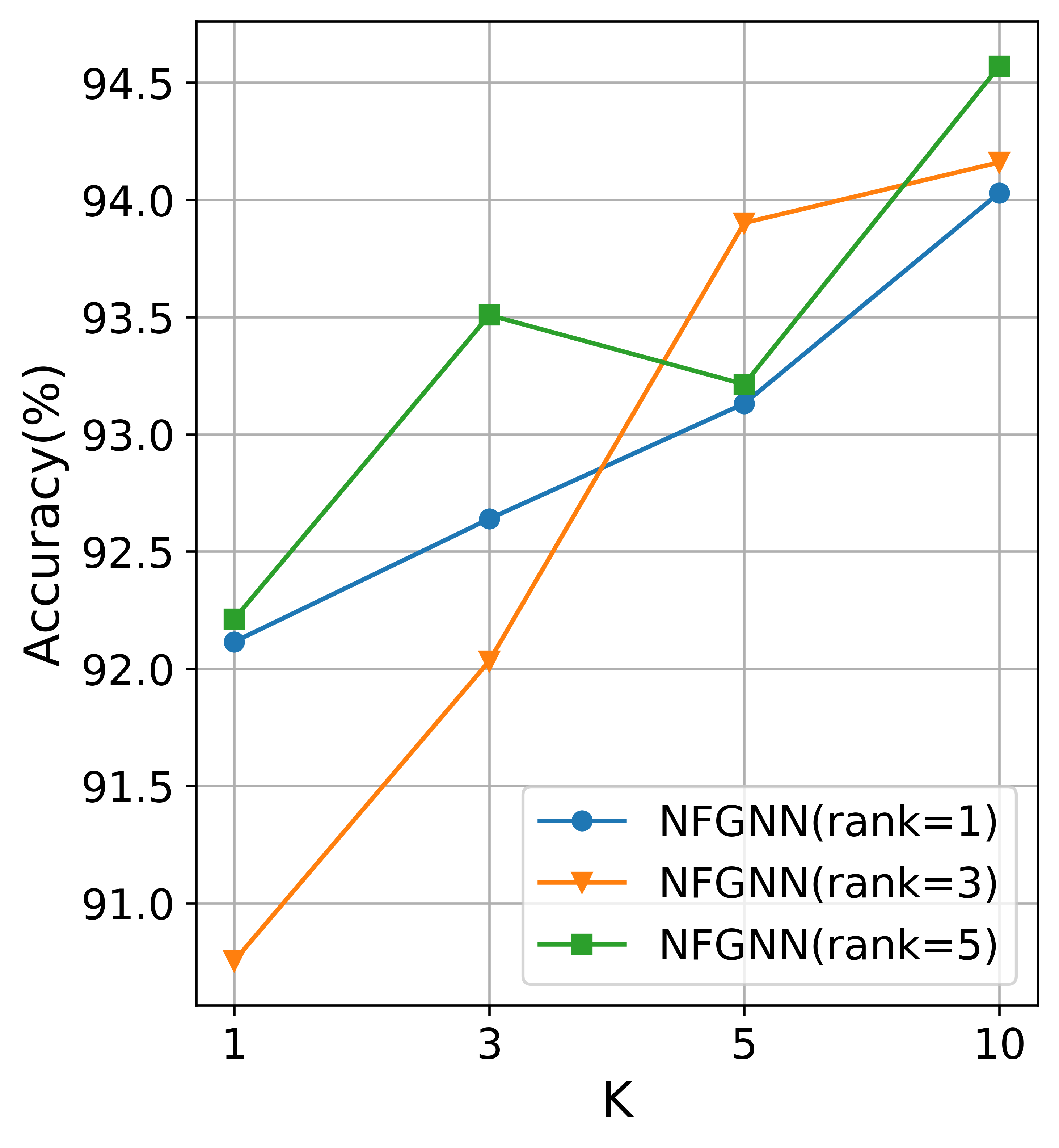}
   }
   \hspace{-2.5mm}
   \subfigure[Chameleon]{
   \includegraphics[width=28mm]{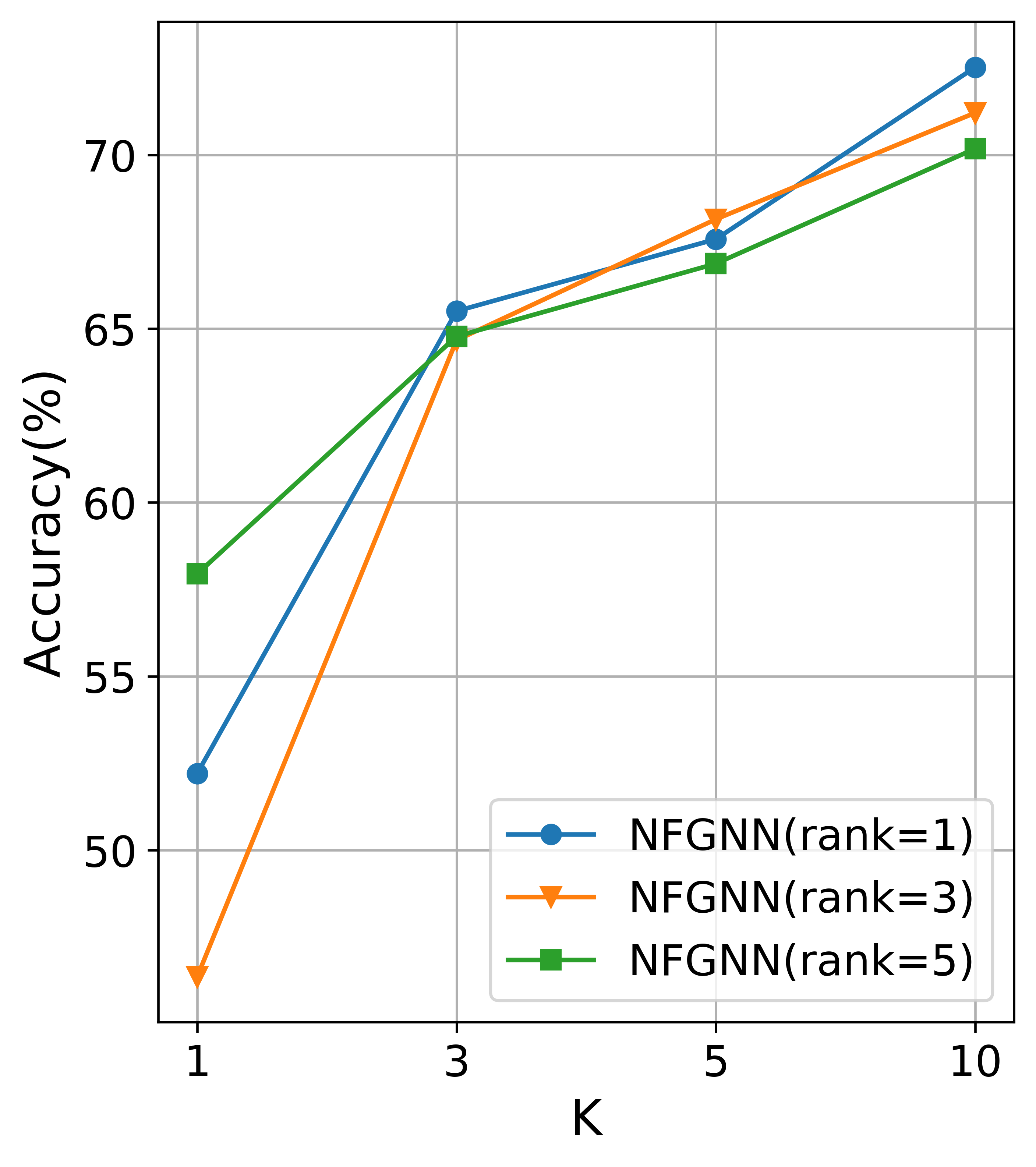}
   }
   \hspace{-2.5mm}
   \subfigure[Actor]{
   \includegraphics[width=29mm]{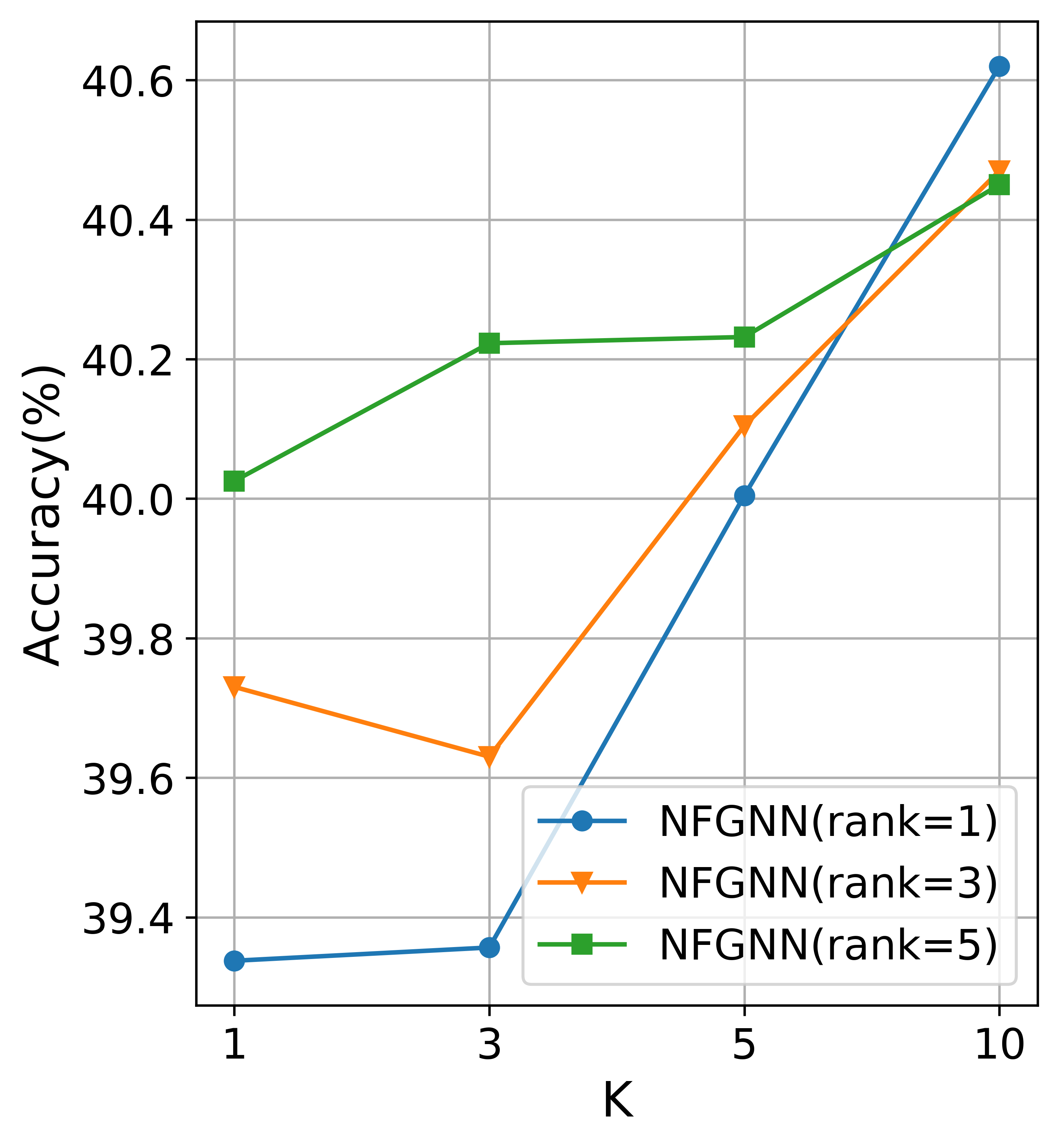}
   }
    \caption{The performance of NFGNN with increasing the rank $d$ from 1 to 5 and increasing $K$ from 1 to 10 .}
    \vspace{-5mm}
    \label{fig::low_rank}
\end{figure*}

\subsection{Analysis of the Low-rank Reparameterization}
To verify the effect of the order $d$ of the low-rank reparameterization and the order $K$ of the polynomial on the performance, we adjust $d$ within $\{1, 3, 5\}$ and $K$ within $\{1, 3, 5, 10\}$ , respectively. The results are reported in Fig.~\ref{fig::low_rank}. In general, performance tends to be improved with $K$ increasing. This is well understood because the polynomial fitting ability increases with the order of the polynomial increasing. Particularly, as we can see from Fig.~\ref{fig::low_rank}(a), (b), (c), and (e), NFGNN gets a large performance gain with $K$ increases from $1$ to $3$, while the gain is relatively marginal with $K$ increases from $3$ to $10$. This phenomenon coincides with our expectations since a strong low-pass filter or a strong high-pass filter can get a good fit by a low-order polynomial. On the contrary, Fig.~\ref{fig::low_rank}(d) and (f) show that some graphs indeed need higher-order polynomials to better fit the complex local patterns. For the order $d$ of the low-rank reparameterization, NFGNN achieves the best performance on Citeseer and Texas graphs when $d=5$, on Chameleon when $d=1$, and  on Cora when $d=3$. Meanwhile, the performance of NFGNN on Photo and Actor graphs does not differ much regardless of the setting of $d$. It shows the setting of $d$ can have some impact on the performance of NFGNN for various graphs.

\section{Discussions}
\subsection{Related GNNs and Their Connections with NFGNN}
\label{sect::GNNs_connection}
In this section, we further discuss some existing GNNs from a polynomial filtering perspective.

\noindent\textbf{ChebNet~\cite{chebyNet}.}  ChebNet first used polynomials to approximate filters, thus avoiding feature decomposition in spectral convolution. The filtering operation based on Chebyshev polynomials $T_k(\cdot)$ is:
\begin{equation}
\label{eq::chebyNet}
    \mathbf{z} = \sum^{K}_{k=0}\theta_k T_k(\tilde{\mathbf{L}})\mathbf{x}
\end{equation}
Theoretically, the Chebyshev polynomials could approximate arbitrary filters if the order $K$ is high enough. 

\noindent\textbf{GCN~\cite{GCN}.} Kipf et al. simplify ChebNet to 1-order polynomial approximation, and set $\theta = \theta_0 = -\theta_1$~\cite{GCN}. Thus, the filtering operation can be seen as:
\begin{equation}
\label{gcn}
    \mathbf{z} = \sum^{1}_{k=0}\theta_k T_k(\tilde{\mathbf{L}})\mathbf{x} = \theta_0(\mathbf{I} + \mathbf{D}^{-1/2}\mathbf{A}\mathbf{D}^{-1/2})\mathbf{x}\approx\theta\tilde{\mathbf{A}}\mathbf{x}
\end{equation}
where $\tilde{\mathbf{A}}$ is the symmetric normalized adjacency matrix with self-loops. Although GCN uses the renormalization trick and non-linear activation function, its filtering capability is still limited since the 1-order polynomial can only approximate a finite number of filters. 
We can derive from Eq.~(\ref{gcn}) that the frequency response of GCN layer is  $\hat{g}(\lambda)=(1-\tilde{\lambda})$. 
Hence, the frequency response of GCN is equivalent to $\hat{g}(\lambda)=(1-\tilde{\lambda})^K$ when stacking $K$ such layers. 
As $K$ tends to infinity, it will approximate an impulse low-pass filter, which means the filtered signal at each node is the same, i.e., over-smoothing. As variants of the GCN, SGC~\cite{SGC} and GCNII~\cite{GCNII} have similar forms.

\noindent\textbf{APPNP~\cite{APPNP}.} An improved propagation scheme is derived from PageRank in \cite{APPNP}, which is also equivalent to a polynomial filtering operation:
\begin{equation}
\label{APPNP}
\mathbf{z} = (1-\alpha)^K\tilde{\mathbf{A}}^K\mathbf{x} + \sum^{K-1}_{k=0}\alpha(1-\alpha)^k\tilde{\mathbf{A}}^k\mathbf{x}
\end{equation}
where $\alpha \in (0,1]$ is a hyper-parameter to control the teleport (restart) probability. The second term is  $K$-order monomial polynomial of $\tilde{\mathbf{A}}$ and the coefficient of the $k$-order is $\alpha(1-\alpha)^k$. It can be seen that the fixed $\alpha$ restricts the expressibility of APPNP, so that only a specific set of filters can be approximated. On the basis of APPNP, GPRGNN~\cite{GPRGNN} and BernNet~\cite{BernNet} improve APPNP via trainable parameters to learn the coefficients of the polynomial.

\noindent\textbf{ARMA~\cite{ARMA}.} A recursive and distributed formulation is used to implement the auto-regressive moving average (ARMA) filter. If we remove the non-linear activation function, the ARMA layer will be:
\begin{equation}
\label{ARMA}
    \mathbf{z} = \frac{1}{B}\sum_{b=1}^{B}\bar{\mathbf{x}}_b^{(K)},  
    \bar{\mathbf{x}}_b^{(K)} = \mathbf{M}^K{\mathbf{x}}\mathbf{W}_b^K + \sum_{k=0}^{K}\mathbf{M}^k{\mathbf{x}}(\mathbf{V}_b\mathbf{W}^k_b)
\end{equation}
where $\mathbf{M} = \mathbf{D}^{-1/}2\mathbf{A}\mathbf{D}^{-1/2}$, $\left\{\mathbf{W}_b\right\}_{b=1}^B$ and $\left\{\mathbf{V}_b\right\}_{b=1}^B$ are the trainable weight matrics. It can be seen that the second term is a $K$-order polynomial of $\mathbf{M}$ and the coefficient of the $k$-th order is integrated into the feature transformation matrix $\mathbf{V}_b\mathbf{W}^k_b$. Thus, ARMA is similar to ChebNet which can approximate arbitrary filters.

Recall the proposed NFGNN in Eq.~(\ref{eq::final}), it can be found that it is still within the framework of polynomial filtering as those models in Eqs.~(\ref{eq::chebyNet}),(\ref{gcn}),(\ref{APPNP}), and (\ref{ARMA}). Nevertheless, unlike their globally consistent filtering coefficients, the associated group of filtering coefficients with each node as in Eq.~(\ref{eq::final}) are explicitly node-oriented, which is the most essential difference between NFGNN and these methods.

\subsection{Property of spectral-based GNNs and Future work}
\textcolor{R2}{
Spectral-based graph neural networks typically estimate a globally consistent filter from the perspective of the whole graph~\cite{BernNet,chebyNet,ARMA} using polynomial-parameterized filter learning. 
Due to the local localization property of polynomial-parameterized filter learning, the global filter is equivalent to a trade-off filtering solution learned on the K-order neighborhood subgraphs centered at each node. It should be noted that there can be misalignment between the eigenvectors and eigenvalues of these subgraphs~\cite{stability}. Although the proposed NFGNN circumvents this potential problem of the global shared filter learning by providing a new form of the global and local perspective tradeoff in the spectral domain. 
This has still sparked thinking regarding the stability and transferability of spectral-based GNNs.}

\textcolor{R2}{To this end, ~\cite{stability} investigates the influence of changes in the underlying topology on the output of GNNs, and shows that spectral-based GNNs exhibit greater stability than linear graph filters when considering subgraphs with the same number of nodes. 
Additionally, the concept of graphon neural networks is introduced by~\cite{transferability} as limit objects of GNNs to prove the transferability of GNNs between different graphs sampled from the same graphon. 
Moreover~\cite{GraphDA} proposes spectral regularization that leverages the theory of optimal transport-based domain adaptation and graph signal processing to enhance the transferability of GNNs. }

\textcolor{R2}{Building upon the aforementioned research~\cite{stability,transferability,GraphDA}, conducting further analysis on the relationship between transferability and discriminability of local adaptive filtering in GNNs can strengthen the theoretical foundation and pave the way for more effective and robust graph representation learning. Besides, for spectral-based GNNs, the scalability of spectral convolution and inductive learning setting remain crucial issues to be addressed. Nonetheless, due to the significant transferability and theoretical characteristics it offer, spectral-based GNNs continue to be an area of interest and hold promise for future advancements in GNNs.}

\section{Conclusion}
\label{sect::conclusion}
In this paper, we initially conduct a comprehensive analysis of the local patterns in graph data and the aggregability of near-neighbors. Building upon these observations, we reconsider spectral-based GNNs and introduce NFGNN, a novel approach for node-oriented spectral filtering motivated by the generalized translated operator. In contrast to previous approaches that utilize a global filter, NFGNN applies local spectral filtering by employing filters translated to specific nodes, effectively addressing the challenge of local patterns. Through the re-parameterization strategy, the node-oriented filtering is implemented in a simple and efficient way. The experimental results on several real-world graph datasets verify that our NFGNN achieves more remarkable performance over currently available alternatives.

%




\ifCLASSOPTIONcaptionsoff
  \newpage
\fi



%
{
	\bibliographystyle{IEEEtran}
	\bibliography{TPAMI_2021.bib}
}
\end{document}